\documentclass{article}
\usepackage{microtype}
\usepackage{graphicx}
\usepackage{subfigure}
\usepackage{booktabs} %
\usepackage[ruled, vlined]{algorithm2e} %
\usepackage{hyperref}

\usepackage[accepted]{icml2025}

\usepackage{amsmath}
\usepackage{amssymb}
\usepackage{mathtools}
\usepackage{amsthm}

    \def\colorful{1}
    
    \ifnum\colorful=1

    \newcommand{\snote}[1]{\footnote{{\bf [Sushrut: {#1}\bf ] }}}
    \else

    \newcommand{\snote}[1]{}
    \fi
    
\newcommand{\alb}{\overline{\alpha}} %

\newcommand{\stept}{t} %
\newcommand{\Tstep}{T}

\newcommand{\E}{\mathbb{E}}

\newcommand{\B}{\mathbf}
\newcommand{\Var}{\mathrm{Var}}

\def\R{\mathbb R}
\def\C{\mathbb C}

\newcommand{\cL}{\mathcal{L}}

\newcommand{\cN}{\mathcal{N}}

\newcommand{\bA}{\mathbf{A}}
\newcommand{\bB}{\mathbf{B}}
\newcommand{\bC}{\mathbf{C}}

\newcommand{\bF}{\mathbf{F}}

\newcommand{\bI}{\mathbf{I}}

\newcommand{\bR}{\mathbf{R}}

\newcommand{\bx}{\mathbf{x}}

\newcommand{\by}{\mathbf{y}}
\newcommand{\bz}{\mathbf{z}}

\newcommand{\bs}{\mathbf{s}}

\newcommand{\bv}{\mathbf{v}}

\usepackage{graphicx} %
\usepackage{amsmath,amssymb}
\usepackage{xcolor}
\usepackage[utf8]{inputenc}
\bibliographystyle{plainnat}
\usepackage{subcaption} %
\usepackage{booktabs}
\usepackage{multirow}
\usepackage{animate}  
\usepackage{subcaption}  
\usepackage{graphicx}  

\newcommand{\var}{\mathrm{Var}}
\newcommand{\im}{\text{Im}}
\newcommand{\re}{\text{Re}}
\newcommand{\snr}{\mathsf{SNR}}
\newcommand{\Cov}{\mathsf{Cov}}
\newcommand{\beps}{\boldsymbol{\epsilon}}
\usepackage[capitalize,noabbrev]{cleveref}
\crefname{appendix}{App.}{Apps.} %
\crefname{section}{\S}{\S} %
\crefname{figure}{Fig.}{Figs.}

\theoremstyle{plain}

\newtheorem{proposition}{Proposition}
\newtheorem{lemma}{Lemma}

\newtheorem{fact}{Fact}
\linepenalty=1000

\theoremstyle{definition}
\newtheorem{definition}{Definition}

\theoremstyle{remark}

\newcommand{\ff}[1]{\textbf{\textcolor{blue}{}}}   %

\usepackage[textsize=tiny]{todonotes}
\usepackage{wrapfig}
\usepackage{enumitem}
\usepackage{rotating} %

\icmltitlerunning{A Fourier Space Perspective on Diffusion Models}

\begin{document}

\twocolumn[
\icmltitle{A Fourier Space Perspective on Diffusion Models}

\icmlsetsymbol{equal}{*}

\begin{icmlauthorlist}
\icmlauthor{Fabian Falck}{msr}
\icmlauthor{Teodora Pandeva}{msr}
\icmlauthor{Kiarash Zahirnia}{msr}
\icmlauthor{Rachel Lawrence}{msr} \\
\icmlauthor{Richard Turner}{msr}
\icmlauthor{Edward Meeds}{msr}
\icmlauthor{Javier Zazo}{msr}
\icmlauthor{Sushrut Karmalkar}{msr}
\end{icmlauthorlist}

\icmlaffiliation{msr}{Microsoft Research}

\icmlcorrespondingauthor{Fabian Falck}{fabian.falck@microsoft.com}
\icmlcorrespondingauthor{Sushrut Karmalkar}{skarmalkar@microsoft.com}

\icmlkeywords{Machine Learning, ICML}

\vskip 0.3in
]

\begin{abstract}

Diffusion models are state-of-the-art generative models on data modalities such as images, audio, proteins and materials. These modalities share the property of exponentially decaying variance and magnitude in the Fourier domain. Under the standard Denoising Diffusion Probabilistic Models (DDPM) forward process of additive white noise, this property results in high-frequency components being corrupted faster and earlier in terms of their Signal-to-Noise Ratio (SNR) than low-frequency ones. The reverse process then generates low-frequency information before high-frequency details. In this work, we study the inductive bias of the forward process of diffusion models in Fourier space. We theoretically analyse and empirically demonstrate that the faster noising of high-frequency components in DDPM results in violations of the normality assumption in the reverse process. Our experiments show that this leads to degraded generation quality of high-frequency components. We then study an alternate forward process in Fourier space which corrupts all frequencies at the same rate, removing the typical frequency hierarchy during generation, and demonstrate marked performance improvements on datasets where high frequencies are primary, while performing on par with DDPM on standard imaging benchmarks.

\end{abstract}

\printAffiliationsAndNotice{} 

\section{Introduction}
\label{sec:Introduction}

\begin{figure*}[t]
\centering
\begin{minipage}[b]{0.38\textwidth}
\begin{minipage}[b]{0.48\textwidth}
    \centering
    \includegraphics[width=1\textwidth]{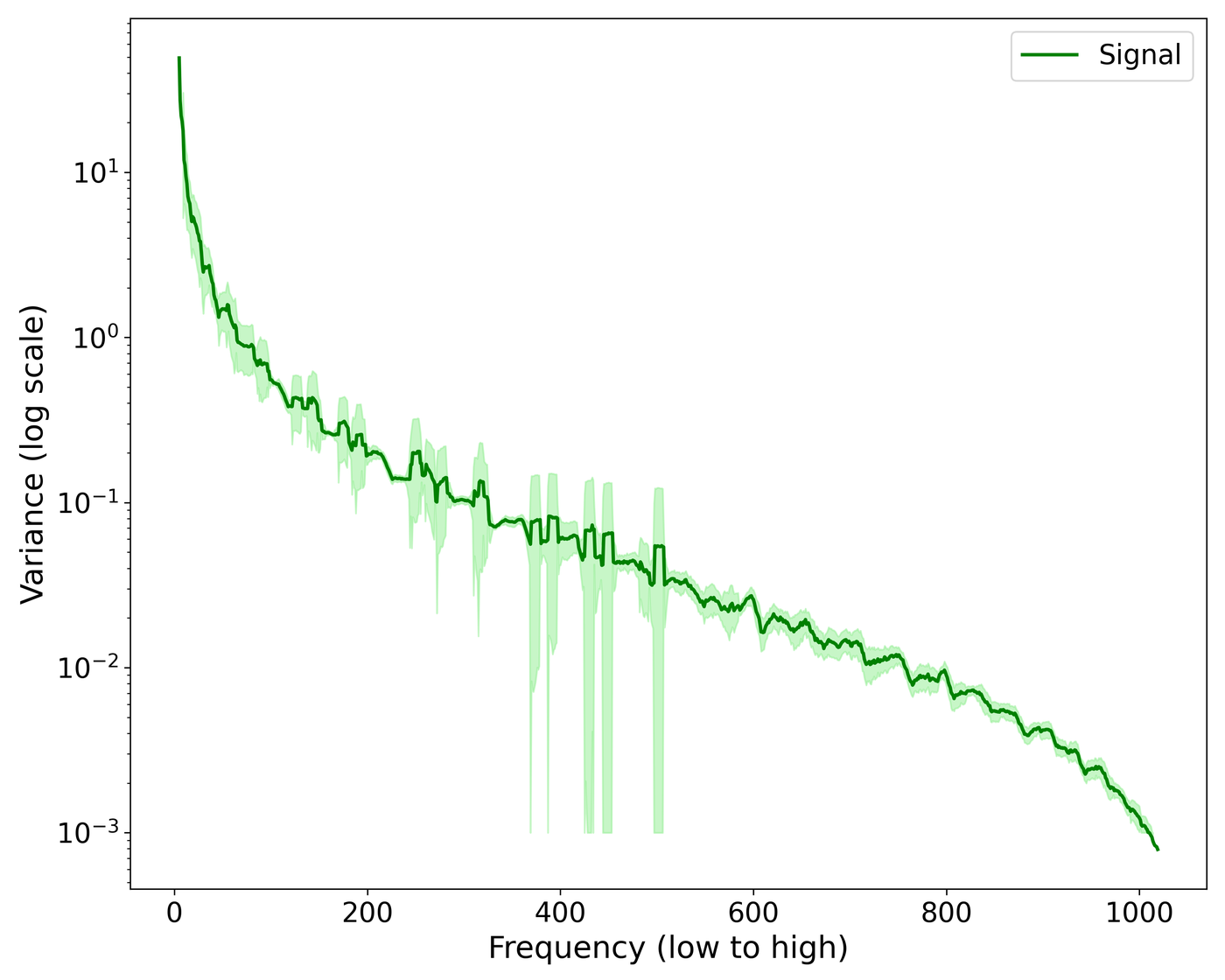}
\end{minipage}
\hfill
\begin{minipage}[b]{0.48\textwidth}
    \centering
    \includegraphics[width=1\textwidth]{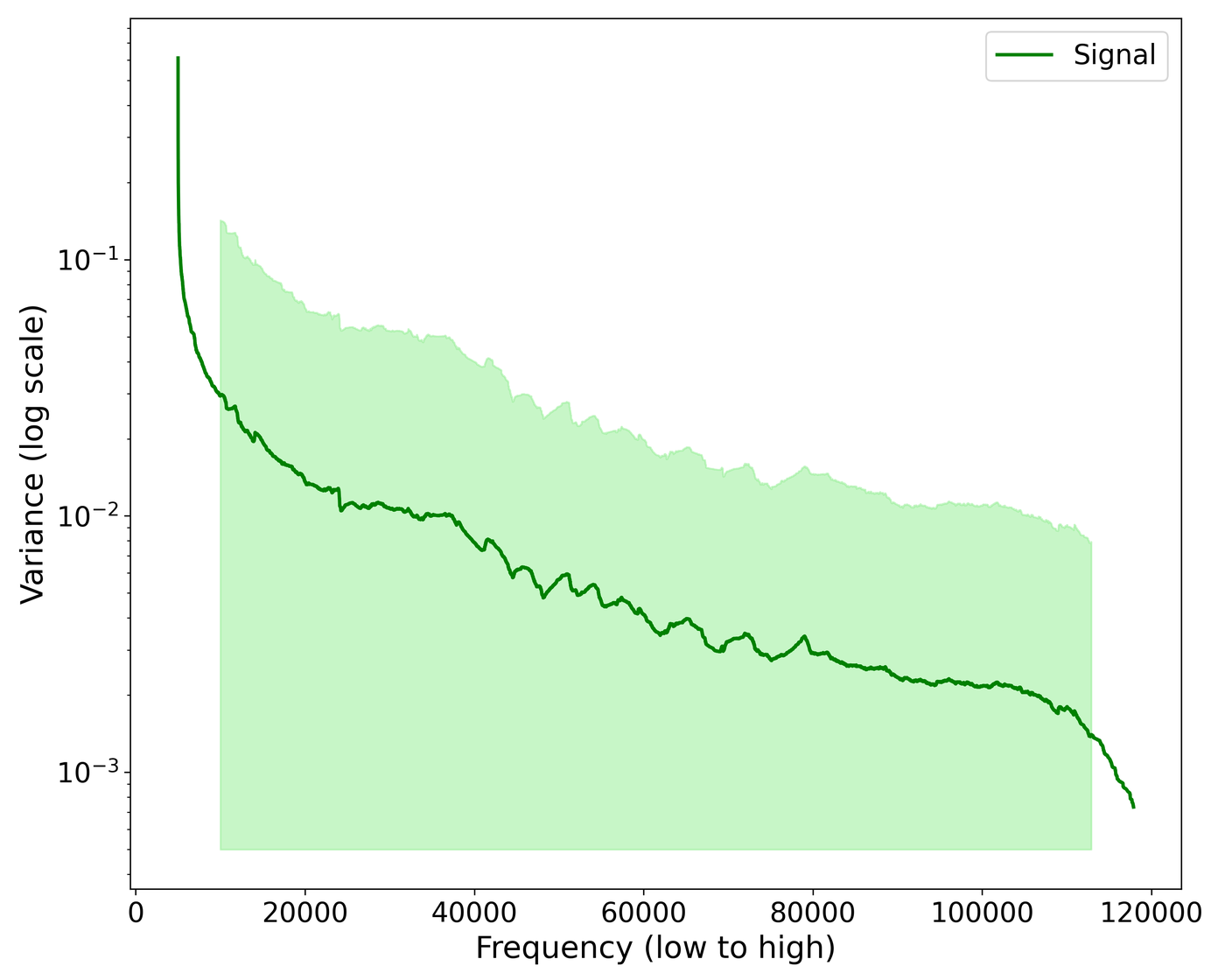}
\end{minipage}
\hfill
\begin{minipage}[b]{0.48\textwidth}
    \centering
    \includegraphics[width=1\textwidth]{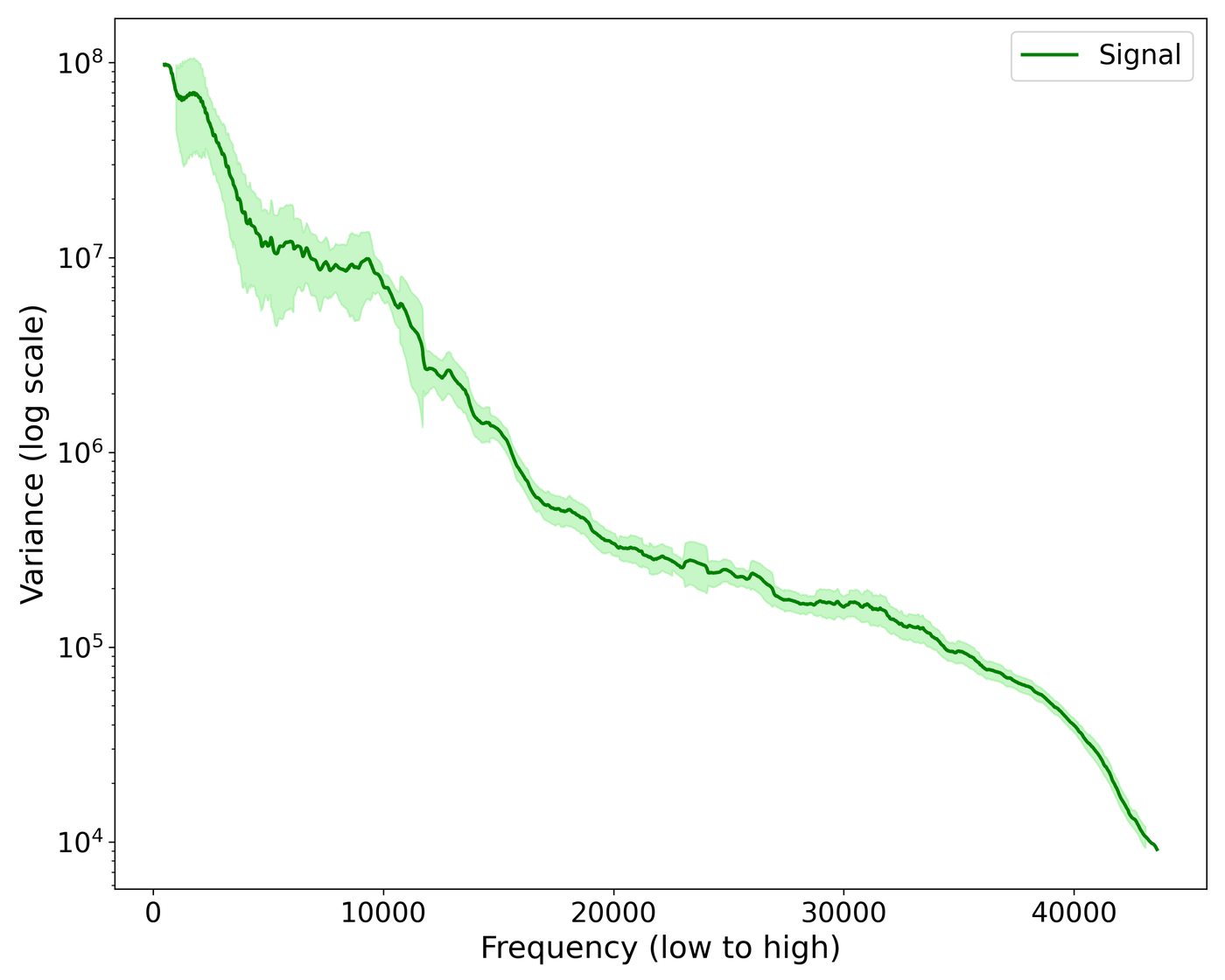}
\end{minipage}
\hfill
\begin{minipage}[b]{0.48\textwidth}
    \centering
    \includegraphics[width=1\textwidth]{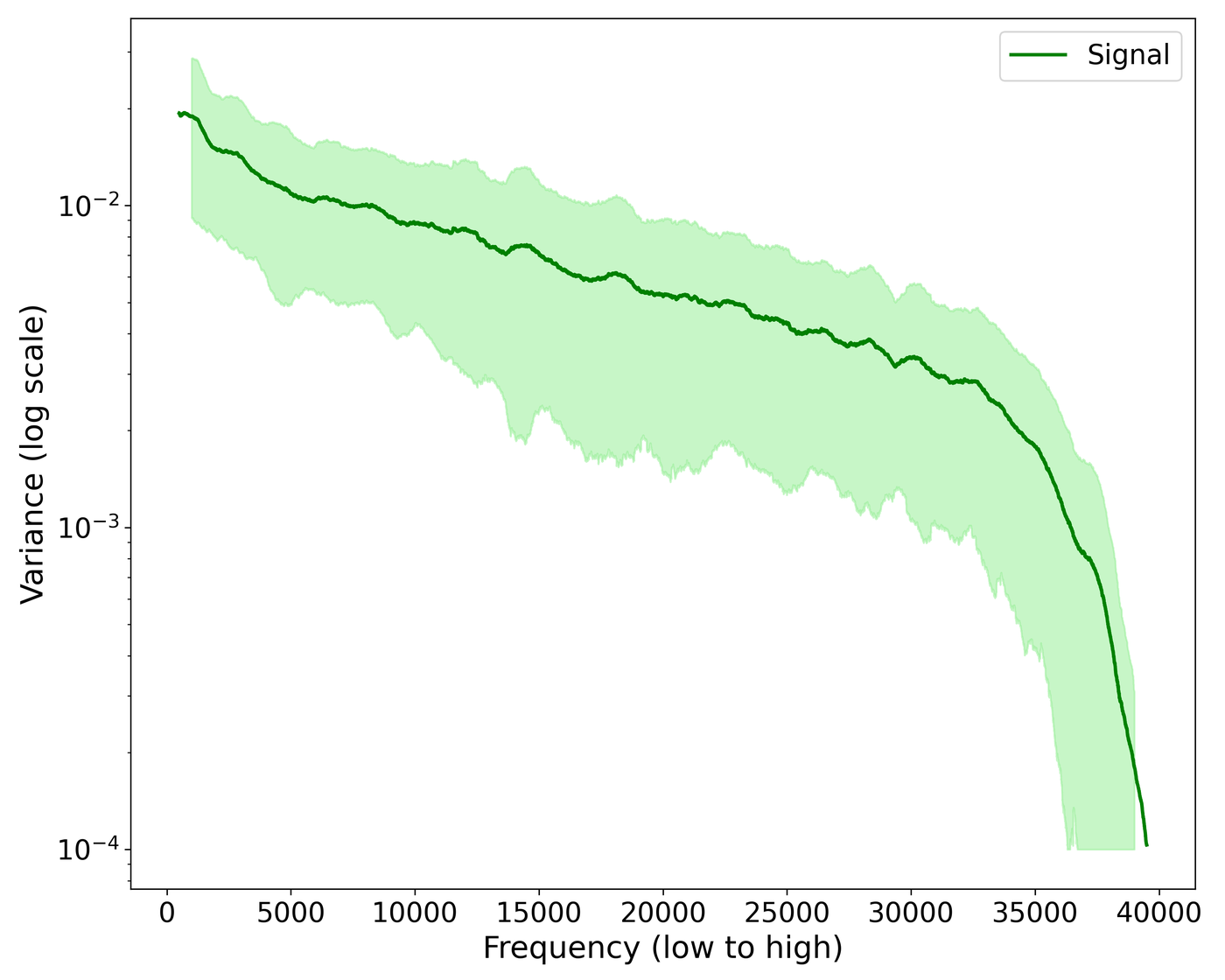}
\end{minipage}
\end{minipage}
\animategraphics[loop, autoplay, width=.3\textwidth]{2}{Figures/fig1_modality-images_cifar10_p_False/fig-}{01}{21}  
\animategraphics[loop, autoplay, width=.3\textwidth]{2}{Figures/fig1_modality-images_cifar10_K_False/fig-}{01}{21}
\caption{
[Left] \textit{The Fourier power law} observed in (top-left) images \cite{krizhevsky2009learning}, (top-right) videos \cite{kay2017kinetics}, (bottom-left) audio \cite{gtanz1999music}, and (bottom-right) Cryo-EM derived protein density maps \cite{wwpdb2024emdb}.  %
[Center] A DDPM forward process on these modalities noises high-frequency components substantially faster (SNR changes more per time increment), and earlier than low-frequency components. %
[Right] The alternate EqualSNR forward process noises all frequencies at the same rate, disrupting DDPM's generation hierarchy.
The GIFs are best viewed in Adobe Reader.
}
\label{fig:fig1}
\end{figure*}

Diffusion models are the state-of-the-art generative model on data modalities such as images \cite{rombach2022high,baldridge2024imagen}, videos \cite{videoworldsimulators2024,ho2022imagen,blattmann2023align}, proteins \cite{watson2023novo,lewis2024scalable}, and materials \cite{zeni2023mattergen}.  
They excel on a wide range of tasks and applications on these modalities, such as generating high-resolution images and videos given a text prompt \cite{rombach2022high}, sampling the distribution of conformational states of proteins \cite{lewis2024scalable}, or generating novel materials under property constraints \cite{zeni2023mattergen}.
\textit{Why do diffusion models work so well on these modalities}, possibly even surpassing the performance and efficiency of  autoregressive models?   %

In this work, we study this question via %
the forward (or noising) process of diffusion models in Fourier space. %
The forward process of standard diffusion models such as DDPM \cite{song2020denoising} corrupts data by progressively adding white Gaussian noise until all information is destroyed.\footnote{White noise has equal variance across all frequency components.}   %
Diffusion models then learn a denoiser which reverses this forward process by starting from Gaussian noise and iteratively refining it to approximate the original data \cite{song2020score,sohl2015deep}.

While one might assume that the forward process destroys all information in data %
uniformly, this is not the case.    %
In fact, for the modalities mentioned above, a DDPM forward process does \textit{not} treat all frequency components equally. 
These modalities have in common that they exhibit a \textit{power law} in their Fourier representation: 
low-frequency components have orders of magnitudes higher variances (and magnitudes) than high-frequency components (see \Cref{fig:fig1} [left], \cite{van1996modelling}).
This data property has two important implications:  %
The DDPM forward process noises high-frequency components both substantially earlier \cite{rissanen2022generative}, and faster than low-frequency components (see \Cref{fig:fig1} [centre]), which we will discuss in \Cref{sec:ddpm} and theoretically characterise with the \textit{Signal-to-Noise Ratio (SNR)} (see \Cref{defn:snr}).   
Intuitively speaking, the forward process in DDPM corrupts  
the high frequency information---the fine details such as edges---in fewer timesteps
than low-frequency features, such as the larger structures and overall colour of an image
(see \Cref{fig:fwd-bwd-high-low} for an illustration).
What is the inductive bias of this forward process on the learned reverse process?

In theory (i.e. with unlimited resources and arbitrarily accurate estimates of the scores of the noised distributions), diffusion models can learn to express any continuous distribution (see for e.g. Theorem 2 in \cite{chen2023sampling}).
In practice, however, diffusion models are constrained, for instance by a limited number of intermediate steps (discretisation) and the expressiveness of the neural network (score estimation), resulting in approximation errors.
Since DDPM applies noise more aggressively to high-frequency components, corrupting them in fewer steps, we hypothesise that their approximation error is larger, resulting in a lower generation quality.   %
As a result, DDPM prioritises low-frequency components within its resource constraints.   %

The forward process also imposes a hierarchy of the frequencies during generation: 
as the generative process learns to reverse the forward process (which in DDPM and on the modalities of interest noises high frequencies before low frequencies), the reverse process generates low frequencies first, and generates high frequencies conditional on low frequencies (see \Cref{fig:fwd-bwd-high-low} for an illustration).
Previous work has observed this phenomenon of a soft-conditioning or ``approximately autoregressive'' generation in DDPM diffusion models \cite{dieleman2024spectral,gerdes2024gud}, drawing an important parallel with Large Language Models (LLMs).
However, in spite of this `hidden structure' in diffusion models, we lack understanding of the degree to which it is essential.  %
Can diffusion models also generate all frequencies at the same rate, i.e. without any hierarchy (see \Cref{fig:fig1} [right]), and how does this perform?
Can we also generate high-frequency details first, then low frequencies conditional on them (see \Cref{fig:flipped-snr-gif} in \Cref{app:Additional experimental details and results})? 

In this work, we study the forward process of diffusion models on data exhibiting the Fourier power law and its effect on the learned reverse process in Fourier space.
We theoretically and empirically analyse the impact of noising rate and hierarchy on the assumptions of the reverse process, and in turn on generation quality in diffusion models.
We organise our \textit{contributions} as follows: %
\begin{itemize}[itemsep=7pt, topsep=2pt, parsep=2pt, partopsep=2pt, leftmargin=7pt]  %
\item In \Cref{sec:ddpm}, we formally describe the DDPM forward process and define the SNR. We observe that this forward process results in a non-uniform rate of noising across frequencies: specifically, high-frequency components are noised faster (in the sense that the SNR descreases more per time increment) and earlier (in the sense that the SNR of high-frequency components is smaller than the SNR of low-frequency components at all timesteps). 
\item  In \Cref{sec:technical_explanation}, we theoretically explain and empirically demonstrate that faster noising as measured by SNR of high-frequency components violates the Gaussian assumption in the backward process, leading to worse generation quality for these frequencies.
We then propose a framework for %
forward processes based on frequency-specific SNR, including a training and sampling algorithm, a loss based on the Evidence Lower Bound (ELBO), and a mutual calibration procedure. In particular, we explore a variance-preserving forward process which noises all frequencies at the same rate~(EqualSNR), disrupting the usual generation hierarchy in DDPM. 
\item In \Cref{sec:experiments}, we demonstrate that EqualSNR performs on-par with DDPM on standard image benchmarks. Additionally, it improves the generation quality of high frequencies, and consequently the performance of diffusion models on datasets where high frequencies are dominant. 
In particular, we show that for DDPM, the high-frequency components of generated images can be distinguished `by eye' from the high-frequency components of the original data. In fact, a simple logistic regression classifier trained on two summary statistics of high-frequency components can discriminate DDPM-generated images from real images, while the same classifier performs poorly for EqualSNR.
This has important implications for generative modelling tasks where high-frequency details are of core interest, such as in astronomy or medical imaging, and for the generation of more realistic DeepFake data.
\end{itemize}

\begin{figure*}[!ht]
    \centering
    \begin{minipage}[t]{0.49\textwidth}
        \hspace{-0.5em}
        \includegraphics[width=1.03\textwidth]{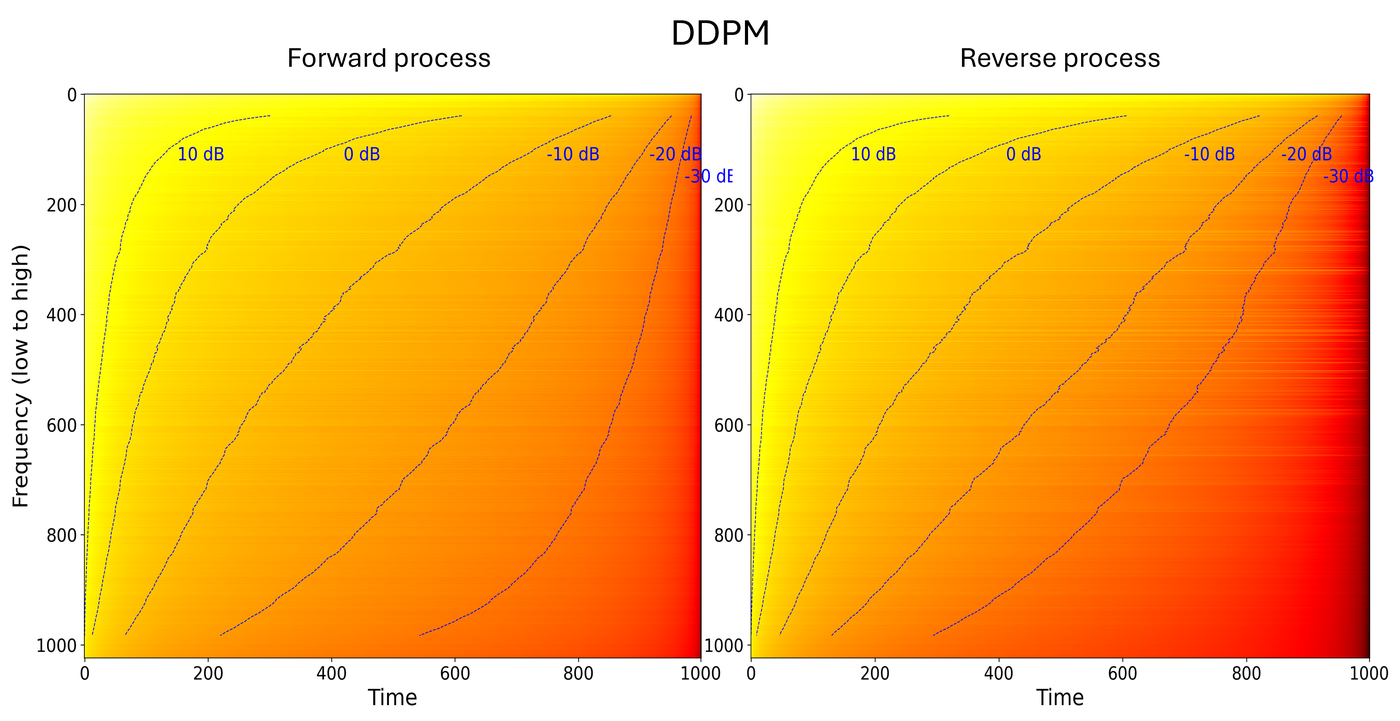}
        \caption*{DDPM: High frequencies are corrupted substantially faster (SNR changes more per time increment) than low frequencies. 
        }
        \label{fig:ddpm}
    \end{minipage}%
    \hfill
    \begin{minipage}[t]{0.49\textwidth}
        \centering
        \includegraphics[width=1.1\textwidth]{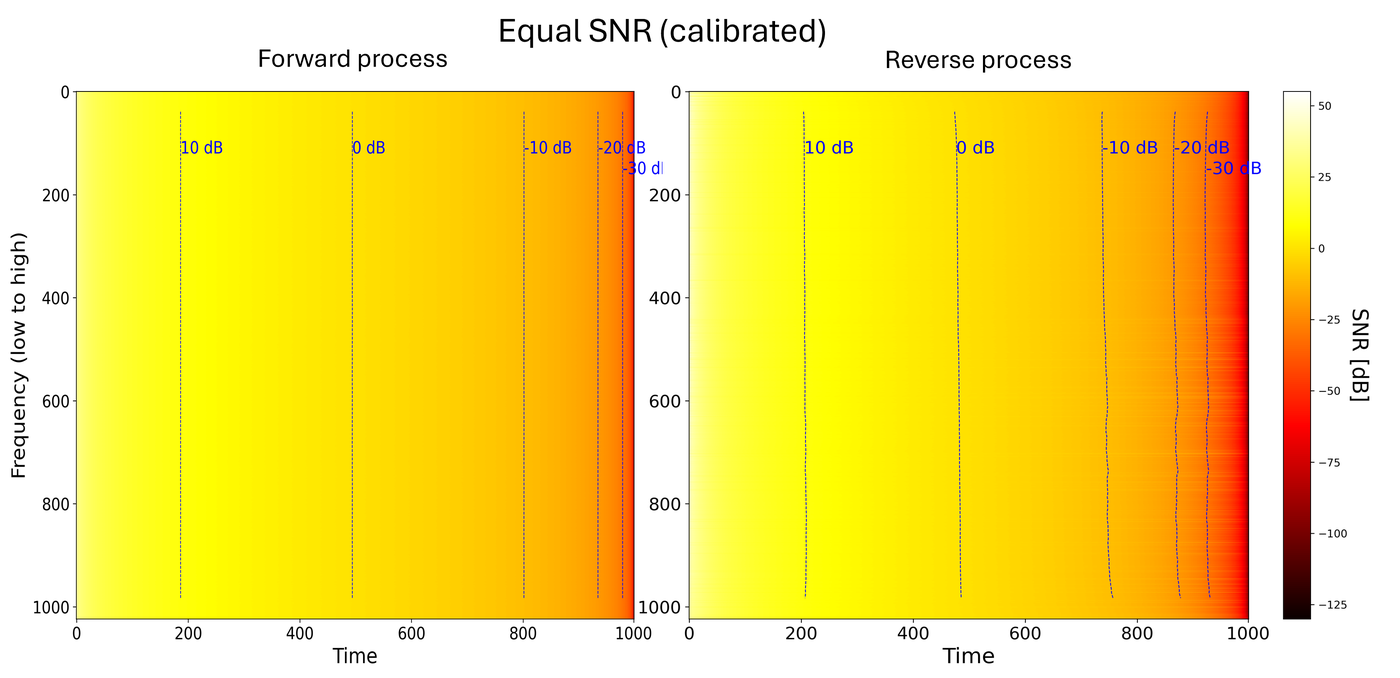}
        \caption*{EqualSNR: All frequencies are corrupted equally fast, achieving the same SNR of all frequencies at each timestep.}
        \label{fig:equal-snr}
    \end{minipage}%
    \caption{
         \textit{Comparing the SNR (dB scale) for DDPM and the alternate EqualSNR in Fourier space.} 
        In the forward process, SNR is computed as a Monte Carlo estimate of Eq. \eqref{eq:snr_definition} on CIFAR10  (referring to \Cref{app:Additional experimental details and results} for details on the reverse process).
        }
        
    \label{fig:snr_heatmaps}
\end{figure*}

\section{Background: a spectral analysis of DDPM}
\label{sec:ddpm}

In this section we recall the DDPM forward process and analyse how it affects the frequencies of a signal, showing that high-frequencies are corrupted faster, and substantially earlier than low-frequency ones. 
This effect can be quantified in terms of the Signal-to-Noise Ratio (SNR).
All proofs, further theoretical results, and an extended exposition of our theoretical framework are deferred to \Cref{app:Extended theoretical framework and proofs}.

\paragraph{The DDPM forward process. }

Given a data point $\bx_0 \in \mathbb{R}^d$ drawn from the data distribution whose density is given by $p(\bx_0)$, the DDPM forward process \cite{ho2020denoising} generates a sequence of latent variables $\{\bx_t\}_{t=0}^T$ which satisfy the following transitions:
\begin{equation}
q(\bx_t | \bx_{t-1}) = \mathcal{N}\left( \bx_t; \sqrt{1 - \alpha_t} \bx_{t-1}, \alpha_t \mathbf{I} \right),
\label{eq:ddpm_fwd_iterate}
\end{equation}
where $\alpha_t$ controls the amount of (white) noise added at each timestep $t$, which is equal for all data dimensions. 
The marginal distribution of $\bx_t$ given $\bx_0$ is then obtained in closed form: 
\begin{equation}
    q(\bx_t | \bx_0) = \mathcal{N}\left( \bx_t; \sqrt{\overline{\alpha}_t} \bx_0, (1 - \overline{\alpha}_t) \mathbf{I} \right),
    \label{eq:ddpm_yt}
\end{equation}
with $\overline{\alpha}_t = \prod_{s=1}^t (1 - \alpha_s)$. This can be reparameterised as
\begin{equation}
\bx_t = \underbrace{\sqrt{\overline{\alpha}_t} \bx_0 }_{\text{signal}} + \underbrace{\sqrt{1 - \overline{\alpha}_t} \boldsymbol{\epsilon}}_{\text{noise}}, \quad \boldsymbol{\epsilon} \sim \mathcal{N}(0, \mathbf{I}),
\label{eq:ddpm_pixel_reparam}
\end{equation}
where the first term on the right-hand side is the (scaled) signal, the second term is the (scaled) noise.

\paragraph{DDPM corrupts high-frequencies faster and earlier.} 

In this paper, we focus on data modalities where diffusion models achieve state-of-the-art performance, such as images, video, proteins, and materials.   %
These modalities share the property that---when viewed in their Fourier representation---their signal variance (and magnitude) decays with frequency by orders of magnitude (see Fig. \ref{fig:fig1} [left]).
We call this data property the \textit{Fourier power law} henceforth, which has for the example of images been previously noted in several works \cite{van1996modelling,hyvarinen2009natural,rissanen2022generative}. 
This property has two implications that we study in this work: fine details (high frequencies) are corrupted 1) faster, and 2) before larger structures (low frequencies) during the forward process \cite{kingma2024understanding}.

To see this, we can view the DDPM forward process equivalently under a change of basis to the Fourier space, a basis which has been of interest in the diffusion community \cite{dieleman2024spectral,gerdes2024gud} (see \Cref{sec:Related work} for a discussion). 
This is accomplished by applying the Fourier transform $\mathbf{F}$ to the latent variables $\mathbf{x}_t$ as:  %
\begin{align}
    \by_t := \bF \bx_t &= \bF(\sqrt{\overline{\alpha}_t} \bx_0) + \bF(\sqrt{1 - \overline{\alpha}_t} \beps) \nonumber\\
    &= \underbrace{\sqrt{\overline{\alpha}_t} \bF \bx_0}_{\text{signal } \mathbf{s}} + \underbrace{\sqrt{1 - \overline{\alpha}_t} \bF \beps}_{\text{noise } \mathbf{n}}  \label{eq:ddpm_fourier}
\end{align}
by linearity, and 
$\mathbf{n} \sim \mathcal{CN}(\mathbf{0},\bF (1 - \overline{\alpha}_t) \mathbf{I} \bF^\dagger) 
= \mathcal{CN}(\mathbf{0}, (1 - \overline{\alpha}_t) \mathbf{I})$, 
where $\bF^\dagger$ is the adjoint of $\bF$ and noting that $\by_t$ is complex-valued (see \Cref{app:Extended theoretical framework and proofs} for details on this calculation).   %
Since $\mathbf{F}$ is invertible, there is a one-to-one correspondence between a DDPM forward process in Euclidean (or pixel) space in \Cref{eq:ddpm_pixel_reparam} and in Fourier space in \Cref{eq:ddpm_fourier}, rendering these equivalent, alternative viewpoints.

\paragraph{Signal-to-Noise Ratio. }
We can quantify the corruption of the signal with the \textit{Signal-to-Noise Ratio (SNR)}.

\begin{definition}[Signal-to-Noise Ratio]
\label{defn:snr}
    Let $s$ and $n$ be two random variables with realisations in $\C$, 
    and $f(s, n) = s + n$ be a measurement process. 
    The Signal-to-Noise Ratio (SNR) is defined as
    \begin{equation}
        \mathsf{SNR}(f) = \frac{\mathrm{Var}[s]}{\mathrm{Var}[n]},
        \label{eq:snr_definition}
    \end{equation}
    where $\var(s) := \var(\re(s)) + \var(\im(s))$ (and likewise for $n$).
    For $d$-dimensional random vectors $\mathbf{s}, \boldsymbol{n}$, 
    we abuse notation and define $\mathsf{SNR}(\mathbf{f})$ entry-wise: 
    $\mathsf{SNR}(\mathbf{f})_i = \mathsf{SNR}(f_i)$.
\end{definition}
We can now compute the SNR of $(\mathbf{y}_t)_i$ in \Cref{eq:ddpm_fourier}, i.e. the SNR of frequency $i$ at timestep $t$ as  %
\begin{equation}
    s_t^{\text{DDPM}}(i) 
    := \mathsf{SNR}((\mathbf{y}_t)_i) 
    = \frac{\overline{\alpha}_t \mathbf{C}_{i}}{1 - \overline{\alpha}_t},
    \label{eq:snr_freq}
\end{equation}
where $\mathbf{C}_{i} := \var((\by_0)_i)$ represents the signal variance of frequency $i$ which decays rapidly with frequency for our data modalities of interest.  %
\Cref{eq:snr_freq} lets us formalise the two implications of the Fourier power law property: 
The SNR of low frequencies is orders of magnitudes higher than the SNR of high frequencies at all timesteps under a DDPM forward process (see red line in Fig. \ref{fig:fig1} [middle] for an animation) \cite{kingma2024understanding, dieleman2024spectral}.
Informally speaking, a DDPM forward process does \textit{not} `noise all frequencies equally'.
Relative to the signal, the white noise of the DDPM forward process, which is equal for all frequencies, corrupts high-frequency information faster, i.e. in fewer timesteps.   %
The SNR of high-frequency components changes more rapidly per time increment compared to low-frequency components. 

It further imposes a hierarchy onto the forward process where high frequencies attain a low SNR much earlier than low frequencies. 
As the generative process of diffusion models reverses the forward process, low-frequencies are generated earlier than high-frequencies, which can be viewed as being generated conditional on the former (see \Cref{fig:fwd-bwd-high-low} for an illustration).
In this work, we analyse the effect of the faster and earlier noising of high frequency information in DDPM on the learned reverse process and its generative performance.
In preview of our experimental results, we will demonstrate that high-frequencies generated with DDPM are of poor quality, and show that an alternate forward process (EqualSNR) alleviates this issue.
This has important implications for applications where high-frequency details are the key modelling objective, such as astronomy or medical imaging, and DeepFake technology.

\section{Fast noising disrupts high-frequency generation: alternate forward processes}  %
\label{sec:technical_explanation}

\begin{figure*}[ht]
    \centering
    \includegraphics[width=.7\linewidth]{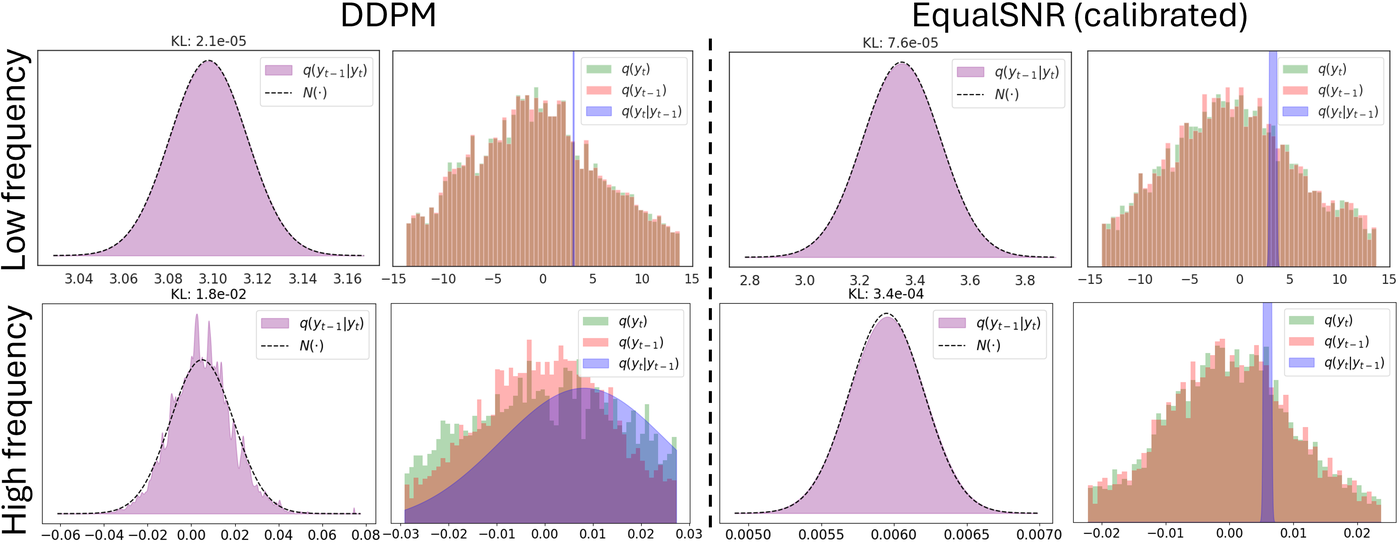}
    \caption{Fast noising of high frequencies leads to violations of normality in the DDPM reverse process. We plot Monte Carlo estimates of $q(\by_t) = \mathbb{E}_{\by_0 \sim q(\by_0)} q(\by_t | \by_0)$ (and similarly for $q(\by_{t-1})$) as histograms. %
    }
    \label{fig:gauss_assumption}
\end{figure*}

In this section, we theoretically and empirically investigate the consequence of our observation in \Cref{sec:ddpm} that high frequencies are noised faster than low frequencies in DDPM. 
Specifically, we will show that under limited resources, the Gaussian assumption of the reverse process is violated for high-frequency components. 
We then propose an alternate forward process in Fourier space and corresponding training and sampling algorithm which alleviates the issue.
In \Cref{sec:experiments}, we will experimentally investigate the effect of this violation on generation quality.  %

\subsection{Faster noising of high frequencies violates the Gaussian assumption in their reverse process. }
\label{sec:Aggressive noising of high frequencies violates the Gaussian assumption in their reverse process}

The stochastic reverse process of DDPM, which we consider first, iteratively denoises a (Fourier-transformed) iterate $\by_t$ by sampling from the learned reverse process distribution $p_\theta(\by_{t-1}|\by_t)$ which approximates the intractable distribution $q(\by_{t-1} \mid \by_t)$. %
A key assumption of the DDPM reverse process is that the distribution $q(\by_{t-1} | \by_t)$ is Gaussian, leading to the design choice that $p_\theta(\by_{t-1}|\by_t)$ is Gaussian.
In the limit of the total number of timesteps $T$ tending to infinity, $q(\by_{t-1} | \by_t)$ is known to converge to a Gaussian \cite{Feller1954DiffusionPI}.   %
However, in the practical setting of having a finite number of discretisation steps, this assumption only holds if the Gaussian noise added in the forward process $q(\by_{t} | \by_{t-1})$ is small enough relative to the signal variance.  %
Under a DDPM forward process on data modalities with the Fourier power law, which adds noise of equal variance to all frequencies but has orders of magnitudes smaller signal variance in the high frequencies, i.e. a faster noising of the high-frequency components (see \Cref{sec:ddpm}), this may lead to significant violations of the Gaussian assumption in high-frequency components.

To see this formally, we first apply Bayes rule:
\begin{equation}
\label{eq:fwd_bayes}
q(\by_{t-1} | \by_t) = \frac{q(\by_t | \by_{t-1}) q(\by_{t-1})}{q(\by_t)}.
\end{equation}
We note that if the Gaussian distribution $q(\by_t | \by_{t-1})$ has a large variance relative to $q(\by_{t-1})$ and $q(\by_t)$, which is the case for high frequencies in DDPM (see \Cref{sec:ddpm}), fluctuations in the quantity $\frac{q(\by_{t-1})}{q(\by_t)}$ are more apparent in the distribution $q(\by_{t-1}|\by_t)$. %
We formalise this intuition in \Cref{thm:backward_step}. We state the informal version here, with the formal version shown in \Cref{app:counterexample}.

\begin{proposition}[(Informal) Counterexample to normality of $q(\by_{t-1} | \by_{t})$]
\label{thm:backward_step}
There is a choice of sufficiently small positive constants $\delta, \tau$ such that the following holds. Let
\(D_{0} = \tfrac{1}{2}\,\mathcal{N}(-1,\,\delta^2) + \tfrac{1}{2}\,\mathcal{N}(1,\,\delta^2)\); 
and \(\mathbf{x}_{t-1} \sim D_{0}\) and let \(\boldsymbol{\varepsilon}\sim \mathcal{N}(0,4)\).
Then for the forward update, 
\(\mathbf{x}_t = \mathbf{x}_{t-1} + \boldsymbol{\varepsilon}\).
The corresponding reverse distribution 
\(q(\mathbf{x}_{t-1}| \mathbf{x}_t)\)
is at least a constant away in total variation distance from any Gaussian. \end{proposition}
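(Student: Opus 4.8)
The plan is to write $q(\mathbf{x}_{t-1}\mid\mathbf{x}_t)$ in closed form by Bayes' rule, evaluate it at a symmetric value of $\mathbf{x}_t$ where it collapses to a balanced mixture of two narrow, well-separated Gaussians, and then isolate and prove the real content: such a mixture is bounded away in total variation from \emph{every} Gaussian. For Step~1, using $q(x_t\mid x_{t-1})=\mathcal{N}(x_t;x_{t-1},4)$ and $q(x_{t-1})=D_0$, Gaussian conjugacy gives, as a function of $x_{t-1}$,
\[
q(x_{t-1}\mid x_t)=\pi_-(x_t)\,\mathcal{N}\!\big(x_{t-1};m_-(x_t),v\big)+\pi_+(x_t)\,\mathcal{N}\!\big(x_{t-1};m_+(x_t),v\big),
\]
with $v=\tfrac{4\delta^2}{4+\delta^2}\le\delta^2$, $m_\pm(x_t)=\tfrac{\pm4+\delta^2 x_t}{4+\delta^2}$, and weights $\pi_\pm(x_t)\propto\mathcal{N}(x_t;\pm1,\,4+\delta^2)$. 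At $x_t=0$, symmetry forces $\pi_\pm=\tfrac12$, $m_\pm=\pm m$ with $m=\tfrac{4}{4+\delta^2}\ge\tfrac45$, and variance $v\le\delta^2$. For $|x_t|$ below a small absolute constant $\tau$ the weights stay in $[\tfrac13,\tfrac23]$, so the conclusion will in fact hold for $\mathbf{x}_t$ in a set of positive $q(\mathbf{x}_t)$-probability, not merely at the single point $x_t=0$ — this is presumably the role of $\tau$ in the formal statement.

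For Step~2 I would prove the lemma: if $P:=\tfrac12\mathcal{N}(-m,v)+\tfrac12\mathcal{N}(m,v)$ with $m\ge\tfrac45$ and $v\le\delta^2$ for $\delta$ a small enough absolute constant, then $\mathrm{TV}(P,G)\ge\tfrac1{10}$ for every Gaussian $G$. The argument is ``a log-concave density cannot skip a gap''. Fix the three disjoint intervals $J_-=[-m-\tfrac15,-m+\tfrac15]$, $J_0=[-\tfrac15,\tfrac15]$, $J_+=[m-\tfrac15,m+\tfrac15]$ (disjointness uses $m\ge\tfrac45$). Gaussian tail bounds give $P(J_\pm)\ge\tfrac12\big(1-2e^{-c/\delta^2}\big)\ge0.49$ and $P(J_0)\le e^{-c'/\delta^2}\le10^{-3}$ for $\delta$ small. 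Now suppose $\mathrm{TV}(P,G)<\tfrac1{10}$ for some Gaussian $G$ with density $g$; then $G(J_\pm)\ge0.39$ and $G(J_0)\le0.11$. Since $|J_\pm|=\tfrac25$, there exist $a^\star\in J_-$ and $b^\star\in J_+$ with $g(a^\star),g(b^\star)\ge 0.39/\tfrac25\ge0.9$. Because $J_0\subseteq[a^\star,b^\star]$ and $\log g$ is concave, $g\ge\min\{g(a^\star),g(b^\star)\}\ge0.9$ on all of $J_0$, hence $G(J_0)\ge0.9\cdot\tfrac25=0.36>0.11$, a contradiction. So every Gaussian is at TV distance $\ge\tfrac1{10}$ from $P$, and by Step~1 (at $x_t=0$) the same holds for $q(\mathbf{x}_{t-1}\mid\mathbf{x}_t)$.

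The routine parts are Step~1 (conjugate-Gaussian algebra) and the tail estimates in Step~2; the genuine obstacle is the lemma, specifically making the ``\emph{for every} Gaussian'' quantifier quantitative and parameter-free. One could instead argue via moments — $P$ is strongly platykurtic, with kurtosis tending to $1$ rather than $3$ — but converting a moment gap into a TV lower bound needs extra tail control, which the log-concavity/gap argument sidesteps while also yielding an explicit constant. A secondary point to handle carefully is the reading of a statement about the \emph{conditional} law $q(\mathbf{x}_{t-1}\mid\mathbf{x}_t)$: it should be interpreted as holding for all $\mathbf{x}_t$ in an $O(\tau)$-neighbourhood of the origin, which is an event of positive probability under $q(\mathbf{x}_t)$, rather than at a measure-zero point.
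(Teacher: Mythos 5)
Your proof is correct, but it takes a genuinely different route from the paper's. You use exact Gaussian conjugacy to write $q(x_{t-1}\mid x_t)$ as an explicit two-component mixture with shrunken means $\pm\tfrac{4}{4+\delta^2}$ and tiny variance, restrict to near-symmetric conditioning values $x_t$ (where the weights stay balanced), and then prove a clean, parameter-free lemma — a log-concave (hence ``gap-free'') Gaussian density cannot put mass $\approx\tfrac12$ on each of two separated short intervals while putting almost nothing in between — yielding an explicit TV lower bound of $\tfrac1{10}$. The paper instead keeps the posterior in the un-normalised form $\propto\exp(-(y-x)^2/8)\,D_0(x)$, shows that for all but a $\tau$-fraction of $y$ the reweighting factor is bounded below by $\Theta(\tau^{18})$ (so both modes survive, possibly very unequally), and then rules out any Gaussian by a case split on its variance: too narrow to cover the second mode, or too flat to concentrate near either. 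The trade-off is real: your argument buys a large explicit constant and avoids the variance case analysis, but only for $x_t$ in a constant-size neighbourhood of the symmetry point (a positive-probability event, which indeed suffices for the informal statement you were given); the paper's argument buys coverage of a $1-\tau$ fraction of $x_t$ draws at the price of the much weaker bound $\Omega(\tau^{18})$, since for typical $y$ the mixture is badly unbalanced and the attainable TV gap is limited by the smaller component's weight. If you wanted to recover the paper's formal appendix version, you would need to extend your lemma to arbitrarily unbalanced weights (where the constant necessarily degrades with the smaller weight), essentially reintroducing the paper's narrow/flat dichotomy; also note that in the formal statement $\tau$ is the failure probability over $x_t$ (and governs the exponent in $\Omega(\tau^{18})$), not the size of the neighbourhood of $x_t=0$ as you surmised.
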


\Cref{thm:backward_step} provides a counterexample to the assumption that the reverse process can be arbitrarily well-approximated by a Gaussian. 
It shows that starting with a mixture of two sufficiently separated Gaussians, if we add sufficiently large variance noise, then the distribution $q(\by_{t-1}|\by_t)$ is a constant away from any Gaussian (in fact, it also looks like a mixture of two Gaussians).
These deviations lead to errors in the reverse process which can accmulate across time \cite{li2023error}.
For instance, in the case of CIFAR10 data, this happens for high frequencies since the variance of the noise added to the high frequency components is much higher relative to the variance of the data.  %
For low frequency components on the other hand, since the variance of the data is large, this phenomenon does not occur.

In \Cref{fig:gauss_assumption} [left] we verify our theoretical analysis empirically for DDPM on CIFAR10. 
We estimate $q(\by_{t-1}|\by_t)$ (purple) via Bayes rule through \Cref{eq:fwd_bayes} for a [top] low frequency and [bottom] high frequency, focusing on the real component, one image channel and time step $t=1$ (see \Cref{app:Additional experimental details and results} Figs. \ref{fig:gauss_app_start} to \ref{fig:gauss_app_end} for further plots).
We observe that in DDPM, relative to the variance of the noisy signal $q(\by_t)$ and $q(\by_{t-1})$, the Gaussian $q(\by_t|\by_{t-1})$ has low variance for the low frequency components (as can be seen by the sharp peak), and a significantly larger variance for high frequencies.  
This results in violations of the Gaussian assumption for the posterior $q(\by_{t-1}|\by)$ of high frequencies, while low frequencies approximately follow a Gaussian distribution.
We can quantify this in terms of the KL-divergence, observing that it is orders of magnitudes higher for high-frequencies ($1.8 \times 10^{-2}$) than for low-frequencies ($3.1 \times 10^{-4}$).
Note that while the violations are small, resulting errors may accumulate across timesteps in the reverse process, adding up to a significant distortion in the final sample \cite{li2023error}.

\subsection{Alternate forward processes}

In the following and throughout our experiments in \Cref{sec:experiments} we study an alternate forward process which we call \textit{EqualSNR} where the SNR of all frequencies is the same at every timestep.  %
In \Cref{app:Extended theoretical framework and proofs} we also define \textit{FlippedSNR},  a forward process where the SNR of the $i^{\text{th}}$ frequency is the same as the SNR of the $(d-i)^{\text{th}}$, and which hence inverts the frequency hierarchy of DDPM during generation.
EqualSNR investigates two questions:
first, it noises all frequencies at the same rate. 
Recalling our analysis of the aggresiveness of noising in \Cref{sec:Aggressive noising of high frequencies violates the Gaussian assumption in their reverse process}, in ESNR we enforce the ratio of the variance of $q(\by_t|\by_{t-1})$ and the variance of $\frac{q(\by_t)}{q(\by_{t-1})}$ to be \textit{equal} for all frequencies at each timestep.
We can here compute the Monte Carlo estimate of $q(\by_t)$ and $q(\by_{t-1})$ via the push-forward in \Cref{eq:pushforward_fourier}.
Fluctuations in $\frac{q(\by_t)}{q(\by_{t-1})}$ hence affect all frequencies equally, and the Gaussian assumption of the reverse process in high-frequency components is no longer violated with similar distribution distances (see \Cref{fig:gauss_assumption} [right]; KL for low frequencies: $1.0 \times 10^{-4}$, compared to high frequencies: $1.3 \times 10^{-4}$), overcoming this issue of DDPM. %
Second, both EqualSNR and FlippedSNR address the question whether low-to-high frequency generation is essential in diffusion models: 
EqualSNR, which we will focus on in \Cref{sec:experiments}, generates samples without any hierarchy among the frequencies, while FlippedSNR inverts the hierarchy of DDPM, generating frequencies from high to low.
We now formalise this.

\begin{definition}[EqualSNR process in Fourier space]  
\label{prop:noise_variance}  
Let $\bC_i = \Var[(\by_0)_i]$ represent the coordinate-wise variance in Fourier space, and let $\boldsymbol{\epsilon} \sim \mathcal{C}\mathcal{N}(0, \Sigma)$. Suppose the forward process in Fourier space is given by 
$\by_t = \sqrt{\overline \alpha_t} \by_0 + \sqrt{1 - \overline \alpha_t} \boldsymbol{\epsilon}_\Sigma,$  
with the SNR at timestep $t$ and frequency $i$ defined as  
$s_t(i) = \frac{\overline \alpha_t \bC_i}{(1 - \overline \alpha_t) \Sigma_{ii}}$.  
This implies:  
\begin{enumerate}[noitemsep, topsep=0pt, parsep=0pt, partopsep=0pt, leftmargin=10pt]  
    \item \textbf{DDPM}: The forward process for DDPM has SNR  
    $s_t^{\text{DDPM}}(i) = \frac{\overline \alpha_t \bC_i}{(1 - \overline \alpha_t)}$.  

    \item \textbf{Equal SNR}: The forward process has equal SNR across all coordinates if and only if  
    $\Sigma_{ii} = c \bC_i$, where $c$ is a universal constant. The process is coordinate-wise `variance-preserving' (as in \cite{song2020score}) when $c = 1$.  
    \footnote{If we require that $\Sigma = c \Cov(\by_0)$, the equal SNR property holds across \emph{all bases} (see \Cref{sec:snr-high-d}).}  
\end{enumerate}  
\end{definition}

\begin{figure*}[ht!]
    \centering
    \includegraphics[width=.9\linewidth]{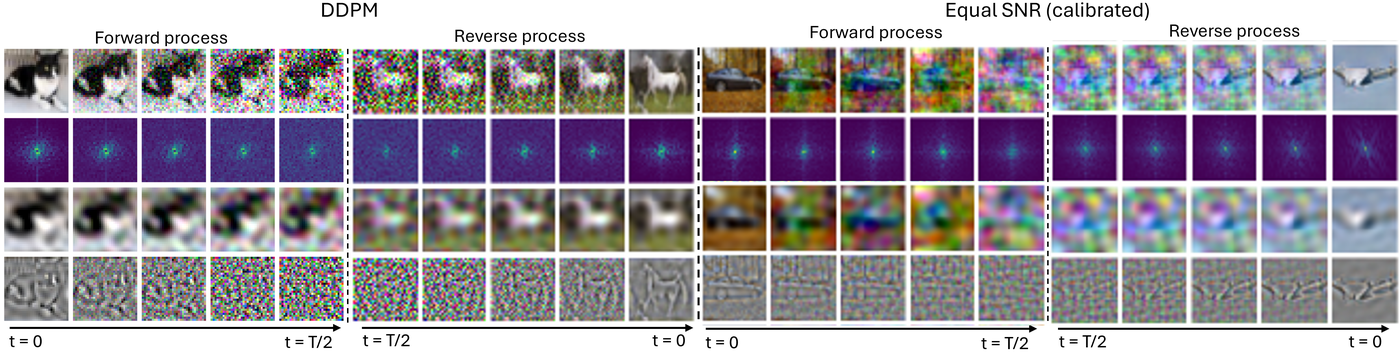}
    \caption{\textit{The forward process controls when frequencies are generated in the reverse process. }We visualise the forward  and backward process of [left] DDPM and [right] EqualSNR in pixel space (rows 1,3,4) and Fourier space (magnitudes; row 2).
    Rows 3 and 4 are low- and high-pass filters of the (noisy) image.
    DDPM noises high-frequency components first and hence generates them last, while EqualSNR noises and generates all frequencies at the same time.
    }
    \label{fig:fwd-bwd-high-low}
\end{figure*}

Note that we define the forward processes through their frequency-specific SNR $s_t(i)$ at time $t$, not the mixing coefficients (e.g. $\bar{\alpha}_t$), illustrated in \Cref{fig:snr_heatmaps} (together with a proxy-SNR for the reverse process, see \Cref{app:Additional experimental details and results} for details).
This is advantageous as it allows us to straight-forwardly apply such a forward process to high-resolution data without modification (in contrast to related work \cite{hoogeboom2023simple} where, in order to preserve the per-timestep SNR, they have to explicitly account for an adjustment to the mixing weights when generating high-resolution images).  
In the following, we summarise key framework components to train diffusion model in Fourier space with these alternate forward process, referring to \Cref{app:Extended theoretical framework and proofs} for details.

\begin{algorithm}
    \fontsize{10pt}{12pt}\selectfont  %
    \caption{Training algorithm (Fourier space noise)}   %
    \label{algo:training}
    \KwIn{
    Data samples $S := \{\B{x}_0^{(i)}\}_{i=1}^N$, 
    number of diffusion steps $\Tstep$, 
    noise schedule $\{\alb_\stept\}_{\stept=1}^\Tstep$,
    neural network $f_\theta$,
    number of training iterations $M$.}
    $\mathbf{C}:= \text{Diag}(\text{Cov}(\by))$\;
    \For{$M$ training iterations}{
        Sample $\B{x}_0 \sim S,  \by_0 = \B{F}\bx_0, \stept \sim \text{Uniform}(1, \ldots, T)$, for every $j \in [d/2]$, $(\epsilon_C)_j = \overline{(\epsilon_C)_{d-j}}$ and $(\epsilon_C)_j\sim \frac{\bC_j^{1/2}}{\sqrt 2} (\epsilon + i \epsilon')$ where $\epsilon, \epsilon' \sim \mathcal N(0, 1)$.\\
        \noindent  Fourier forward process:  $\mathbf{y}_t = \sqrt{\overline{\alpha}_t} \mathbf{y}_0 +
      \sqrt{1 - \overline{\alpha}_t}~\boldsymbol{\epsilon}_C 
    .$ \\ %
        \noindent Predict sample: $\hat{\B{y}}_0 = (\B{F} \circ f_\theta)(\B{F}^{-1}(\B{y}_t), t)$. \\
        \label{line:input_in_pixelspace}
        \noindent Compute loss $\mathcal{L}_\stept = \|\B{C}^{-1/2} (\B{y}_0 - \hat{\B{y}}_0)\|^2$. \;\\ %
        \noindent Update $\theta$ using gradient descent on $\mathcal{L}_t$.
    }
\end{algorithm}

\paragraph{Training algorithm and ELBO. }
\Cref{algo:training} allows to train diffusion models with the alternate forward processes in Fourier space (in the variant predicting the clean sample).
A key difference to standard DDPM training is that the noise and the difference in the loss $\mathcal{L}_t$ are scaled by the signal variances $\bC^{1/2}$ in Fourier space. %
\Cref{thm:loss_is_ELBO_complex} in \Cref{app:Extended theoretical framework and proofs} proves that $\mathcal{L}_t$ is an ELBO. 
To ensure a fair comparison with standard diffusion models, we train the neural network $f_\theta$ (typically a U-Net \cite{ronneberger2015u} in Euclidean/pixel space to maintain its inductive bias.

\paragraph{Sampling algorithm.}

\Cref{algo:sampling} adapts the Denoising Diffusion Implicit Models (DDIM) sampling algorithm \cite{song2020denoising} to Fourier space.
We note that while our analysis of the normality assumption in \Cref{sec:Aggressive noising of high frequencies violates the Gaussian assumption in their reverse process} was for the stochastic reverse process of DDPM, even though 
 DDIM is deterministic, a similar property holds here, which results in poor generation quality for faster noised frequencies.  

\begin{algorithm}
     \fontsize{10pt}{12pt}\selectfont  %
    \caption{Sampling algorithm (DDIM in Fourier space)}     %
    \label{algo:sampling}
    \KwIn{
    Input noise $\by_T := \boldsymbol{\epsilon}_C$ where for every $j \in [d/2]$, $(\epsilon_C)_j = \overline{(\epsilon_C)_{d-j}}$and $(\epsilon_C)_j\sim \frac{C_{jj}^{1/2}}{\sqrt 2} (\epsilon + i \epsilon')$ where $\epsilon, \epsilon' \sim \mathcal N(0, 1)$,
    neural network $f_\theta$,
    noise schedule $\{\overline \alpha_t\}_{t=1}^T$,
    sampling steps $T$.} 
    \For{$t$ from $T$ to $1$}{
        \noindent Predict sample: $\hat{\by}_0^{(t)} =  (\B{F} \circ f_\theta)(\B{F}^{-1}(\B{y}_t), t)$.\;\\
            $\by_{t-1} = \sqrt{\bar{\alpha}_{t-1}}\hat{\by}_0^{(t)} + 
            \frac{\sqrt{1-\bar{\alpha}_{t-1}}}{\sqrt{1-\bar{\alpha}_t}} (\by_t - \sqrt{\bar{\alpha}_t}\hat{\by}_0^{(t)})$. \;
    }
    \KwRet{$\B{F}^{-1}\by_0^{(1)}$}.
\end{algorithm}

\paragraph{Calibration. }
To compare performance across different forward processes, we calibrate them by ensuring that the average SNR across frequencies at any given timestep is the same. 
This means that the average amount of information destroyed across frequencies at timestep $t$ is the same across these processes. 
We refer to \Cref{subsec:calibration} for how to realise the calibration with an appropriate choice of the mixing coefficients $\bar{\alpha}_t$.

\begin{table*}[t]
    \centering
    \caption{
    EqualSNR performs on par with DDPM.%
    We measure performance using Clean-FID ($\downarrow$) \cite{parmar2021cleanfid}.
    }
    \label{tab:results_all-schedules}
\scriptsize
    \begin{tabular}{cc|cccc|cccc|cccc}
        \toprule
        & &\multicolumn{4}{c|}{\textbf{CIFAR10 (32 $\times$ 32)}} & \multicolumn{4}{c|}{\textbf{CelebA (64 $\times$ 64)}} & \multicolumn{4}{c}{\textbf{LSUN Church (128 $\times$ 128)}} \\
        & \hfill $T$ & 50 & 100 & 200 & 1000& 50 & 100 & 200 & 1000& 50 & 100 & 200 & 1000  \\
        \midrule
        & DDPM schedule & 18.63 & 18.01 &  17.68 & 17.7  & 10.10 &  8.72 & 8.30 & 8.62  &29.36 & 25.36  & 24.03 & 23.22    \\    %
        & EqualSNR (calibrated) schedule & 16.00 & 15.91 & 15.76 & 15.73& 9.45 & 8.79 & 8.62 & 8.56 & 19.42 & 19.75  & 19.90 & 19.80  \\  %
        \midrule
        & DDPM (calibrated) schedule &  16.64 & 14.69 & 14.07 & 13.85  & 12.65 & 7.88 & 6.54  & 6.59 & 40.31 &26.4  & 22.05 & 20.09 \\
           & EqualSNR schedule & 15.44 & 14.56 & 14.13 & 13.63   & 12.99 & 11.64 & 10.96  & 10.37 & 27.13 & 25.68  & 24.81 & 24.05  \\ %
        \bottomrule
    \end{tabular}
\end{table*}

\section{Experiments}
\label{sec:experiments}

In this section, we experimentally study the effect of the rate of noising in DDPM and EqualSNR on generation quality in Fourier space.  

\paragraph{Experiment setting.} 
We use three natural imaging datasets, a modality which exhibits the power law property in their Fourier representation, of different resolution: 
CIFAR10 ($32 \times 32$), CelebA ($64 \times 64$), and LSUN Church ($128 \times 128$). 
Furthermore, we use synthetic, high-frequency datasets which are described in the corresponding sections. 
We use a standard U-Net architecture throughout our experiments \cite{ronneberger2015u}.   %
\Cref{app:Additional experimental details and results}  provides additional experimental details and results. %

\paragraph{The reverse process learns to mirror its forward.}  %
To provide intuition, we begin by analysing the effect a forward (noising) process has on the reverse (generation) process in Fourier space. 
In \Cref{fig:fwd-bwd-high-low} we illustrate the forward and backward process for a single image for DDPM and EqualSNR schedule in pixel and Fourier space between $t=0$ and $t=T/2$ (see  \Cref{app:Additional experimental details and results} Figs. \ref{fig:fwd_ddpm_app} to \ref{fig:bwd_flippedsnr_app} for the full time interval, further examples and other datasets).
In DDPM, the low- and high-pass filtered images (rows 3 and 4; details on the calculation in \Cref{app:Additional experimental details and results}) show that the high frequency information is lost early on during the forward process. 
This enforces the reverse process to generate low frequencies first, and complete high frequency information only at the very end of the reverse process. 
In contrast, in EqualSNR the low- and high-pass images are corrupted at the same rate, which enforces the reverse process to learn to generate all frequencies simultaneously. 
 \Cref{fig:snr_heatmaps} (reverse process) and  \Cref{fig:freq_var_app} further illustrates this mirroring of the forward and reverse process computing the trajectory of variances of each frequency component at different timesteps (see \Cref{app:Additional experimental details and results} for details).

\paragraph{Faster noising degrades high-frequency generation quality in DDPM, EqualSNR overcomes this.} 
In Sections \ref{sec:ddpm} and \ref{sec:technical_explanation} we theoretically and empirically identified that high-frequency components are noised faster than low-frequency components, and showed that this leads to violations of the Gaussian assumption in their reverse process. 
We here investigate the effect of such (accumulating) violations on generation performance. 
We build on the Fourier-based Deepfake detection framework proposed by \cite{dzanic2020fourier}. %
Intuitively, this framework trains a classifier on a subset of the frequencies to discriminate generated and real data. 
We use classification performance as a (frequency-specific) proxy for generation quality. 
All experiments were performed with $T=1000$ sampling steps.

\begin{figure}[ht]
    \centering
    \includegraphics[width=\columnwidth]{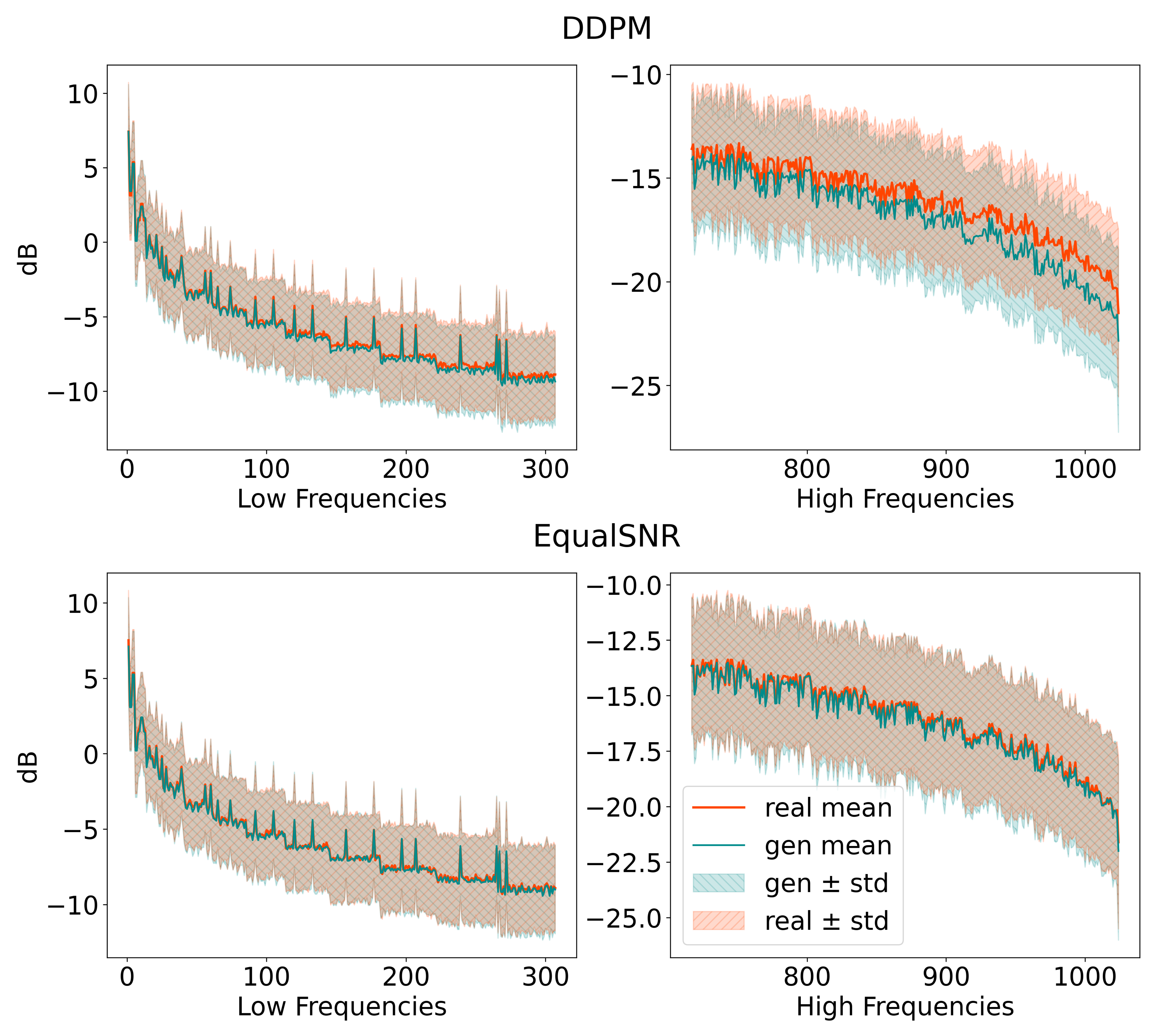}
    \caption{
    \textit{EqualSNR is superior to DDPM in high-frequency generation quality. }
    We plot the spectral magnitude profile (in decibels) for low and high frequencies, comparing data generated with [top] DDPM and [bottom] EqualSNR generated (blue) and real (red) data.
    }
    \label{fig:magnitutes-real-vs-generated}
\end{figure}

\Cref{fig:magnitutes-real-vs-generated} compares the spectral profile of magnitudes for CIFAR10 training data, and samples generated with a [top] DDPM and [bottom] EqualSNR diffusion model, respectively.  %
We plot the mean magnitude and standard deviation for the [left] lowest and [right] highest frequencies. 
Our analysis reveals a systematic failure of DDPM to accurately capture the high-frequency statistics of CIFAR10:
while the low-frequency magnitudes of generated samples perfectly match those of the data distribution, high-frequency magnitudes can be qualitatively distinguished, even `by eye'.
In contrast, EqualSNR demonstrates superior generation quality for the high frequencies with matching magnitudes for both low and high frequencies.

\begin{table}[ht]
\small
\centering
\caption{
\textit{High-frequency components generated with DDPM are easy to discriminate from real data, those generated with EqualSNR are not.}
We report classifier accuracy averaged over 100 runs and corresponding  \% (out of 100) of true positives (TP)  at significance levels $0.05$ and $0.01$. %
}
\label{tab:classifier-cifar10-high}
\begin{tabular}{c l c c c }
\toprule
\cmidrule(lr){2-4} %
& Freq. band & Mean Acc. &   \%TP at $0.05$ & \%TP at $0.01$  \\ %
\midrule
\multirow{3}{*}{\begin{turn}{90} DDPM \end{turn}}   & 5\% & 0.624 & 99\% & 99\% \\
& 15\% & 0.643 & 100\% & 99\%  \\
& 25\% & 0.654 & 100\% & 100\%  \\ \midrule
\multirow{3}{*}{\begin{turn}{90} \scriptsize EqualSNR \end{turn}} & 5\% & 0.516  & 13\% &5\% \\
& 15\% & 0.521  & 16\% &1\% \\
& 25\% & 0.518  & 10\% & 5\% \\
\bottomrule
\end{tabular}
\end{table}

In \Cref{tab:classifier-cifar10-high} we measure this mismatch of the approximate and data distribution for high-frequency components quantitatively and on a per-sample basis.
We isolate high frequency bands (top 5\%,  15\%,  25\%) and fit a  regression model with two parameters to the spectral magnitudes of each image following \citep{dzanic2020fourier}.   %
We then train a logistic regression classifier receiving these two regression parameters per image as input which discriminates real and generated images, repeated for 100 times. %
We intentionally choose a simple regression model and classifier for the benefit of the interpretability of the analysis, noting that more complicated classifiers which in particular take into account all frequencies may achieve substantially better performance.
To assess the validity of our classifier performance results and determine whether they are statistically significant (rather than occurring by random chance), we perform the following test:
we divide the data into 100 independent partitions and perform 100 hypothesis tests on the classifier's performance. 
For each test, our measure of success is the fraction of correctly rejected hypotheses (true positives) (see \Cref{app:Additional experimental details and results} for details on the statistical test).

For standard DDPM, we correctly reject the null hypothesis (distributions of real and generated images are the same) almost 100\% of the time for all high-frequency bands.
In contrast, for EqualSNR, we reject it at a much lower rate (10-13\%) at a significance level of 0.05, and 1-5\% at a significance level of 0.01.
Note when repeating the same experiment with low-frequency bands, DDPM and EqualSNR perform similarly (see \Cref{tab:classifier-cifar10-low} in \Cref{app:Additional experimental details and results}).
This underlines the advantageous high-frequency generation performance of EqualSNR compared to DDPM. 
We will exploit this property on data where high-frequency information is the key modelling objective next. 

\paragraph{When high-frequency information matters: a synthetic study.}  

\begin{figure}[ht]
    \centering
    \includegraphics[width=\linewidth]{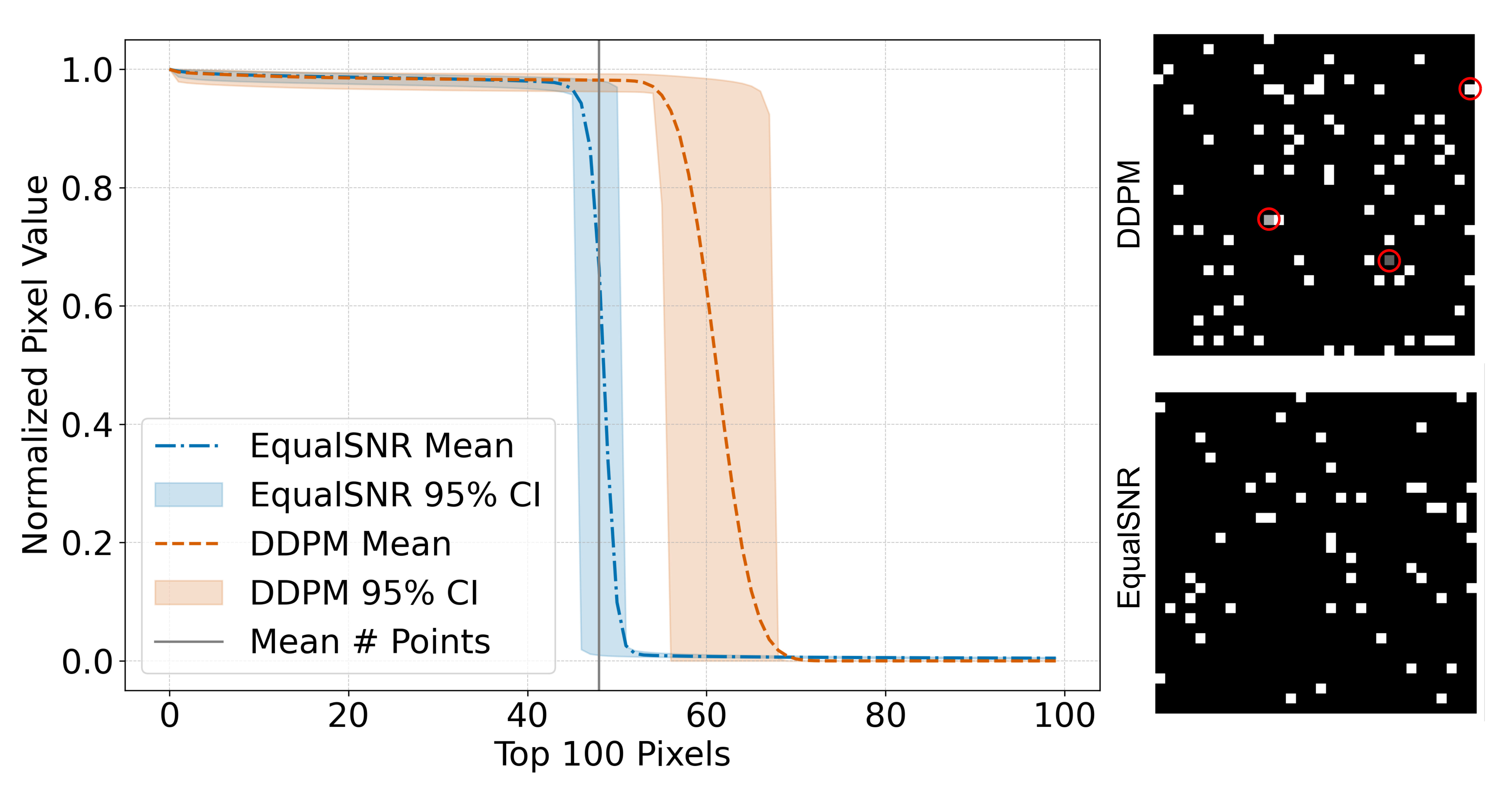}
    \caption{\textit{EqualSNR outperforms DDPM on data where high-frequency information is dominant.}
    Pixel intensity distribution (sorted descendingly) of 1000 generated samples for DDPM and EqualSNR, and two examples.
    }
    \label{fig:dots-intensity}
\end{figure}

We simulate real-world experiments where high-frequency, fine-grained information is of primary interest, such as astronomy, satellite and medical imaging, with a synthetic \textit{Dots} experiment.
We generate $32\times32$ images with between 46 and 50 white pixels randomly placed on a black background (see \Cref{fig:dots-intensity} [right] for generated examples). %
In \Cref{fig:dots-intensity} [left] we examine the average pixel intensity and the corresponding 95\% confidence intervals  over 1000 images for both DDPM and EqualSNR. %
EqualSNR significantly outperforms DDPM on this task: 
EqualSNR generates the correct amount of dots and has a steeper  slope than DDPM. 
DDPM, however, generates more points on average than present in the real data (gray line).
In conclusion, this suggests that an EqualSNR diffusion model is better at generating sparse, high frequency features, achieving a closer match to the original data distribution.

\paragraph{EqualSNR performs on par with DDPM on imaging benchmarks.}

Given the advantageous performance of EqualSNR on high frequencies, a natural question to ask is if---in light of the No Free Lunch Theorem \cite{wolpert1997no}---this harms generation quality on other data. 
We answer this question by training unconditional DDPM and EqualSNR diffusion models, where two runs are calibrated to each other, on standard imaging benchmarks, reporting obtained Clean-FID \cite{heusel2017gans,parmar2021cleanfid} values in \Cref{tab:results_all-schedules}.
Overall, we observe that EqualSNR performs on par with DDPM. 
We note that FID, which correlates with human perception of fidelity, may not capture the demonstrated issues in high-frequency accurateness observed for DDPM, contributing to the discussion of the appropriateness of the metric \cite{jayasumana2024rethinking}. 
On LSUN Church, the highest resolution dataset which contains more high frequencies, EqualSNR (calibrated) outperforms all DDPM variants.
Furthermore, we observe that EqualSNR requires less timesteps $T$ than DDPM to saturate performance.
This underlines that EqualSNR is a competitive forward process for diffusion models with beneficial performance on high-frequency data.

\section{Related work}
\label{sec:Related work}

\paragraph{Diffusion in Fourier space and other function spaces. }
Several works have studied diffusion models in Fourier space.  %
Most notably, concurrent work by \citet{gerdes2024gud} proposes a framework for diffusion processes on general function spaces, covering both Fourier and wavelet spaces.
Furthermore, they show that the forward process controls the degree of overlap of the `active time' of adjacent frequencies. 
They draw an important \textit{parallel between diffusion models and autoregressive models}: at 
one extreme, autoregressive models such as Large Language Models (LLMs) would generate each frequency component sequentially conditional on all previous ones.  %
The other extreme generates data without any hierarchy. 
The latter is implemented by \texttt{Equal SNR} which they propose but to the best of our knowledge do not experiment with (see \Cref{fig:fwd-bwd-high-low}).
\citet{jiralerspong2025shaping} concurrently analyse shaping the inductive bias of the forward process in Fourier space.  
They demonstrate how the forward process should be chosen in relation to the approximated data distribution.
\citet{crabbe2024time} show how to map the continuous-time formulation of diffusion models between an origin space and its Fourier transform, and apply diffusion models in the Fourier space to time series data, demonstrating superior performance. 
\citet{phillips2022spectral} likewise study diffusion in the spectral domain, particularly on multi-modal data.

\citet{guth2022wavelet} and \citet{phung2023wavelet} propose a diffusion model acting on multiple wavelet subspaces simultaneously, the former showing that time complexity increases linearly with image size.
\citet{jiang2023low} and \citet{huang2024wavedm} exploit wavelet representations of images for the inverse problems of low-light enhancement and image restoration.

\paragraph{Diffusion schedules and the importance of the rate of noising. }
Previous work also studied the importance of the noising schedule, also in light of how fast and early frequency information is corrupted.
\citet{kingma2021variational} learn the parameters of a variance-preserving noising schedule during training, observing that it spends more time in the high-SNR regime compared to other schedules, i.e. on generating high frequencies. 
\citet{williams2024score} provide an adaptive algorithm for finding a noising schedule which is optimal given a cost measuring the work required to transport samples along the diffusion path, and similarly observe on image experiments that this schedule spends more time on high-frequency details compared to a cosine schedule.
\citet{ziashahabi2024frequency} adapt the noising schedule in Fourier space to focus the limited resources on certain frequencies ranges, observing significant speedups at inference time while maintaining image quality.
\citet{voleti2022score}, similar to our work, propose a non-isotropic covariance structure in the forward diffusion process. 
\citet{yang2023diffusion} also found that high-frequency generation is harmed by DDPM diffusion models, but provide different explanations for this insight.

\paragraph{The low-to-high frequency hierarchy in diffusion models. }  %
Several works either explicitly or implicitly state the hypothesis that generating data from low to high frequency, and the corresponding inductive bias on the forward process which enforces this, is crucial for the success of diffusion models.  
In the seminal DDPM paper, \citet{ho2020denoising} observe that when generating samples conditional on noisy iterate, the sample shares almost all features (except for high-frequency details) with the iterate when the iterate is less noisy (i.e. is from late in the reverse process), but only shares the large-scale (i.e. low frequency) features when the iterate is early in the reverse process, and refer to this as ``conceptual compression''.
\citet{rissanen2022generative} observe the ``implicit spectral inductive bias'' of generating low frequencies before high frequencies and proposes a model which leverages this structure explicit. 
In this context, \citet{dieleman2024spectral} characterises diffusion models as approximate autoregressive models in Fourier space.   %
\citet{hoogeboom2023simple} hypothesise that high-frequency details can be generated in few diffusion steps when conditioning on already generated low-frequency features, which they exploit to adjust the noising schedule.
Our EqualSNR schedule puts this to the extreme in the sense that it spends the same time on all frequencies.
Importantly, to the best of our knowledge, no previous work showed to what degree the hierarchical structure in Fourier space of generating low frequencies before high frequencies is essential for diffusion models.
Several applications such as text-to-image and video generation of diffusion models also benefit from the low-to-high frequency generation of diffusion models \cite{yi2024towards,ruhe2024rolling}. 

Conditioning from low-to-high frequency information is further exploited in U-Nets \cite{ronneberger2015u}, a go-to architecture for diffusion models. %
\citet{williams2023unified} show that under DDPM diffusion noise, high-frequency components in a Haar wavelet basis have a substantially lower SNR than low frequencies.
This is exploited by U-Nets as average pooling, a common downsampling operation, is conjugate with projection to a lower-resolution Haar wavelet subspace, an alternate, frequency-sensitive basis. %
This means that the noisy high-frequency components, where most information is corrupted, are nullified along the levels of the encoder, rendering U-Nets efficient denoisers in diffusion models \cite{falck2022multi}.

Unrelated to the above works, \citet{rahaman2019spectral,wang2024frequency} identify a spectral bias of neural networks towards low frequencies (under strict network constraints), offering a potential alternative explanation for poor high-frequency generation.

\section{Conclusion}

In this work, we analysed the forward process of diffusion models in Fourier space, specifically the rate of noising and the induced hierarchy of frequencies, and its inductive bias on the learned reverse process.
We theoretically analysed and experimentally demonstrated that faster noising of high-frequency components as commonly done in DDPM degrades their generation quality, and showed that a diffusion model with a hierarchy-free forward process (EqualSNR) alleviates this issue, performing on par with the standard (hierarchical) DDPM forward process in terms of FID on standard imaging benchmarks.
We refer to \Cref{app:Additional experimental details and results} for a discussion of our limitations.
Future work should further investigate SNR-governed, alternate forward processes and tune them to the specific (Fourier) properties of the modality and task at hand.   %

\section*{Acknowledgements}
We acknowledge the blog post by `Diffusion is spectral autoregression' \cite{dieleman2024spectral} by Sander Dieleman which strongly motivated this work and inspired the design of \cref{fig:fig1}.
We thank Sam Bond-Taylor, Heiner Kremer, Andrew Y. K. Foong, Markus Heinonen and the colleagues at Microsoft Research Cambridge for useful discussions.

\section*{Impact Statement}

While our work is of methodological nature, it has potential, yet important safety concerns for the ability to generate highly-realistic DeepFake data \cite{westerlund2019emergence}, in particular images and videos, which have the Fourier power law property.
As we demonstrated in \Cref{sec:experiments}, a diffusion model with the alternate EqualSNR forward process showed marked improvements in the generation quality of high-frequency components. 
As a result, the generated samples were significantly harder to discriminate from real data by a classifier than samples from DDPM.
More generally, our SNR-governed framework for designing forward processes and the corresponding analysis may be abused to design `adversarial' forward processes which either make the generated samples even more realistic, or allow the modification of real data to appear generated from a diffusion model.
Future work should further investigate these safety concerns.

\nocite{langley00}

\bibliography{9_bib.bib}

\newpage
\appendix
\onecolumn

\section{Extended theoretical framework and proofs}
\label{app:Extended theoretical framework and proofs}

In this appendix, we will describe theoretical results in more detail. 
We begin by defining our notation and setting out basic concepts.

\subsection{Preliminaries}

\subsubsection{Notation}
Lowercase Greek letters, such as $\alpha, \beta, \gamma, \varepsilon$ denote scalars. 
Lowercase bold letters, such as $\bx, \by, \bz, \boldsymbol{\epsilon}, \dots$ denote vectors. 
If $\bx \in \C^d$ then $\bx = (\bx_1, \bx_2, \cdots, \bx_d)$.
Uppercase bold letters, such as  $\bI, \bC, \bF, \dots$ denote matrices. 
$[T] := \{1, \dots, T\}$.
$\| \cdot \|_2$ denotes the $\ell_2$-norm, i.e. for $\bx \in \bR^d$, $\| \bx \|_2 = \sqrt{\sum_{i=1}^d \bx_i^2}$. 
For two vectors $\bx, \by \in \C^d$, 
we define the inner product between $\bx$ and $\by$ to be $\bx \cdot \by := \sum_{i=1}^d \bx_i \by_i$.
$\cN(\boldsymbol \mu, \mathbf \Sigma)$ denotes the multivariate normal distribution with mean $\mathbf \mu$ and covariance $\mathbf \Sigma$. 
Sometimes we may write $\cN(\bx; \boldsymbol{\mu}, \mathbf \Sigma)$, this just means
that the random variable $\bx$ is drawn from $\cN(\boldsymbol \mu, \mathbf \Sigma)$.
$\mathcal{CN}(\boldsymbol{\mu}, \boldsymbol{\Sigma})$ is a complex Gaussian.
If $p(\bx_1, \dots, \bx_T)$ denotes the distribution (or the pdf of the distribution depending on context) over $T$ random variables and $S \subset [T]$, $p((\bx_i)_{i \in S})$ denotes the distribution marginalized over $\overline S$. 
For any vector $\bv \in \C^d$, 
let $i := \sqrt{-1}$ and $\text{Re}(\bv) \in \R^d$ denote the real part of $\bv$, 
and let $\text{Im}(\bv) \in \R^d$ denote the imaginary part. 

In the following subsections, we discuss the Fourier transform and calculate the covariance of the transformed data.

\subsubsection{Fast Fourier Transform}
\label{app:fft}

We define the discrete Fourier transform operator \(\bF_1\) acting on a vector 
\(\mathbf{x} = (x_0, x_1, \ldots, x_{N-1})^\top \in \mathbb{C}^N\) as
\[
(\bF_1 \mathbf{x})_k = \sum_{n=0}^{N-1} x_n \, e^{-2\pi i \, \frac{n k}{N}},
\quad k = 0, \dots, N-1.
\]
Note that \(\bF_1 \mathbf{x}\) is generally \emph{complex-valued}.

Images (which can be viewed as 2D grids of pixel values) often use a two-dimensional version of the FFT, for a vectorized image, this mapping corresponds to the tensor power of $\bF$, $\bF := \bF_1^{\otimes 2}$. Conceptually, the two-dimensional FFT applies the 1D transform (as defined above) first along each row and then along each column (or vice versa), yielding a complex-valued frequency representation of the image. 

\subsubsection{Covariance of Fourier transformed diffusion noise}
\label{app:cov_x_t}

Let $\bx_t := \bF \by_t$ where $\by$ is distributed according to Eq. \eqref{eq:ddpm_yt}. Then, 
\begin{align*}
    \mathbb{E}[\bx_t] &= \sqrt{\overline{\alpha}_t} \bF \by_0 \\
    \text{Cov}[\bx_t] &= \mathbb{E}[(\bx_t - \mathbb{E}[\bx_t]) (\bx_t - \mathbb{E}[\bx_t])^\dagger] \\
    &= \mathbb{E}[(\bF \by - \sqrt{\overline{\alpha}_t} \bF \by_0) (\bF \by - \sqrt{\overline{\alpha}_t} \bF \by_0)^\dagger] \\
    &= \mathbb{E}[(\bF (\by_t - \sqrt{\overline{\alpha}_t}  \by_0 )) (\bF (\by_t - \sqrt{\overline{\alpha}_t}  \by_0))^\dagger] \\
    &= \bF \mathbb{E}[ \by_t - \sqrt{\overline{\alpha}_t}  \by_0 )  (\by_t - \sqrt{\overline{\alpha}_t}  \by_0)^T] \bF^\dagger \\
    &= \bF \text{Cov}(\by_t) \bF^\dagger \\
    &= \bF (1 - \overline{\alpha}_t) \mathbf{I} \bF^\dagger \\
    &= (1 - \overline{\alpha}_t) \mathbf{I} %
\end{align*}

\subsection{Forward Processes and their Derivation}
\label{subsec:fwd-processes-variance-calculation}

\begin{definition}[Modified forward process in Fourier space]  %
\label{prop:noise_variance}
Let the random variable $\bx_0$ denote the vectorized signal in the data space (e.g. pixel space), and $\by_0 \in \C^d$ denote its Fourier transform.
Let $\bC_i := \Var[(\by_0)_i]$ denote the coordinate-wise variance in Fourier space, which we illustrate for three imaging datasets in \Cref{app:Additional experimental details and results} \Cref{fig:c_matrix}, and
let $\boldsymbol{\epsilon} \sim \mathcal{C}\mathcal{N}(0, \Sigma)$.
Then the forward process in Fourier space is written as
\begin{equation}
 \by_t = \sqrt{\overline \alpha_t} \by_0 + \sqrt{1-\overline \alpha_t} \boldsymbol{\epsilon}_\Sigma,
 \label{eq:pushforward_fourier}
 \end{equation}
which has SNR at timestep $t$ and frequency $i$ given by 
\begin{equation}\label{eq:SNR_noise_controlled}
    s_t(i) := \frac{\overline \alpha_t ~\bC_i}{(1-\overline \alpha_t) \Sigma_{ii}}.
\end{equation}
This implies:
\begin{enumerate}
    \item DDPM: The forward process for DDPM has SNR given by $s_t^{\text{DDPM}}(i) = \frac{\overline \alpha_t ~\bC_i}{(1-\overline \alpha_t)}.$
    \item {Equal SNR}: The forward process has equal SNR across every coordinate if and only if 
    $\Sigma_{ii} = c~ \bC_i$, where $c$ is a universal constant. 
    The process is `variance preserving' (in the sense of \cite{song2020score}) if $c = 1$.
    \footnote{Note that if we insist that $\Sigma = c~\Cov(\by_0)$ then 
    the equal SNR property holds in \emph{all bases} (this is shown in \Cref{sec:snr-high-d}).}
    \item {Flipped SNR}: The forward process has flipped SNR, i.e., $s_t^{\text{FLIP}}(i) := s_t^{\text{DDPM}}(d-i)$ if and only if $\Sigma_{ii} = \bC_i/\bC_{d-i}$.
\end{enumerate}
\end{definition}

We contrast the SNR of the standard DDPM noise schedule and the two alternate ones computed for CIFAR10 in \Cref{fig:snr_heatmaps}, referring to \Cref{app:Additional experimental details and results} Figures \ref{fig:heatmap_celeba} and \ref{fig:heatmap_lsun} for the SNR profiles on CelebA and LSUN Church, respectively.
EqualSNR corrupts information in all frequencies at the same rate by enforcing the same SNR for all frequencies at each timestep.
This schedule removes all hierarchy among the frequencies and generates them at the same rate across diffusion time \cite{gerdes2024gud}.
{FlippedSNR} on the other hand reverses the ordering of frequencies of DDPM in terms of SNR, noising low frequencies first.
This forces the reverse process to generate high frequencies first, then low frequencies.

Both EqualSNR and {FlippedSNR} have a base distribution and forward process which are data-dependent and match the coordinate-wise variance of the Gaussian noise added to the data. %
In fact, we propose that the SNR at timestep $t$ is a better choice of parameterization of the corruption process
for any frequency $i$.
That is, rewriting \Cref{eq:SNR_noise_controlled} as $\overline \alpha_t = \frac{s_t(i)\Sigma_{ii}}{\bC_i + s_t(i) \Sigma_{ii}}$, we can derive the mixing coefficients as a function of the variances of the data, noise, and $s_t(i)$.

\subsection{Calibration}
\label{subsec:calibration}

Since we are comparing different noising schemes which corrupt the signal at different rates (in the sense of having different
SNR values at differing time steps) we try to ensure that the average SNR across frequencies is the same for any pair of 
foward processes that we are comparing. We illustrate how this can be done for the case of DDPM and EqualSNR below. 

Let $A_{\text{ddpm}}(t)$ and $A_{\text{eq}}(t)$ denote the average SNR for DDPM and EqualSNR at timestep $t$,
let $d$ denote the number of frequencies, and suppose we fix $\{ \overline{\alpha}_t^{ddpm} \}_t$. 
We would like to solve for $\{ \overline \alpha_t^{eq} \}_t$ such that $A_{\text{eq}}(t) = A_{\text{ddpm}}(t)$ for all $t$. 
Using the formulae derived in \Cref{subsec:fwd-processes-variance-calculation}, we see that this is the same as:
\[ \frac{\overline \alpha_t^{eq}}{1-\overline \alpha_t^{eq}} = \frac{1}{d}\sum_i \frac{\overline \alpha_t^{ddpm} \bC_i}{1-\overline{\alpha_t}^{ddpm}} =  \frac{\overline \alpha_t^{ddpm} \frac{1}{d}\sum_i\bC_i}{1-\overline{\alpha_t}^{ddpm}}\]
Solving for $\overline \alpha_t^{eq}$ gives us 
\[\overline \alpha_t^{eq} =   \frac{\overline{\alpha}_t^{ddpm} \Bigl(\frac{1}{d}\sum_i \mathbf{C}_i\Bigr)}{\Bigl(1 - \overline{\alpha}_t^{{ddpm}}\Bigr)
+ \overline{\alpha}_t^{{ddpm}} \Bigl(\frac{1}{d}\sum_i \mathbf{C}_i\Bigr)}.\]

On the other hand, if we choose to parameterize or schedule in terms of the SNR, then having decided on the 
SNR schedule for DDPM, the SNR schedule for EqualSNR is just the average value across frequencies of the DDPM SNR.

\subsection{Our Training Algorithm and Connection to the ELBO}
\label{sec:loss_justification}

We denote the corrupted signal by $\by_t := \bF\,\bx_t \in \C^d$, where $\bF$ is a linear operator and $\bx_t$ is the signal at timestep $t$. We train a model $f_\theta(\by_t,t)$ by minimizing the MSE loss 
\(\cL_t(\theta) := \E\!\bigl[\|\bC^{-\tfrac12}(\by_0 - f_\theta(\by_t,t))\|^2\bigr]\), 
where $\bC \in \C^{d\times d}$ is Hermitian positive semidefinite. As in \cite{ho2020denoising,song2020score}, minimizing this loss at each $t$ is equivalent to maximizing a standard ELBO (evidence lower bound) on $\log p_\theta(\by_0)$.

\begin{proposition}[Loss Minimization as ELBO Maximization]\label{thm:loss_is_ELBO_complex}
Consider a forward corruption process $q(\by_1,\dots,\by_T\mid\by_0)$ and a reverse reconstruction process $p_\theta(\by_0,\dots,\by_T)$ with 
\(p_\theta(\by_{t-1}\mid\by_t) := \mathcal{CN}\!\bigl(\by_{t-1};\,\mu_\theta(\by_t,t),\,\tfrac{\alpha_t(1-\overline{\alpha}_{t-1})}{1-\overline{\alpha}_t}\,\bC\bigr)\). Minimizing the MSE loss 
\(\E\!\bigl[\bigl(\by_0-\mu_\theta(\by_t,t)\bigr)^\dagger\,\bC^{-1}\bigl(\by_0-\mu_\theta(\by_t,t)\bigr)\bigr]\)
is (up to constants) equivalent to maximizing the usual ELBO on $\log p_\theta(\by_0)$.
\end{proposition}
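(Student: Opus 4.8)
The plan is to reduce the claimed equivalence to the standard DDPM computation of Ho et al., carried out in the complex-valued / non-isotropic setting. First I would write the ELBO in its usual telescoped form,
\begin{equation*}
\log p_\theta(\by_0) \geq \E_q\!\Bigl[\log p_\theta(\by_T) - \sum_{t>1}\kl\bigl(q(\by_{t-1}\mid\by_t,\by_0)\,\|\,p_\theta(\by_{t-1}\mid\by_t)\bigr) + \log p_\theta(\by_0\mid\by_1)\Bigr] + \text{const},
\end{equation*}
so that the only $\theta$-dependent terms are the KL divergences between the true forward posterior $q(\by_{t-1}\mid\by_t,\by_0)$ and the learned reverse kernel $p_\theta(\by_{t-1}\mid\by_t)$. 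The key structural fact I need is that, under the EqualSNR forward process of \Cref{prop:noise_variance} with $\boldsymbol\epsilon_\Sigma$ having covariance proportional to $\bC$, the forward posterior $q(\by_{t-1}\mid\by_t,\by_0)$ is itself a complex Gaussian with covariance $\tfrac{\alpha_t(1-\overline\alpha_{t-1})}{1-\overline\alpha_t}\,\bC$ — exactly the covariance hard-coded into $p_\theta(\by_{t-1}\mid\by_t)$ — and with mean an explicit affine function of $\by_t$ and $\by_0$. This is the complex/coloured-noise analogue of the DDPM posterior identity; it follows by the same Gaussian-conditioning algebra once one diagonalises everything in the eigenbasis of $\bC$ (or, since $\bC$ here is the diagonal matrix $\mathrm{Diag}(\Cov(\by))$, coordinatewise), reducing to $d$ scalar computations with the noise variances $\Sigma_{ii}=c\bC_i$ cancelling against the $\bC_i$ appropriately.

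Next I would invoke the closed form for the KL divergence between two complex Gaussians with the \emph{same} covariance $\tfrac{\alpha_t(1-\overline\alpha_{t-1})}{1-\overline\alpha_t}\bC =: \Lambda_t$: it collapses to the Mahalanobis distance between the means,
\begin{equation*}
\kl\bigl(q(\by_{t-1}\mid\by_t,\by_0)\,\|\,p_\theta(\by_{t-1}\mid\by_t)\bigr) = \bigl(\tilde\mu_t(\by_t,\by_0)-\mu_\theta(\by_t,t)\bigr)^\dagger \Lambda_t^{-1}\bigl(\tilde\mu_t(\by_t,\by_0)-\mu_\theta(\by_t,t)\bigr),
\end{equation*}
up to a $\theta$-independent constant (and up to the factor-of-two convention for complex Gaussians, which I would absorb). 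Then I parametrise $\mu_\theta$ in the $\hat\by_0$-prediction form used in \Cref{algo:training,algo:sampling}, i.e. $\mu_\theta(\by_t,t)$ equals $\tilde\mu_t$ with $\by_0$ replaced by $f_\theta(\by_t,t)$; because $\tilde\mu_t$ is affine in $\by_0$ with a $t$-dependent scalar coefficient $\beta_t$ multiplying $\bC$-independent directions, the difference of means is $\beta_t(\by_0 - f_\theta(\by_t,t))$, and the quadratic form becomes $\beta_t^2\,(\by_0-f_\theta)^\dagger \Lambda_t^{-1}(\by_0-f_\theta)$. Since $\Lambda_t^{-1} = \tfrac{1-\overline\alpha_t}{\alpha_t(1-\overline\alpha_{t-1})}\bC^{-1}$, this is precisely a positive $t$-dependent scalar times $(\by_0-f_\theta)^\dagger\bC^{-1}(\by_0-f_\theta) = \|\bC^{-1/2}(\by_0-f_\theta(\by_t,t))\|^2 = \cL_t(\theta)$. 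Summing over $t$ and taking $\E_q$ gives the ELBO as a nonnegative-weighted sum of the per-step losses plus constants, so maximising the ELBO is equivalent (term by term, hence in aggregate) to minimising each $\cL_t$; dropping the weights recovers the simple loss actually used, exactly as in \cite{ho2020denoising}.

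The main obstacle I anticipate is the first step: verifying cleanly that the forward posterior in the \emph{coloured-noise complex} setting really does have the stated covariance $\Lambda_t$ independent of $\theta$ and of $\by_0$, and pinning down the exact mean coefficient $\beta_t$ (and the companion coefficient on $\by_t$). Getting this right requires care about (i) the complex Gaussian conventions — whether $\mathcal{CN}(\mu,\Sigma)$ means the real/imaginary parts each have covariance $\tfrac12\Sigma$ or $\Sigma$, which changes constants but not the equivalence — and (ii) the Hermitian-symmetry constraint $(\epsilon_C)_j = \overline{(\epsilon_C)_{d-j}}$ imposed in \Cref{algo:training}, which means the noise is not a generic circular complex Gaussian but a degenerate one supported on a real-dimension-$d$ subspace; I would handle this either by working in pixel space and transporting the identity through the unitary $\bF$ (where the real Gaussian computation of Ho et al. applies verbatim after the coordinate rescaling by $\bC^{1/2}$), or by restricting attention to the independent half of the frequencies and doubling. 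Everything else — the KL-of-equal-covariance-Gaussians formula, the affine substitution, the scalar bookkeeping — is routine.
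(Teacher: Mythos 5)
Your proposal follows essentially the same route as the paper's own proof sketch: the standard ELBO decomposition into per-step KL terms, the observation that $q(\by_{t-1}\mid\by_t,\by_0)$ and $p_\theta(\by_{t-1}\mid\by_t)$ are complex Gaussians sharing the covariance $\tfrac{\alpha_t(1-\overline{\alpha}_{t-1})}{1-\overline{\alpha}_t}\bC$, and the equal-covariance KL formula collapsing each term to the $\bC^{-1}$-weighted mean mismatch, hence (after the $\hat\by_0$-parametrisation) to $\cL_t(\theta)$ up to positive $t$-dependent weights. You actually supply more detail than the paper's sketch, and the subtleties you flag (the circular-Gaussian convention and the Hermitian-symmetry degeneracy of the Fourier-space noise) are genuine but are likewise left implicit in the paper's argument.
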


\begin{proof}[Proof Sketch]
By the standard argument from diffusion-based generative models \cite{ho2020denoising,song2020score}, we have 
\(\log p_\theta(\by_0)\ge \E_{q}[\log\tfrac{p_\theta(\by_0,\dots,\by_T)}{\,q(\by_1,\dots,\by_T\mid\by_0)\,}]\), 
which expands into a sum of KL divergences plus a prior term. Hence maximizing the ELBO is equivalent to minimizing  
\(\!\sum_{t=1}^T \E_{q}[D_\mathrm{KL}(q(\by_{t-1}\mid\by_t,\by_0)\,\|\,p_\theta(\by_{t-1}\mid\by_t))]\). 
Both $q(\by_{t-1}\mid\by_t,\by_0)$ and $p_\theta(\by_{t-1}\mid\by_t)$ are (circularly) complex Gaussian distributions with the same covariance structure (up to a scaling factor), so their KL divergence is proportional to 
\(\|\by_0 - \mu_\theta(\by_t,t)\|^2_{\bC^{-1}}\) (this follows from \Cref{fact:gaussian_KL} below). 
Summing over $t$ matches the MSE loss $\cL_t(\theta)$, implying that minimizing $\cL_t(\theta)$ at all timesteps maximizes the ELBO. 
\end{proof}
Let  \( \det(\cdot) \) denote the determinant, \( \text{tr}(\cdot) \) denote the trace and \( (\cdot)^\dagger \) denote the conjugate transpose. We use the following fact above:

\begin{fact}[KL Divergence Between Complex Gaussian Distributions]
\label{fact:gaussian_KL}
Let \( p(\mathbf{z}) = \mathcal{CN}(\mathbf{z}; \boldsymbol{\mu}_p, \boldsymbol{\Sigma}_p) \) and \( q(\mathbf{z}) = \mathcal{CN}(\mathbf{z}; \boldsymbol{\mu}_q, \boldsymbol{\Sigma}_q) \) be two \( d \)-dimensional complex Gaussian distributions, where \( \boldsymbol{\mu}_p, \boldsymbol{\mu}_q \in \mathbb{C}^d \) are the mean vectors, \( \boldsymbol{\Sigma}_p, \boldsymbol{\Sigma}_q \in \mathbb{C}^{d \times d} \) are the covariance matrices, which are Hermitian (\( \boldsymbol{\Sigma} = \boldsymbol{\Sigma}^\dagger \)) and positive semidefinite.

The Kullback-Leibler (KL) divergence \( D_{\text{KL}}(p \| q) \) between \( p \) and \( q \) is given by:
\[
D_{\text{KL}}(p \| q) = \frac{1}{2} \left[ \log \frac{\det(\boldsymbol{\Sigma}_q)}{\det(\boldsymbol{\Sigma}_p)} 
+ \text{tr}\left(\boldsymbol{\Sigma}_q^{-1} \boldsymbol{\Sigma}_p\right) 
+ (\boldsymbol{\mu}_q - \boldsymbol{\mu}_p)^\dagger \boldsymbol{\Sigma}_q^{-1} (\boldsymbol{\mu}_q - \boldsymbol{\mu}_p)
- d \right],
\]
\end{fact}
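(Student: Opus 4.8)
This is the standard multivariate Gaussian KL identity with the transpose replaced by the conjugate transpose, so the plan is simply to carry out the usual direct computation, $D_{\text{KL}}(p\|q)=\E_{\mathbf z\sim p}\bigl[\log p(\mathbf z)-\log q(\mathbf z)\bigr]$, being careful about the complex-specific points. Substituting the two log-densities, the log-normalisation constants combine into $\tfrac12\log\frac{\det(\boldsymbol\Sigma_q)}{\det(\boldsymbol\Sigma_p)}$, and the remaining contribution is the expectation under $\mathbf z\sim p$ of $\tfrac12\bigl[(\mathbf z-\boldsymbol\mu_q)^\dagger\boldsymbol\Sigma_q^{-1}(\mathbf z-\boldsymbol\mu_q)-(\mathbf z-\boldsymbol\mu_p)^\dagger\boldsymbol\Sigma_p^{-1}(\mathbf z-\boldsymbol\mu_p)\bigr]$; it then suffices to evaluate the expectations of these two Hermitian quadratic forms.

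For the $p$-term I would use the trace trick: write the scalar $(\mathbf z-\boldsymbol\mu_p)^\dagger\boldsymbol\Sigma_p^{-1}(\mathbf z-\boldsymbol\mu_p)=\text{tr}\bigl(\boldsymbol\Sigma_p^{-1}(\mathbf z-\boldsymbol\mu_p)(\mathbf z-\boldsymbol\mu_p)^\dagger\bigr)$, move the expectation inside the linear trace, and use $\E_{\mathbf z\sim p}[(\mathbf z-\boldsymbol\mu_p)(\mathbf z-\boldsymbol\mu_p)^\dagger]=\boldsymbol\Sigma_p$ to obtain $\text{tr}(\boldsymbol\Sigma_p^{-1}\boldsymbol\Sigma_p)=\text{tr}(\bI_d)=d$. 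For the $q$-term I would use the shift decomposition $\mathbf z-\boldsymbol\mu_q=(\mathbf z-\boldsymbol\mu_p)+(\boldsymbol\mu_p-\boldsymbol\mu_q)$ and expand the Hermitian form: the two cross terms have zero expectation since $\E_{\mathbf z\sim p}[\mathbf z-\boldsymbol\mu_p]=0$, the ``variance'' term yields $\E_{\mathbf z\sim p}[(\mathbf z-\boldsymbol\mu_p)^\dagger\boldsymbol\Sigma_q^{-1}(\mathbf z-\boldsymbol\mu_p)]=\text{tr}(\boldsymbol\Sigma_q^{-1}\boldsymbol\Sigma_p)$ by the same trace trick, and the ``bias'' term yields $(\boldsymbol\mu_q-\boldsymbol\mu_p)^\dagger\boldsymbol\Sigma_q^{-1}(\boldsymbol\mu_q-\boldsymbol\mu_p)$. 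Collecting the $\log\det$ constant, the $-d$, the trace term and the bias term (each carrying the $\tfrac12$) reproduces the bracketed expression in the statement.

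Since the computation itself is routine, I expect the only points requiring genuine care — and where sources differ — to be two. First, one should check that every quantity appearing is a well-defined \emph{real} number: this holds because $\boldsymbol\Sigma_p,\boldsymbol\Sigma_q$ and their inverses are Hermitian PSD, so $v^\dagger A v\in\R_{\ge 0}$ for Hermitian PSD $A$, and $\text{tr}(\boldsymbol\Sigma_q^{-1}\boldsymbol\Sigma_p)=\text{tr}(\boldsymbol\Sigma_q^{-1/2}\boldsymbol\Sigma_p\boldsymbol\Sigma_q^{-1/2})\ge 0$. Second, and this is the part I regard as the real obstacle rather than the algebra, the overall factor $\tfrac12$ and the density normalisation are fixed only once one fixes the convention for $\mathcal{CN}(\boldsymbol\mu,\boldsymbol\Sigma)$. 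The cleanest way to pin these down unambiguously is the real-embedding route: identify $\mathbf z\in\C^d$ with $(\re\mathbf z,\im\mathbf z)\in\R^{2d}$, represent a Hermitian $A$ by the real block matrix $M(A):=\bigl(\begin{smallmatrix}\re A & -\im A\\ \im A & \re A\end{smallmatrix}\bigr)$ (the map $A\mapsto M(A)$ is an algebra homomorphism with $\det M(A)=|\det A|^2$, $\text{tr}\,M(A)=2\,\re(\text{tr}\,A)$, and $u^\top M(A)u=\re(v^\dagger A v)$ for $u=(\re v,\im v)$), apply the already-standard real $2d$-dimensional Gaussian KL formula, and translate back. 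Since this identity is textbook material and is invoked in \Cref{thm:loss_is_ELBO_complex} only up to an overall multiplicative constant, I would present it with a pointer to a standard reference and keep the constant-chasing brief.
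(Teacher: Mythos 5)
The paper itself gives no proof of \Cref{fact:gaussian_KL}: it is quoted as a standard identity and invoked in \Cref{thm:loss_is_ELBO_complex} only through the observation that, with matched covariances, the KL reduces (up to a positive constant) to the $\boldsymbol{\Sigma}^{-1}$-weighted squared mean difference. Your route --- direct computation of $\E_{\mathbf z\sim p}[\log p(\mathbf z)-\log q(\mathbf z)]$ with the trace trick for the $p$-term and the shift decomposition for the $q$-term --- is the standard way to establish such an identity and is sound; there is simply no in-paper argument to compare it against.

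One substantive point, precisely at the spot you flag as the real obstacle: if you actually carry out your real-embedding sanity check, it does \emph{not} reproduce the overall $\tfrac12$ for the usual circularly-symmetric convention in which $\mathcal{CN}(\boldsymbol\mu,\boldsymbol\Sigma)$ has density $\pi^{-d}\det(\boldsymbol\Sigma)^{-1}\exp\bigl(-(\mathbf z-\boldsymbol\mu)^\dagger\boldsymbol\Sigma^{-1}(\mathbf z-\boldsymbol\mu)\bigr)$. Under that convention the embedding $\mathbf z\mapsto(\re\,\mathbf z,\im\,\mathbf z)$ yields a real Gaussian in dimension $2d$ with covariance $\tfrac12 M(\boldsymbol\Sigma)$, and using $\det M(\boldsymbol\Sigma)=\det(\boldsymbol\Sigma)^2$, $\operatorname{tr}M(A)=2\operatorname{tr}(A)$ and $u^\top M(A)u=\re\!\left(v^\dagger A v\right)$, the real $2d$-dimensional formula collapses to $\log\frac{\det\boldsymbol\Sigma_q}{\det\boldsymbol\Sigma_p}+\operatorname{tr}(\boldsymbol\Sigma_q^{-1}\boldsymbol\Sigma_p)+(\boldsymbol\mu_q-\boldsymbol\mu_p)^\dagger\boldsymbol\Sigma_q^{-1}(\boldsymbol\mu_q-\boldsymbol\mu_p)-d$, i.e.\ \emph{without} the prefactor $\tfrac12$. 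The $\tfrac12$ in the Fact as stated corresponds to a ``real-style'' density containing $\exp\bigl(-\tfrac12(\cdot)^\dagger\boldsymbol\Sigma^{-1}(\cdot)\bigr)$, which is also the convention your first paragraph implicitly adopts when the log-normalisers combine into $\tfrac12\log\frac{\det\boldsymbol\Sigma_q}{\det\boldsymbol\Sigma_p}$. So your proof is internally consistent, but you should state the density convention explicitly rather than leave it to a reference, since the two conventions differ by a global factor of $2$; this discrepancy is immaterial for \Cref{thm:loss_is_ELBO_complex}, which uses the KL only up to proportionality in the mean mismatch.
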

\noindent
This confirms that with identical (or proportionally scaled) covariances, the KL depends only on the mean mismatch $(\boldsymbol{\mu}_p-\boldsymbol{\mu}_q)$ and is thus proportional to the MSE term. Consequently, the loss $\cL_t(\theta)$ directly aligns with the KL terms in the ELBO decomposition.

\subsection{Signal to Noise Ratio in High Dimensions}
\label{sec:snr-high-d}
In this section, we define the signal-to-noise
ratio for high-dimensional vectors and note
some properties.

\begin{definition}[SNR]
\label{defn:snr}
Let $s$ (signal) and $\epsilon$ (noise) be random variables in $\mathbb{C}$, and let $f(s, \epsilon) = s + \epsilon$ represent a measurement process. The \textbf{signal-to-noise ratio} is defined as:
\[
\mathsf{SNR}(f) = \frac{\mathrm{Var}[s]}{\mathrm{Var}[\epsilon]},
\]
where for any random variable $x$ taking
values in $\C$, $\mathrm{Var}(x) = \mathrm{Var}(\mathrm{Re}(x)) + \mathrm{Var}(\mathrm{Im}(x))$. 
\end{definition}

For a multivariate random variable, 
we define the signal-to-noise ratio
in a particular direction $v$ below.

\begin{definition}[Multivariate SNR]
Let $\bs, \boldsymbol{\epsilon}$ be two 
random vectors in $\C^d$ denoting the signal and noise respectively. Let 
$f(\bs, \boldsymbol{\epsilon}) = \bs + \boldsymbol{\epsilon}$ be a measurement process.  
Then, for $\bv \in \C^d$, 
we define the signal-to-noise ratio of $f$ in the direction $v$ to be 
\[ \snr_v(f) := \frac{\var(\bs \cdot \bv)}{\var(\boldsymbol{\epsilon} \cdot \bv)}. \]
\end{definition}

In this paper, we often consider the case 
where the SNR is constant across different
frequencies of our signal in the diffusion corruption 
process defined. 
Let $\bF$ denote the fourier transform matrix.
This corresponds to having equal SNR in the directions $\{\bF_i \mid i \in [d]\}$.

An alternative way to define a constant-rate corruption process is one in which the SNR is identical \emph{in all directions}. In \Cref{fact:cov-snr} we show that a measurement process achieves equal SNR in all directions if and only if the covariance of the signal and the noise are proportional to each other.

A consequence of this is that it suffices to simply add covariance-matching noise in the standard pixel space to achieve constant SNR across all frequencies. 
Note that, since we only match the diagonal entries of the covariance, i.e. only 
aim for equal SNR over $\{ \bF_i \mid i \in [d]\}$, this does not apply to our noise schedule in \Cref{sec:technical_explanation}. 

\begin{lemma}[SNR and Covariance]
\label{fact:cov-snr}
Let $\bs$ and $\boldsymbol{\epsilon}$ 
be random vectors taking values in $\C^d$ 
such that for every $\mathbf{v} \in \mathbb{C}^d \setminus \{0\}$ , $0 < \var(\bv \cdot \beps), \var(\bv \cdot \bs) < \infty$. 
Let $f(\bs, \boldsymbol{\epsilon}) := \bs + \boldsymbol{\epsilon}$ be a measurement process, 
and let the $(p,q)$-th entry of the covariance matrix of a complex random variable $\bx$ be given by $[\Cov(\bx)]_{p,q} := \E_{\bx}[\bx_p \bx_q^*]$, where the asterisk denotes the conjugate. 
Then, the following are equivalent: 
\begin{enumerate}
    \item For every $\bv, \bv' \in \C^d \setminus \{0\}$, 
$\snr_{\bv}(f) = \snr_{\bv'}(f)$. 
    \label{item:snr_condition}
    \item For some positive constant $c$, $\Cov(\bs) = c ~\Cov(\boldsymbol{\epsilon})$. 
    \label{item:cov_condition}
\end{enumerate}
\end{lemma}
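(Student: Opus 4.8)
The plan is to rewrite each directional SNR as a ratio of Hermitian quadratic forms in a single vector, and then observe that such a ratio is constant precisely when the two forms are proportional. First I would note that variances and the equal-SNR condition are invariant under shifting $\bs$ and $\beps$ by constants, so we may assume both are centered; then $\var(\bx)=\E|\bx-\E\bx|^2$ becomes $\E|\bx|^2$ for a scalar $\bx\in\C$, and, expanding the bilinear pairing, $\var(\bs\cdot\bv)=\E\bigl|\sum_k\bs_k\bv_k\bigr|^2=\sum_{k,l}\E[\bs_k\bs_l^*]\,\bv_k\overline{\bv_l}=\bar\bv^{\,\dagger}\,\Cov(\bs)\,\bar\bv$, using the definition $[\Cov(\bs)]_{k,l}=\E[\bs_k\bs_l^*]$; likewise $\var(\beps\cdot\bv)=\bar\bv^{\,\dagger}\,\Cov(\beps)\,\bar\bv$. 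The matrices $\Cov(\bs),\Cov(\beps)$ are Hermitian, and the standing hypothesis $0<\var(\bv\cdot\bs),\var(\bv\cdot\beps)<\infty$ for all $\bv\neq0$ makes both of them positive definite. Since conjugation is a bijection of $\C^d\setminus\{0\}$, writing $\bu:=\bar\bv$ gives $\snr_{\bv}(f)=\dfrac{\bu^{\dagger}\Cov(\bs)\,\bu}{\bu^{\dagger}\Cov(\beps)\,\bu}$ as $\bu$ ranges over all nonzero vectors, with the denominator never vanishing.

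Given this, the equivalence of the two stated conditions becomes a linear-algebra fact: for Hermitian positive definite $A,B$, the generalized Rayleigh quotient $\bu^{\dagger}A\bu/\bu^{\dagger}B\bu$ is constant on $\C^d\setminus\{0\}$ iff $A=cB$ for some $c>0$. The ``if'' direction (i.e. $\Cov(\bs)=c\,\Cov(\beps)\Rightarrow$ equal SNR) is immediate, since the quotient equals $c$ at every $\bu$. For the ``only if'' direction, fix any $\bv_0\neq0$ and set $c:=\snr_{\bv_0}(f)>0$ (a ratio of two positive reals); constancy then gives $\bu^{\dagger}\Cov(\bs)\bu=c\,\bu^{\dagger}\Cov(\beps)\bu$ for every $\bu$, i.e. $\bu^{\dagger}M\bu=0$ for all $\bu$, where $M:=\Cov(\bs)-c\,\Cov(\beps)$ is Hermitian. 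A Hermitian matrix whose quadratic form vanishes identically is $0$: evaluating at $\bu=\be_k$, $\bu=\be_k+\be_l$, and $\bu=\be_k+i\,\be_l$ recovers $M_{kk}$, $\re(M_{kl})$, and $\im(M_{kl})$ respectively, forcing $M=0$, hence $\Cov(\bs)=c\,\Cov(\beps)$. (Equivalently, one can diagonalize: with $B=LL^{\dagger}$ the quotient becomes $\bw^{\dagger}(L^{-1}AL^{-\dagger})\bw/\bw^{\dagger}\bw$, a Rayleigh quotient of a Hermitian matrix, which by the spectral theorem is constant iff that matrix equals $cI$.)

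The only step beyond routine bookkeeping is the conjugation that appears because the paper's pairing $\bs\cdot\bv=\sum_k\bs_k\bv_k$ is bilinear rather than the Hermitian inner product, so the relevant quadratic forms are naturally evaluated at $\bar\bv$ rather than $\bv$; this is harmless precisely because $\bv\mapsto\bar\bv$ is a bijection of $\C^d\setminus\{0\}$, but it should be stated so that the reduction to a standard Rayleigh quotient is clean. I would also flag at the outset that $\bs,\beps$ are taken centered (otherwise $\var(\bs\cdot\bv)$ carries an extra $|\E(\bs\cdot\bv)|^2$ term, and the statement should be read with $\Cov$ denoting the centered covariance), and point out where the hypothesis is used: positivity in all directions is exactly what guarantees $\bu^{\dagger}\Cov(\beps)\bu\neq0$ for every $\bu$ (so $\snr_{\bv}(f)$ is well defined) and that the resulting constant $c$ is strictly positive. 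Everything else is a direct computation.
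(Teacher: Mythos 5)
Your proof is correct and follows essentially the same route as the paper's: write the directional variances as Hermitian quadratic forms in $\Cov(\bs)$ and $\Cov(\beps)$, reduce constancy of the SNR to the identical vanishing of the quadratic form of $\Cov(\bs)-c\,\Cov(\beps)$, and force that Hermitian matrix to be zero entrywise using test vectors built from $1$ and $i$ (the paper's fact on equal quadratic forms). Your explicit remarks about centering and about the bilinear pairing $\bs\cdot\bv$ requiring evaluation at $\bar{\bv}$ are clarifications the paper glosses over, but they do not change the argument.
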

\begin{proof}
Since for every $\mathbf{v} \in \mathbb{C}^d \setminus \{0\}$ , the variances of $\mathbf{v} \cdot \mathbf{s}$ and $\mathbf{v} \cdot \boldsymbol{\epsilon}$ are in $(0, \infty)$, all quantities below are strictly positive.

Observe that for any $\bv, \bv' \in \C^d \setminus \{0\}$,
\begin{align}
\label{eqn:snr-cov}
\frac{\snr_{\bv'}(f)}{\snr_{\bv}(f)} 
= \frac{\var(\bv' \cdot \bs)/ \var(\bv' \cdot \beps)}{\var(\bv \cdot \bs)/ \var(\bv \cdot \beps)}
= \frac{\bv'^T \Cov(\bs) (\bv')^*/ \bv'^T \Cov(\beps) (\bv')^*}{\bv^T \Cov(\bs) \bv^*/ \bv^T \Cov(\beps) \bv^*}
= \frac{\bv'^T \Cov(\bs) (\bv')^*}{\bv'^T \Cov(\beps) (\bv')^*} \cdot \left( \frac{\bv^T \Cov(\bs) \bv^*}{\bv^T \Cov(\beps) \bv^*} \right)^{-1}.
\end{align}
To see that \Cref{item:cov_condition} implies \Cref{item:snr_condition}, substitute  
$\Cov(\bs) = c~\Cov(\beps)$ in \Cref{eqn:snr-cov}.

To see that \Cref{item:snr_condition} implies \Cref{item:cov_condition}, 
substitute $\snr_\bv(f) = \snr_{\bv'}(f)$ in \Cref{eqn:snr-cov}.
Rearranging, we see that for any $\bv, \bv' \in \C^d \setminus\{0\}$,
\begin{align}
\label{eqn:snr-cov-substituted}
\frac{\bv^T \Cov(\bs) \bv^*}{\bv^T \Cov(\beps) \bv^*}  = \frac{\bv'^T \Cov(\bs) (\bv')^*}{\bv'^T \Cov(\beps) (\bv')^*} = c,
\end{align}
where $c$ is some constant. 
Rearranging again and ignoring the equation involving $\bv'$, we see that for any $\bv \in \C^d \setminus\{0\}$, $\bv^T \Cov(\bs) (\bv)^* = c~ \bv^T \Cov(\beps) (\bv)^*$. The conclusion then follows from \Cref{fact:eq-quadratic} below. 

\begin{fact}[Equal Quadratic Forms]
\label{fact:eq-quadratic}
    Let $\bA$ and $\bB$ be conjugate-symmetric matrices in $\C^{d \times d}$.
    If $\bv^T \bA \bv^* = \bv^T \bB \bv^*$ for all $\bv \in \C^d \setminus \{0\}$,  then $\bA = \bB$.
\end{fact}
\begin{proof}
Let $\bC = \bA - \bB$. 
Then the condition is equivalent to having 
$\bv^T \bC \bv^* = 0$ for all $\bv$. 
Suppose towards a contradiciton that 
$\bC$ is not the zero matrix. 
Then there is some entry of $\bC$ that is nonzero.
Let this be the entry indexed by $(p, q)$. 
Consider $\bv$ where $\bv_t = 0$ for all $t \in [d] \setminus \{p, q\}$. 
Then, we see
\begin{align}
\label{eqn:C-expansion}
\bv^T \bC \bv^* = |\bv_p|^2 \bC_{p,p} + |\bv_q|^2 \bC_{q,q} + \bv_p \bv_q^* \bC_{p,q} + \bv_p^* \bv_q \bC_{q,p}.
\end{align}
If $p = q$, then set $\bv_p = \bv_q = 1$ to get a contradiction. Hence all the diagonal entries of $\bC$ are $0$. 
If $p \neq q$, then taking into account the fact that $\bC_{p,q} = \bC_{q, p}^*$, we see that \Cref{eqn:C-expansion} reduces to, 
\begin{align*}
\bv^T \bC \bv^* = \bv_p \bv_q^* \bC_{p,q} + \bv_p^* \bv_q \bC_{p,q}^*.
\end{align*}
If $\bC_{p,q}$ has a nonzero real component, then it suffices to set $\bv_p = \bv_q = 1$ to see a contradiction, since 
$\bv_p \bv_q^* \bC_{p,q} + \bv_p^* \bv_q \bC_{p,q}^* = 2 \re(\bC_{p,q})$
If $\bC_{p,q}$ has a nonzero imaginary component, 
then it suffices to set $\bv_p = 1, \bv_q = i$ to see a contradiction, since
$\bv_p \bv_q^* \bC_{p,q} + \bv_p^* \bv_q \bC_{p,q}^* = -i \bC_{p,q} + i \bC_{p,q}^* = 2 \im(\bC_{p,q})$. 
\end{proof}

\end{proof}

\clearpage

\subsection{Counterexample: Breaking the Gaussian Assumption}
\label{app:counterexample}

In this section we give a simple example showing that—even if each transition 
in the forward noising process is Gaussian—the corresponding reverse conditional distribution $q(x_{t-1}\mid x_t)$
need not itself be close (in total variation) to \emph{any} single Gaussian.  
Intuitively, if the marginal \(q(x_t)\) arises from adding noise to a mixture of two well-separated Gaussians, then the reverse conditional remains bimodal and cannot collapse into a unimodal Gaussian.

\begin{proposition}\label{prop:counterexample_TV}
Let $\tau \in (0, 0.5)$ and \(\delta \in (0, \tau^{20})\) be constants. Set
\[
D_0 \;=\; \tfrac12\,\mathcal{N}(-1,\delta^2)\;+\;\tfrac12\,\mathcal{N}(1,\delta^2),
\]
Suppose
\[
x_{t-1}\sim D_0,
\qquad
\varepsilon\sim \mathcal{N}(0,4),
\qquad
x_t = x_{t-1} + \varepsilon.
\]
Then with probability $1-\tau$ over the draw of $x_t$, the reverse kernel \(q(x_{t-1}\mid x_t)\) satisfies
\[
\inf_{\mu\in\mathbb R,\;\sigma>0}
D_{\mathrm{TV}}\bigl(q(x_{t-1}\mid x_t)\,,\,\mathcal{N}(\mu,\sigma^2)\bigr)
\;\ge\;\Omega(\tau^{18})
\]
In other words, no single Gaussian can approximate
\(q(x_{t-1}\mid x_t)\) to an accuracy beyond $O(\tau^{18})$ in total variation. We think of $\tau$ as being a constant, and so no Gaussian can approximate \(q(x_{t-1}\mid x_t)\) to an arbitrary accuracy. 
\end{proposition}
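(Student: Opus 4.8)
## Proof Proposal

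\textbf{Overall approach.} The plan is to show that $q(x_{t-1} \mid x_t)$ is, for most values of $x_t$, itself close to a two-component mixture of well-separated narrow Gaussians, and that such a bimodal distribution cannot be approximated by any single Gaussian in total variation. Since the forward kernel is $q(x_t \mid x_{t-1}) = \mathcal{N}(x_t; x_{t-1}, 4)$ and the prior $D_0$ is a mixture $\tfrac12 \mathcal{N}(-1,\delta^2) + \tfrac12\mathcal{N}(1,\delta^2)$, Bayes' rule gives
\[
q(x_{t-1} \mid x_t) \;=\; w_{-}(x_t)\,\mathcal{N}\bigl(x_{t-1}; \mu_{-}, \delta_\ast^2\bigr) \;+\; w_{+}(x_t)\,\mathcal{N}\bigl(x_{t-1}; \mu_{+}, \delta_\ast^2\bigr),
\]
where, because each component of $D_0$ is Gaussian and the noise is Gaussian, the two posterior components are again Gaussians with a common small variance $\delta_\ast^2 = \delta^2 \cdot 4/(\delta^2 + 4) \approx \delta^2$, means $\mu_{\pm}$ close to $\pm 1$ (shrunk slightly toward $x_t$), and weights $w_{\pm}(x_t)$ determined by the Gaussian likelihoods. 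The means $\mu_+ - \mu_-$ stay within a constant factor of $2$.

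\textbf{Key steps.}
\begin{enumerate}
\item \emph{Control the mixing weights on a good event.} Define the good event $E$ to be the set of $x_t$ for which both $w_{-}(x_t)$ and $w_{+}(x_t)$ are bounded below by some quantity polynomial in $\tau$ (say $\ge \tau^{c}$ for an explicit constant $c$); this is where the $\delta < \tau^{20}$ hypothesis is used, since $\delta$ being tiny makes the likelihood ratio $w_+/w_-$ extremely sensitive to $x_t$, but a band of width $\Theta(\tau)$ around $x_t = 0$ still keeps both weights non-negligible. Show $\Pr_{x_t}[E] \ge 1 - \tau$ by a direct estimate on the marginal density of $x_t$ (which is itself $\tfrac12\mathcal{N}(-1,4+\delta^2) + \tfrac12\mathcal{N}(1, 4+\delta^2)$, a smooth density bounded below on any bounded interval).
\item \emph{Bimodality is incompatible with Gaussianity.} For $x_t \in E$, argue that the mixture $w_- \mathcal{N}(\mu_-,\delta_\ast^2) + w_+\mathcal{N}(\mu_+,\delta_\ast^2)$ with $|\mu_+ - \mu_-| = \Omega(1)$ and $\delta_\ast \le \delta \ll \tau$ is $\Omega(\tau^{18})$-far in total variation from every Gaussian $\mathcal{N}(\mu,\sigma^2)$. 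Do a case split on $\sigma$: if $\sigma$ is small compared to the gap, a single Gaussian can have substantial mass near at most one of the two modes, so it misses $\Omega(\min(w_-,w_+)) = \Omega(\tau^c)$ mass near the other; if $\sigma$ is comparable to or larger than the gap, then the mixture — which concentrates essentially all its mass in two intervals of total width $O(\delta)$ — and the Gaussian — which spreads mass over width $\Omega(1)$ — disagree by a constant. Either way one gets a TV lower bound of the claimed polynomial-in-$\tau$ order.
\item \emph{Assemble.} Combine Steps 1 and 2: with probability $\ge 1-\tau$ over $x_t$, the reverse kernel is $\Omega(\tau^{18})$-far in TV from any Gaussian. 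Since $\tau$ is a fixed constant, this proves no Gaussian achieves arbitrary accuracy.
\end{enumerate}

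\textbf{Main obstacle.} The delicate part is Step 1 / the quantitative bookkeeping in Step 2: one has to track how small the posterior weights $w_\pm$ can get on the good event and how this propagates into the TV lower bound, making sure the exponents (the $\tau^{20}$ in the hypothesis, the $\tau^{18}$ in the conclusion) are mutually consistent. In particular, the likelihood ratio $w_+/w_-$ behaves like $\exp\bigl(\Theta(x_t/\sigma_{\text{post}}^2)\bigr)$, and since the relevant variance scale in the \emph{weights} is the forward-noise variance $4$ (not $\delta^2$), the weights are actually well-behaved on an $O(1)$ band — so the real subtlety is instead ensuring the two posterior \emph{component variances} $\delta_\ast^2$ are genuinely negligible relative to the gap uniformly in $x_t$, and that the near-degenerate case $\sigma \approx |\mu_+-\mu_-|$ in Step 2 still yields a constant (not merely $\poly(\tau)$) discrepancy, which then only has to be weakened to $\Omega(\tau^{18})$ to cover the small-$\sigma$ regime uniformly. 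Carefully choosing the width of the good band (as a function of $\tau$) to simultaneously satisfy $\Pr[E]\ge 1-\tau$ and keep $\min(w_-,w_+)$ large enough is the crux.
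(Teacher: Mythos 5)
Your proposal is correct and follows essentially the same route as the paper: Bayes' rule turns the posterior into a reweighted two-component mixture of narrow Gaussians separated by an $\Omega(1)$ gap, you show both weights are at least $\mathrm{poly}(\tau)$ on an event of probability $1-\tau$ over $x_t$, and you conclude by the same case split on $\sigma$ (too narrow to cover both modes versus too flat to match the concentrated mixture). One small correction, which your ``main obstacle'' paragraph already anticipates: the good event is not a width-$\Theta(\tau)$ band around $x_t=0$ controlled by $\delta$, but the region $|x_t|\le 1+O(\sqrt{\log(1/\tau)})$ (probability $1-\tau$ under the marginal), on which the weight ratio---governed by the forward-noise variance $4$---gives $\min(w_-,w_+)=\Theta(\tau^{18})$, while the hypothesis $\delta<\tau^{20}$ is used only to make the $O(\sqrt{\delta})$ and Gaussian-tail error terms negligible against $\tau^{18}$, exactly as in the paper's Step 2 and Step 3.
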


\paragraph{Proof idea.}
We will prove Proposition~\ref{prop:counterexample_TV} in three stages:
\begin{enumerate}
  \item \textbf{Compute \(q(x_{t-1}\mid x_t)\) explicitly.} 
    By Bayes’ rule, we show it is proportional to the original mixture
    times a shifted Gaussian,
    \[
      q(x_{t-1}=x\mid x_t= y)
      \;\propto\;
      D_0(x)
      \;\cdot\;
      \exp \left( -\frac{(x-y)^2}{8} \right),
    \]
    This exhibits two well-separated modes and symmetric, sub-Gaussian tails.
  \item \textbf{Show sub-Gaussian decay around each mode.}
    With probability $1-\tau$ for $\tau < 0.8$, $y \in [-1 - 4 \sqrt{ \log(1/\tau)}, 1+4 \sqrt{\log(1/\tau)}]$, which will imply that in the high probability regions, $\exp \left( -\frac{(x-y)^2}{8} \right)$ remains bounded. 
  \item \textbf{Lower-bound the total variation.}
    For any candidate \(\mathcal{N}(\mu,\tau^2)\), we split the argument into two cases, either
    \(\tau^2\le\delta\) (i.e. the Gaussian is too narrow to cover one of the peaks) or \(\tau^2>\delta\) (i.e. the Gaussian is too flat to have substantial overlap with either piece)
    and apply standard concentration/anticoncentration bounds to
    obtain a gap of at least \(\Omega(\tau^{18})\).
\end{enumerate}

\begin{proof}
\textbf{Step 1: Exact form via Bayes’ Rule.}
Bayes’ rule asserts:
\[
\underbrace{p(x\mid y)}_{\text{posterior}}
= \frac{\underbrace{p(y\mid x)}_{\text{likelihood}}\,\underbrace{p(x)}_{\text{prior}}}{\underbrace{p(y)}_{\text{evidence}}}.
\]
Here:
\begin{itemize}
  \item \(p(x)=D_0(x)\) is the prior density of \(x_{t-1}\).
  \item \(p(y\mid x)=q(x_t=y\mid x_{t-1}=x)=\mathcal{N}(y-x;0,4)\) is the forward-noise likelihood.
  \item \(p(y)=D_1(y)=\int p(y\mid x)\,p(x)\,dx\) is the marginal of \(x_t\).
\end{itemize}

Since convolution of Gaussians yields another Gaussian,
\(
D_1 = D_0 * \mathcal{N}(0,4)
= \tfrac12\,\mathcal{N}(-1,\delta^2+4)
+\tfrac12\,\mathcal{N}(1,\delta^2+4).
\)
Thus for any \(y\):
\[
q(x_{t-1}=x\mid x_t=y)
= \frac{\exp\bigl(-\tfrac{(y-x)^2}{8}\bigr)
\bigl[\tfrac12\exp\bigl(-\tfrac{(x+1)^2}{2\delta^2}\bigr)
+\tfrac12\exp\bigl(-\tfrac{(x-1)^2}{2\delta^2}\bigr)\bigr]}
{\tfrac12\exp\bigl(-\tfrac{(y+1)^2}{2(4+\delta^2)}\bigr)
+\tfrac12\exp\bigl(-\tfrac{(y-1)^2}{2(4+\delta^2)}\bigr)}.
\]

Since we want to view this as a distribution over $x$ for a fixed value of $y$, we may drop the denominator (since it is not a function of $x$), to see that  
\[
q(x_{t-1}=x\mid x_t=y)
\propto \exp\bigl(-\tfrac{(y-x)^2}{8}\bigr)
D_0(x)
\]

Next, we would like to show that with reasonable probability over $x$ and $y$, $A(x,y) :=\exp\bigl(-\tfrac{(y-x)^2}{8}\bigr)$ is bounded between two constants for $x, y$ lying in the high probability region.

\textbf{Step 2: Uniform control on \(A(x,y)\).}
For any $0.5 > \tau > 0$, define
\[
I = [-1-8\sqrt{\log(1/\tau)},\,1+8\sqrt{\log(1/\tau)}].
\]
Since $\delta < 2$, by standard Gaussian tail bounds,
\(\Pr_{y\sim D_1}[y\notin I]\le\tau\)
.
We now check that for $y \in I$; $x \in [-1-4\delta \sqrt{\log(1/\tau)}, -1+4\delta \sqrt{\log(1/\tau)}] \cup [1-4\delta \sqrt{\log(1/\tau)}, 1+4\delta \sqrt{\log(1/\tau)}]$ (the high probability region for $D_0$) and $\delta \in (0,1)$, 
\[\Theta\left(\tau^{4^2 \cdot 3^2/8}\right) < \exp\left(-\frac{(2+4\cdot 3\sqrt{\log(1/\tau)})^2}{8}\right) < |A(x,y)| < 1\]
This means that one of the modes is scaled down by at least $\Theta(\tau^{18})$.

\textbf{Step 3: Lower-bounding total variation.}
Let \(N=\mathcal{N}(\mu,\sigma^2)\) be arbitrary.  We consider two regimes. First, observe that we may think of $q(x_{t-1} = x | x_t = y) \propto A(x,y)\cdot D_0$, where $A(x,y) \in (\Theta(\tau^{18}), 1)$. $A$ will serve as a re-weighting of the mixture components; in the extreme case, the ratio of the mass of the smaller component to the mass of the larger component is $\Theta(\tau^{18})$. 

\begin{itemize}
\item[\emph{(a)}] If \(\sigma^2\le\delta\):  \(N\) is too narrow.  Align its mean with the mode having a larger mass, since this is the maximum overlap we can achieve; say WLOG this is \(\mu=1\).
Outside a window of width \(O(\delta)\) around \(x=-1\), the overlap is
\(\ge \Theta(\tau^{18}) - \exp(-\Omega(1)/\delta^2)\) (mass of the smaller component under $q$ minus overlap of the tail of $N$).

\item[\emph{(b)}] If \(\sigma^2>\delta\):  \(N\) is too flat.  Consider the tail
beyond \(10\delta\) of the peak at +1, when viewed as a mixture component of $q$, and when $\delta$ is sufficiently small:
\[
q(|x-1|\ge10\delta) \ge (1-\Theta(\tau^{18})) \cdot  \left(1 - \exp(-10^2/2) \right)
\]
However, the mass that $N$ places in this region is small:
\[
N(|x-1|\ge10\delta) \le O(\sqrt{\delta})
\]
Hence \(D_{TV}(q,N)\ge 0.9 - \Theta(\tau^{18}) - O(\sqrt{\delta})\).
\end{itemize}

In both cases we obtain
\[
\inf_{\mu,\tau}D_{TV}\bigl(q(x_{t-1}\mid x_t),\mathcal{N}(\mu,\tau^2)\bigr)
\ge \Theta(\tau^{18}) - O(\sqrt{\delta}).
\]
Which, for a sufficiently small choice of $\delta$, completes the proof. %
\end{proof}

\clearpage

\section{Additional experimental details and results}
\label{app:Additional experimental details and results}

In this appendix we provide further details and results on the four experimental analyses in \Cref{sec:experiments} (\Cref{app:Analysis1} to \Cref{app:Analysis4}),  further describe the implementation of figures in the main text (\Cref{app:Further details on other figures}), and present additional results beyond the main text (\Cref{app:Further experimental illustrations and results}).
We also briefly present the limitations of our work.

\paragraph{Limitations. }
\label{app:limitations}

Our work has three major limitations: 1) While we saw marked improvements for higher resolution experiments, our largest imaging dataset is of resolution $128 \times 128$, well beyond state-of-the-art generative models. 
2) While the FlippedSNR forward process did not succeed in numerous experiments, underlining the importance of low-to-high generation, we cannot prove that such a forward process cannot be learned.
3) Even though many other modalities have the Fourier power law property, our real-world experiments investigate standard imaging benchmarks only.

\subsection{Analysis 1: The reverse process learns to mirror its forward}
\label{app:Analysis1}

\paragraph{On Figures \ref{fig:snr_heatmaps}, \ref{fig:heatmap_celeba} and \ref{fig:heatmap_lsun}.}

\Cref{fig:snr_heatmaps} illustrates the SNR of the variance of the Fourier coefficients of the forward and reverse processes for DDPM and EqualSNR on the CIFAR10 dataset \cite{krizhevsky2009learning}.
Frequencies are sorted by Manhattan distance to the corner of Fourier space (ascending order), resulting in low frequencies arranged at the top, and high frequencies at the bottom of the heatmap, and are averaged across channels.
Figures show that DDPM noises high-frequencies at very early time steps, while EqualSNR noises all frequencies at a constant rate.
The forward process shows how the Fourier frequency values are corrupted in time, and the reverse process shows how a trained model denoises the corrupted frequencies.
We observe that both models learn to denoise the Fourier coefficients at the same rate as the forward process dictates.
Every plot shows the contour curves on 10 dB, 0 dB, -10 dB, -20 dB and -30 dB to facilitate visualize the trend of corruption across frequencies.

To draw the forward process figures, we follow equation \eqref{eq:ddpm_fourier} and compute SNR for all timesteps $t$, given a schedule $\overline{\alpha}_t$.
Frequencies are sorted from lowest (top) to highest (down); time runs between the data distribution ($t=0$) and the base distribution ($t=T$) with $T=1000$ steps. 
Explicitly, we sample an image and white noise, compute their Fourier representations and combine them together following equation \eqref{eq:ddpm_fourier}.
Noise is weighted with the schedule in both cases, and normalized with $\bC^{1/2}$ in the case of EqualSNR as in \cref{algo:training}.
After computing the variances of the Fourier coefficients across a dataset, values are converted to decibels following the formula $\text{dB} = 10\log_{10}(\cdot)$ and displayed in a heatmap.
Contour curves are running averages over the values that approximate best the dB threshold across frequencies.

To draw the reverse process figures, we start with a corrupted image at time $t=T$ and do a step by step denoising process until $t=1$ following \Cref{algo:sampling}.
Our parametrized UNet predicts $\hat{\by}_t$ after Fourier transformation, and from it we estimate the noise $\hat{\boldsymbol{\epsilon}}_0$ at time $t=0$:
\begin{equation}
    \hat{\boldsymbol{\epsilon}}_0 = \frac{1}{\sqrt{\alpha}_t}(\by_t - \sqrt{\overline{\alpha}_t} \by_0).
\end{equation}
We compute the SNR by weighting the predicted signal and noise appropriately, as signal and noise are now distinct:
\begin{equation}
    \mathsf{SNR}(\by_{t-1}) = \frac{\sqrt{\overline{\alpha}_{t-1}} \mathrm{Var}[\hat{\by}_0]}{{\sqrt{1-\overline{\alpha}_{t-1}}}\mathrm{Var}[\hat{\boldsymbol{\epsilon}}_0]}
\end{equation}  
We compute the variances of the coefficients across the whole dataset and convert the SNR values to decibels same as in the forward process for representation purposes.

Figures \ref{fig:heatmap_celeba} and \ref{fig:heatmap_lsun} illustrate the forward process on the CelebA and LSUN datasets, computed on 64x64 and 128x128 resolutions, respectively.  % commented out due to compilation issues of citation in appendix: \cite{liu2015faceattributes} \cite{yu15lsun}
The plotted figures retain the same schedule as the one from \Cref{fig:snr_heatmaps} and we notice that the high frequencies are noised excessively very early on.
This observation agrees with the analysis from \cite{hoogeboom2023simple} that requires the schedules to be rescaled depending on the resolution of the images.
Our proposed visualization provides a useful tool to study the correct planning of the forward process.

\paragraph{On \Cref{fig:fwd-bwd-high-low}. }
This figure presents the forward and backward pass trajectory of our diffusion models trained on standard imaging benchmarks. 
For the forward pass, we sample using the push-forward distributions \Cref{eq:ddpm_yt} for DDPM, and $\by_t = \sqrt{\overline \alpha_t} \by_0 + \sqrt{1 - \overline \alpha_t} \boldsymbol{\epsilon}_\Sigma$ (see \Cref{prop:noise_variance}) for EqualSNR and FlippedSNR, respectively.
For the backward, we draw trajectories using standard DDIM for the model trained with a DDPM forward process, and using \Cref{algo:sampling} for EqualSNR and FlippedSNR, respectively.

The low- and high-pass filtered images are computed by setting a subset of the Fourier space coefficients, specifically those within (low-pass) or outside (high-pass) a certain distance to the center of the Fourier space representation, to 0 while using identity for all others. This is achieved by masking the Fourier space representation of the (noisy) images (see \Cref{fig:filters} for an illustration).

\begin{figure}[h]
    \centering
    \begin{minipage}{0.3\textwidth}
        \centering
        \includegraphics[width=\textwidth]{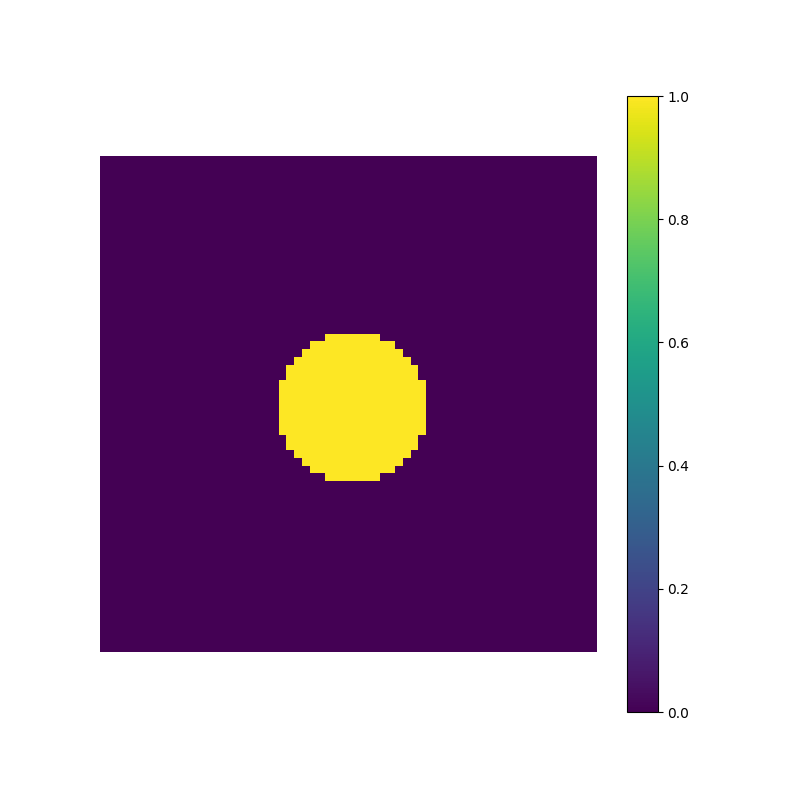}
    \end{minipage}
    \begin{minipage}{0.3\textwidth}
        \centering
        \includegraphics[width=\textwidth]{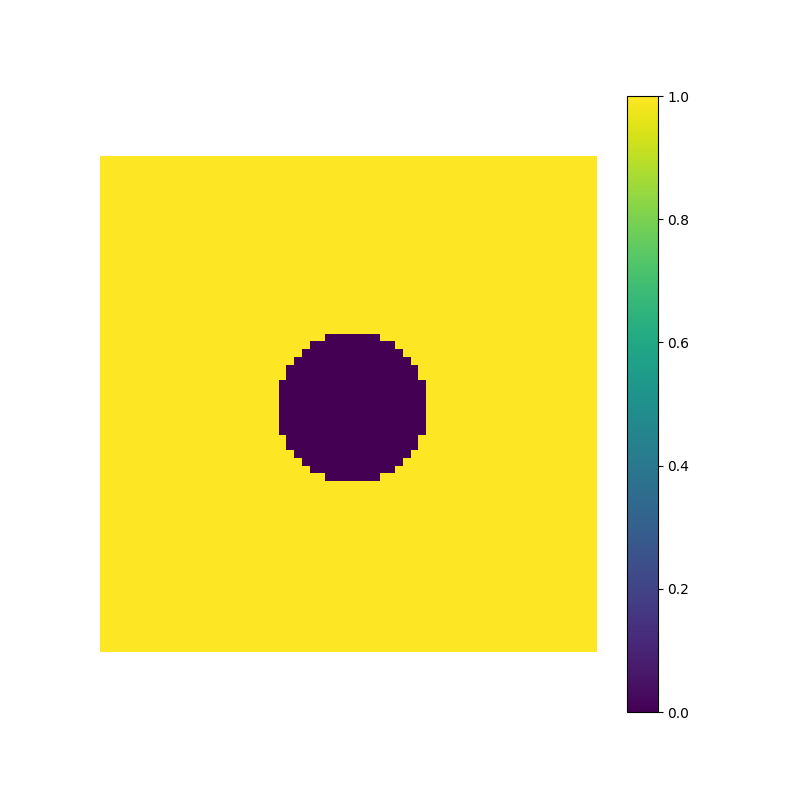}
    \end{minipage}
    \caption{Visualisation of the [left] low- and [right] high-pass filter masks.}
    \label{fig:filters}
\end{figure}

Figures \ref{fig:fwd_ddpm_app} to \ref{fig:bwd_flippedsnr_app} present further examples of the forward and backward process of DDPM, EqualSNR and FlippedSNR for the complete time interval $t=0$ to $t=T$, and also show the phase of the Fourier space representation as an additional row.

\subsection{Analysis 2: Faster noising degrades high-frequency generation in DDPM; EqualSNR overcomes this}

\paragraph{High Frequency/Low Frequency Classifier and Statistical Testing Procedure. }
\label{app:subsec_hf_lf_classifier}
The main idea is to differentiate between real and generated data by focusing on the high-frequency magnitudes of the images. This difference can be quantified using a classifier trained to distinguish these features. High classifier accuracy indicates low generation quality of the high frequencies. Following \citet{dzanic2020fourier}, instead of using all or a portion of the high-frequency magnitudes as classifier inputs, we fit a regression line to the log-magnitudes and use the slope and intercept as input parameters. Specifically, we find $b$ and $a$ such that: 
\begin{align*} 
m_i = a \left(\frac{f_i}{\tau}\right)^b, 
\end{align*} 
where $m_i$ is the magnitude of the $i$-th frequency and $f_i=\frac{i}{F}$, with $F$ being the number of frequencies. Furthermore, $\tau$ is the proportion of frequencies considered, such that $f_i \in [\tau,1]$. For each image, we obtain parameters $a$ and $b$, which are used as features for a logistic regression model with targets indicating whether the image is real or generated.
% commented out due to compilation issues of citation in appendix: \cite{kim2021classification}
To quantify the significance of the obtained accuracy, we split the datasets into equal-sized chunks. For example, with 50,000 generated and real images, we create 100 splits, each containing 500 images per class. We then use each batch to compute a $p$-value via permutation testing that tests whether the accuracy is close to a random chance i.e. $0.5$. If the $p$-value is less than the chosen significance  (in this case $0.05$ or $0.01$), we can reject the null hypothesis that the two distributions are the same. Then, we count how many out of the 100 independent tests are correctly rejected, i.e. how many are the true positives. A large number of true positives indicates that we can differentiate the two classes with a high probability.

Similarly, we can construct a classifier for the low-frequency components. \Cref{tab:classifier-cifar10-low} shows that the mean accuracies for both cases are close to chance. The number of true positives is similar between them, with standard DDPM performing slightly better.

\begin{table*}[h]
\centering
\caption{Low frequency generation of DDPM and EqualSNR: Classifier mean accuracy from 100 runs and the corresponding number of true positives at significance levels $0.05$ and $0.01$ (correctly rejected statistical two-sample tests based on the classifier's performance).}
\label{tab:classifier-cifar10-low}
\begin{tabular}{l c c c c c c}
\toprule
& \multicolumn{3}{c}{DDPM} & \multicolumn{3}{c}{EqualSNR} \\
\cmidrule(lr){2-4} \cmidrule(lr){5-7}
Freq. band & Mean Acc. &   \# TP at $0.05$ & \# TP at $0.01$  & Mean Acc. &  \# TP at $0.05$ & \# TP at $0.01$ \\
\midrule
5\% & 0.492 & 3\% & 2\% & 0.512  & 14\% &2\%\\
15\% & 0.501 & 5\% & 2\%  & 0.509  & 7\% &2\%\\
25\% & 0.508 & 13\% & 4\% & 0.511  & 13\% & 5\% \\
\bottomrule
\end{tabular}
\end{table*}

\subsection{Analysis 3: When high-frequency information matters: a synthetic study}
\paragraph{The Dots Experiment.} 
We sampled 50,000 images with 48 to 50 randomly placed white pixels (see \Cref{fig:dots-examples}) and conducted experiments using 10,000 training steps with a cosine noise schedule, utilizing the same U-Net architecture as in the CIFAR10 experiment. In this experiment, we compare EqualSNR and standard DDPM calibrated to EqualSNR. In the main paper, we argued that EqualSNR generates samples with higher pixel intensity compared to DDPM. Examples of this issue can be seen in \Cref{fig:dots-examples}.

\begin{figure}
    \centering
    \includegraphics[width=0.7\linewidth]{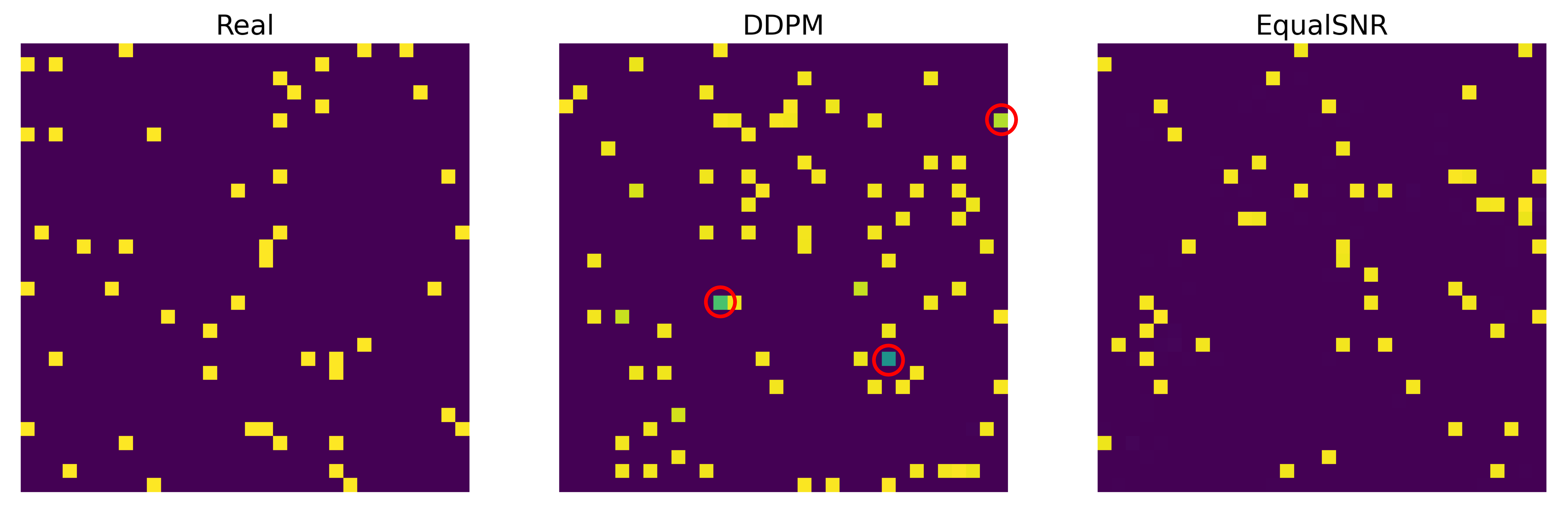}
    \caption{Dots Examples in the DDPM and EqualSNR case. 
    Yellow corresponds to `white pixels' (highest intensity) on this colour scale.
    The lower intensity images are marked by a red circle.}
    \label{fig:dots-examples}
\end{figure}

\subsection{Analysis 4: EqualSNR performs on par with DDPM on imaging benchmarks.}
\label{app:Analysis4}

In our implementation we used a UNet architecture derived from HuggingFace Diffusers' library to model our denoiser.  % commented out due to compilation issues in appendix: \cite{von-platen-etal-2022-diffusers}
We feed the UNet input images and train our models by parametrizing the loss with the output of the UNet corresponding to images of the data distribution ($t=0$) in the case of DDPM.
For EqualSNR we convert the output of the UNet to Fourier domain and use its values as input into the loss $\mathcal{L}_\stept = \|\B{C}^{-1/2} (\B{y}_0 - \hat{\B{y}}_0)\|^2$ (see \Cref{algo:training}) against the input image in Fourier domain.
Note, that the loss using Fourier coefficients is a novel parametrization providing competitive performance.
We train all models for 800k steps using a cosine schedule and a learning rate of 0.0002.
We report results on CIFAR10 (\cite{krizhevsky2009learning}; $32 \times 32$), CelebA ($64 \times 64$), and LSUN Church ($128 \times 128$) datasets.   % commented out due to compilation issues in appendix: \cite{liu2015faceattributes}  \cite{yu15lsun}; 

\Cref{tab:results_all-schedules} shows Clean-FID scores \cite{parmar2021cleanfid}, where
the values we provide are on par with values provided in their repository from other models they evaluate.
We chose to present this implementation in the main paper because it fixes multiple issues when comparing FID scores used by the community.
We further provide in \Cref{tab:results_all-schedules-torcheval} \texttt{PyTorch-FID} implementation presenting values comparable to DDIM scores \cite{song2020denoising}.
These values are significantly lower, highlighting inconsistencies between package implementations.

We tried feeding Fourier values directly into the input of the UNet but this parametrization did not seem to provide competitive results.

\begin{table*}[t]
    \centering
    \caption{
    EqualSNR performs on par with DDPM on standard imaging benchmarks. We measure performance using torcheval FID ($\downarrow$).
    }
    \label{tab:results_all-schedules-torcheval}
\scriptsize
    \begin{tabular}{cc|cccc|cccc|cccc}
        \toprule
        & &\multicolumn{4}{c|}{\textbf{CIFAR10 (32 $\times$ 32)}} & \multicolumn{4}{c|}{\textbf{CelebA (64 $\times$ 64)}} & \multicolumn{4}{c}{\textbf{LSUN Church (128 $\times$ 128)}} \\
        & \hfill $T$ & 50 & 100 & 200 & 1000& 50 & 100 & 200 & 1000& 50 & 100 & 200 & 1000  \\
        \midrule
        & DDPM schedule & 7.7 & 6.1 &  5.35 & 4.81  & 4.93 &  2.60 & 2.05 & 1.87  &28.72 & 20.23 & 17.99 & 16.63    \\    %
        & EqualSNR (calibrated) schedule & 6.5 & 5.1 & 4.57 & 4.27& 3.48 & 2.56 & 2.36 & 2.19 & 17.67 & 15.4  & 14.43 & 13.61\\  %
        \midrule
        & DDPM (calibrated) schedule &  9.95 & 6.17 & 4.73 & 3.71  & 10.93 &4.00 & 2.22  & 1.86 & 56.88 & 27.9  & 18.61 & 15.3 \\
           & EqualSNR schedule & 8.36 & 5.65 & 4.68 & 3.9   & 5.83 & 3.71 & 3.08  & 2.71 & 27.55 & 21.60  & 18.99 & 17.27  \\ %
        \bottomrule
    \end{tabular}
\end{table*}

\subsection{Further details on other figures}
\label{app:Further details on other figures}

We here provide further details on computing the figures in the main text and corresponding extended figures in the Appendix which have not yet been discussed above.

\paragraph{On Figures \ref{fig:fig1} and \ref{fig:flipped-snr-gif}. }
We acknowledge \cite{dieleman2024spectral} for inspiration on these figures. The [center] and [right] figures are GIFs  best viewed in Adobe Reader. 

[Left].
We present four data modalities and corresponding datasets exhibiting the Fourier power law property: 
images (CIFAR10 \cite{krizhevsky2009learning}), videos (Kinetic600 \cite{kay2017kinetics}), audio (GTZAN Music Speech \cite{gtanz1999music}) and Cryo-EM derived protein density maps (EMDB \cite{wwpdb2024emdb}).
The Fourier transform is applied to a different number of dimensions (D) for each of these datasets: 2D for images, 3D (spatial and time) for videos, 1D (time) for audio, and 3D for protein density maps.
As the protein density maps are of different size between samples, we interpolate them to fixed-size tensors of size $[200,200,200]$, equal for all dimensions.
Audio and video samples are trimmed across the time dimension to 2 seconds (44100 values) and 1 second (30 frames), respectively, such that all items in a dataset are of equal dimension.
We compute the signal variance on (a subset of) the Fourier-transformed dataset. 
We plot running averages and running standard deviations, which illustrates the overall trend.
Frequencies are sorted by Manhattan distance to the center of Fourier space (ascending order, i.e. low to high frequency).

[Center] and [Right].
We plot the signal and noise variance at different timesteps of the forward process for DDPM and EqualSNR, respectively.
The signal variance is computed over the entire CIFAR10 dataset.
We refer to \Cref{fig:snr_heatmaps} for details on the calculation.

\paragraph{On \Cref{fig:gauss_assumption}. }
Further to our explanation in \Cref{sec:technical_explanation} we provide the following details.
While the distributions $q(\by_t)$ and $q(\by_{t-1})$ are in general intractable, we can approximate them as Monte Carlo estimates of $q(\bx_t) = \mathbb{E}_{\by_0 \sim q(\by_0)} q(\by_t | \by_0)$ where $q(\by_t | \by_0)$ is the push-forward distribution \Cref{eq:ddpm_yt} (and similarly for $q(\by_{t-1})$, and $q(\by_0)$ is the Fourier-transformed data distribution. 
We use 5000 samples to estimate $q(\by_t)$ and $q(\by_{t-1})$.
We plot histogrammes of these estimates (green, red) and the Gaussian $q(\by_{t}|\by_{t-1})$ (blue) on the right.
Since $q(\by_{t}|\by_{t-1})$ is datapoint-specific, we arbitrarily choose the 70\%-quantile of $q(\by_{t-1})$ as the mean of $q(\by_{t}|\by_{t-1})$, a representative point.
The smooth posterior distribution $q(\by_{t-1}|\by_t)$ are computed using Kernel Density Estimation (KDE) with the same bandwidth hyperparameter across all frequencies and timesteps.
Figures \ref{fig:gauss_app_start} to \ref{fig:gauss_app_end} present further timesteps and frequencies, and additional plots visualising the marginals estimated via Kernel Density Estimation, and the ratio of the marginals with the Gaussian $q(\by_t | \by_{t-1})$ overlayed, for DDPM and EqualSNR.

\subsection{Further experimental illustrations and results}
\label{app:Further experimental illustrations and results}

In this section we provide additional experimental results augmenting those presented in the main text.

\begin{figure}[p]
    \centering 
    
    \begin{minipage}{0.2\textwidth}
        \centering
        \includegraphics[width=\linewidth]{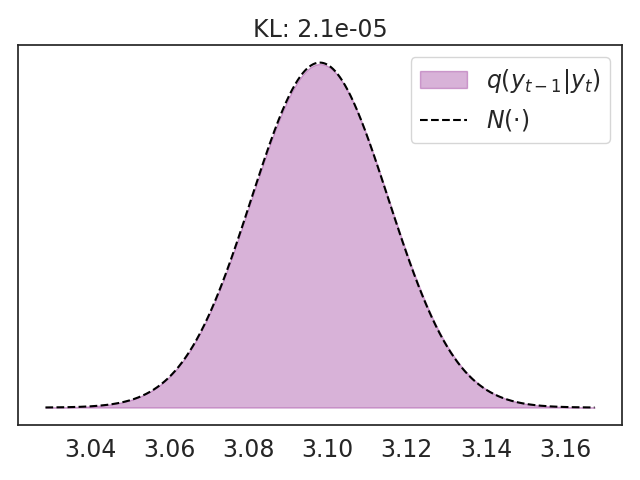}
    \end{minipage}
    \hfill
    \begin{minipage}{0.2\textwidth}
        \centering
        \includegraphics[width=\linewidth]{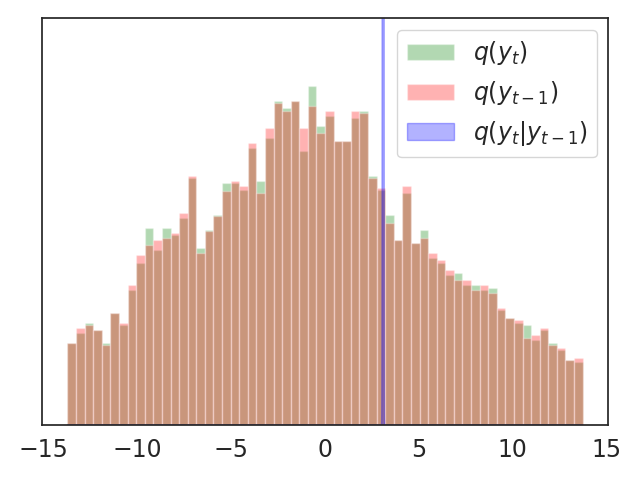}
    \end{minipage}
    \hfill
    \begin{minipage}{0.2\textwidth}
        \centering
        \includegraphics[width=\linewidth]{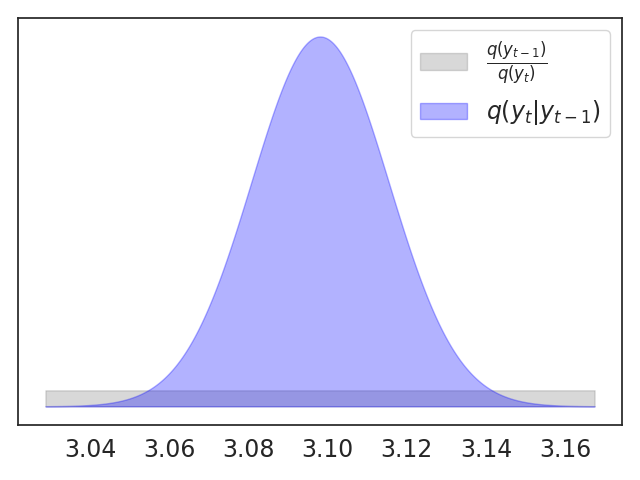}
    \end{minipage}
    \hfill
    \begin{minipage}{0.2\textwidth}
        \centering
        \includegraphics[width=\linewidth]{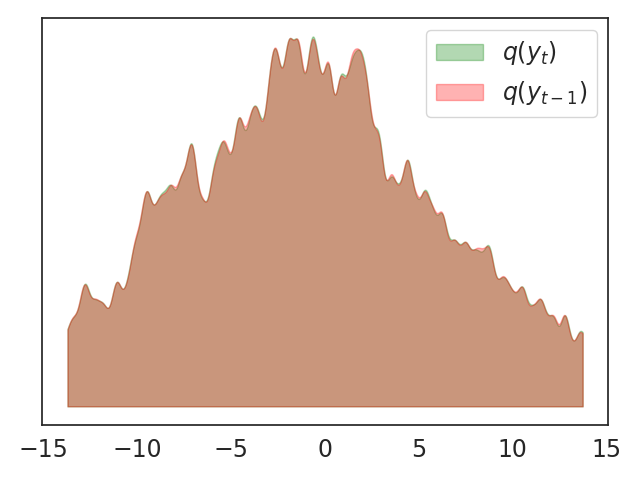}
    \end{minipage}
    \caption*{Frequency 1 (low), $t=1$}

    \vspace{5pt} %

    \begin{minipage}{0.2\textwidth}
        \centering
        \includegraphics[width=\linewidth]{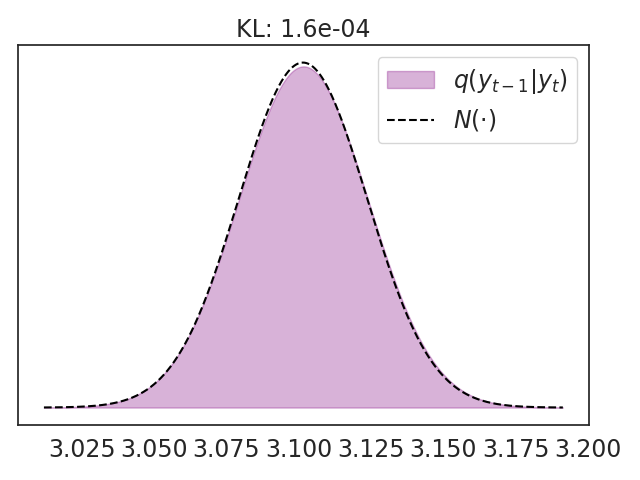}
    \end{minipage}
    \hfill
    \begin{minipage}{0.2\textwidth}
        \centering
        \includegraphics[width=\linewidth]{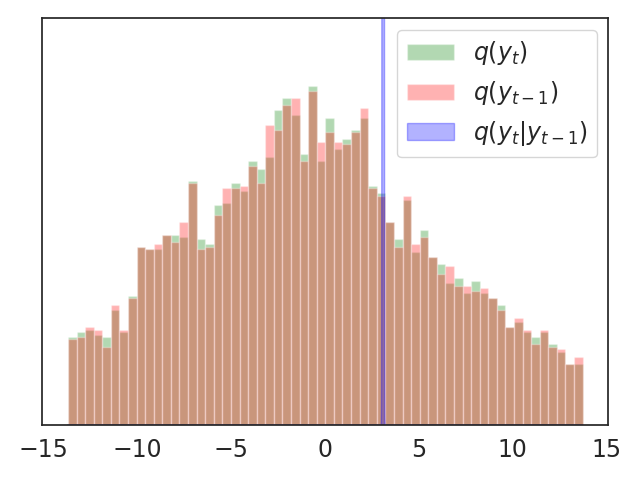}
    \end{minipage}
    \hfill
    \begin{minipage}{0.2\textwidth}
        \centering
        \includegraphics[width=\linewidth]{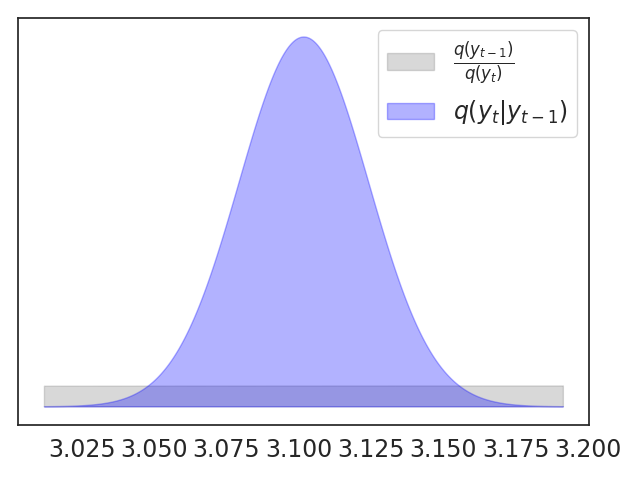}
    \end{minipage}
    \hfill
    \begin{minipage}{0.2\textwidth}
        \centering
        \includegraphics[width=\linewidth]{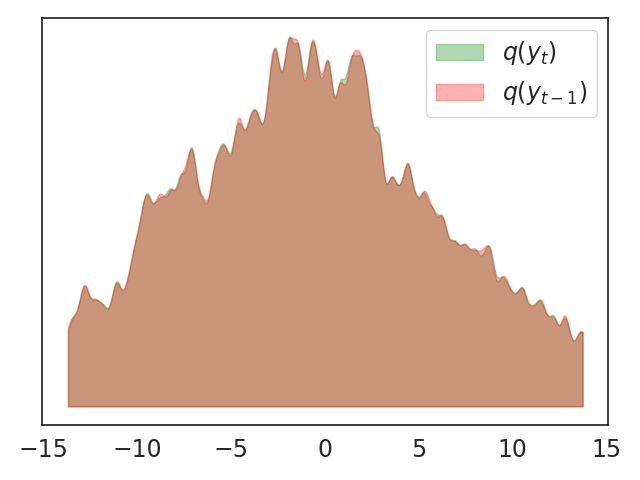}
    \end{minipage}
    \caption*{Frequency 1 (low), $t=2$}

    \vspace{5pt} %

    \begin{minipage}{0.2\textwidth}
        \centering
        \includegraphics[width=\linewidth]{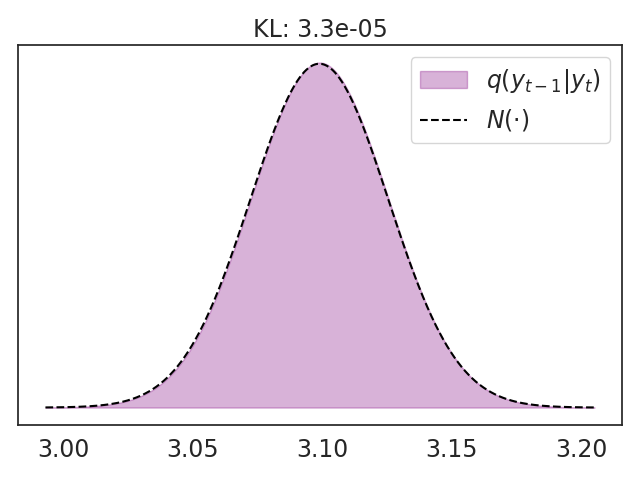}
    \end{minipage}
    \hfill
    \begin{minipage}{0.2\textwidth}
        \centering
        \includegraphics[width=\linewidth]{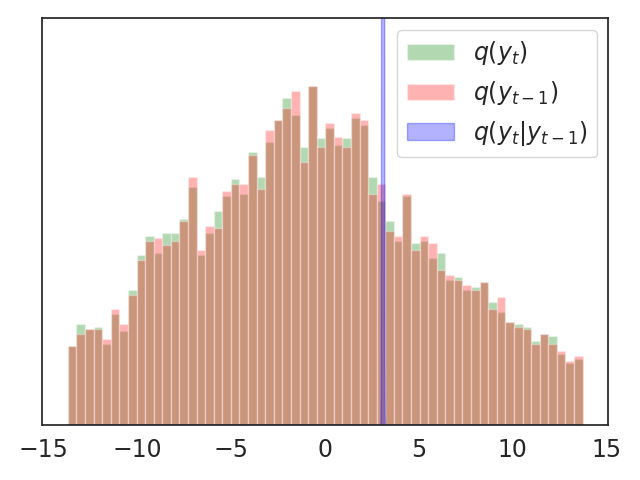}
    \end{minipage}
    \hfill
    \begin{minipage}{0.2\textwidth}
        \centering
        \includegraphics[width=\linewidth]{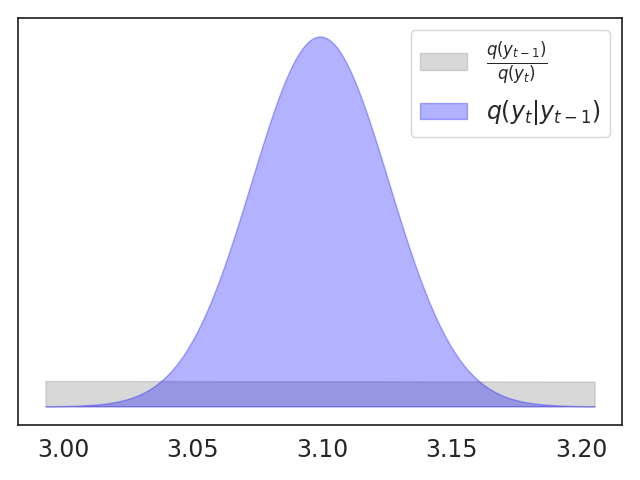}
    \end{minipage}
    \hfill
    \begin{minipage}{0.2\textwidth}
        \centering
        \includegraphics[width=\linewidth]{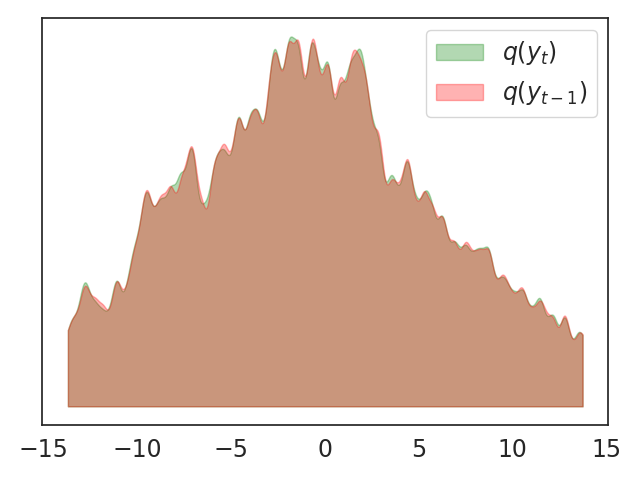}
    \end{minipage}
    \caption*{Frequency 1 (low), $t=3$}

    \vspace{5pt} %

    \begin{minipage}{0.2\textwidth}
        \centering
        \includegraphics[width=\linewidth]{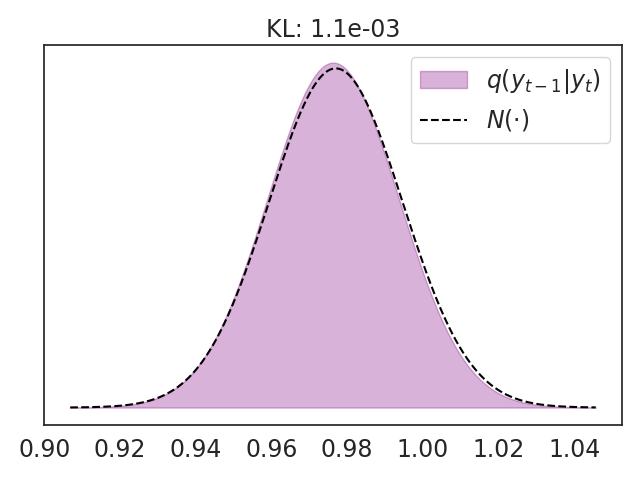}
    \end{minipage}
    \hfill
    \begin{minipage}{0.2\textwidth}
        \centering
        \includegraphics[width=\linewidth]{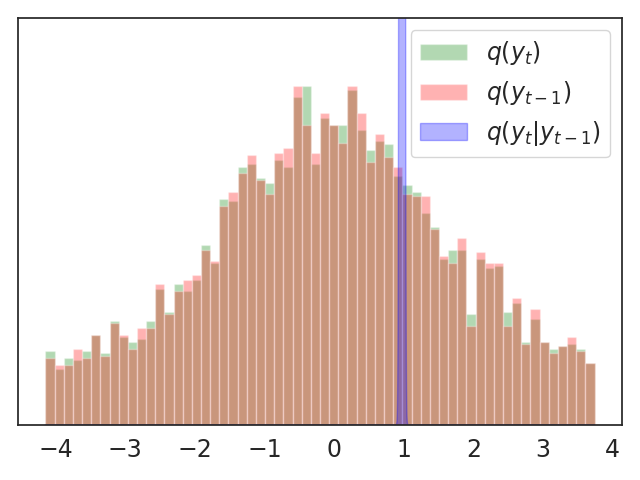}
    \end{minipage}
    \hfill
    \begin{minipage}{0.2\textwidth}
        \centering
        \includegraphics[width=\linewidth]{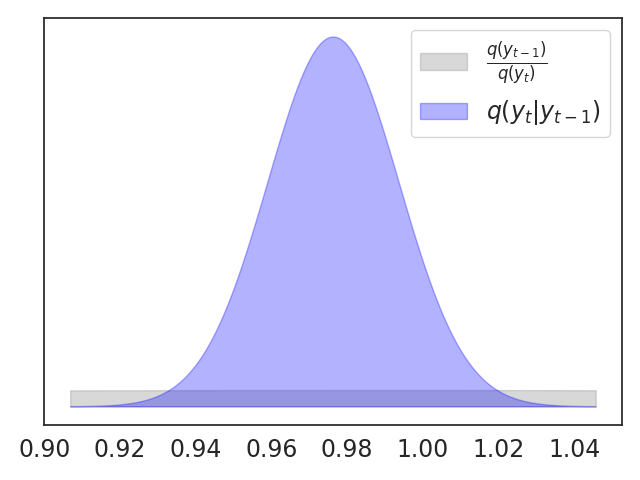}
    \end{minipage}
    \hfill
    \begin{minipage}{0.2\textwidth}
        \centering
        \includegraphics[width=\linewidth]{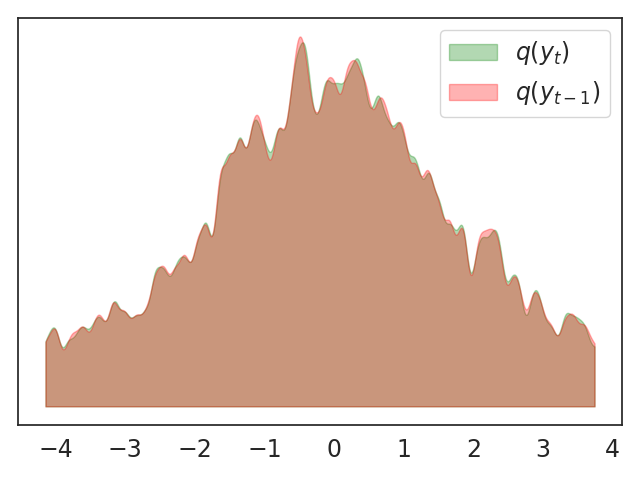}
    \end{minipage}
    \caption*{Frequency 2 (low), $t=1$}

    \vspace{5pt} %

    \begin{minipage}{0.2\textwidth}
        \centering
        \includegraphics[width=\linewidth]{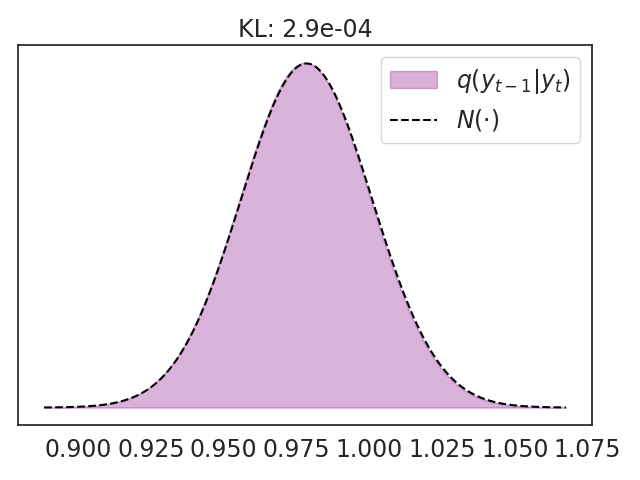}
    \end{minipage}
    \hfill
    \begin{minipage}{0.2\textwidth}
        \centering
        \includegraphics[width=\linewidth]{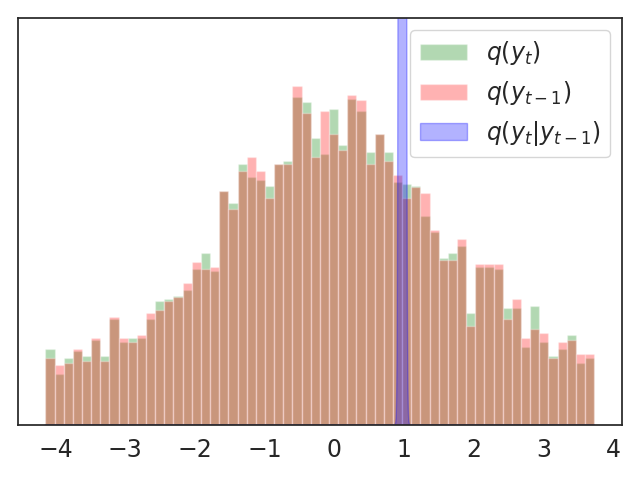}
    \end{minipage}
    \hfill
    \begin{minipage}{0.2\textwidth}
        \centering
        \includegraphics[width=\linewidth]{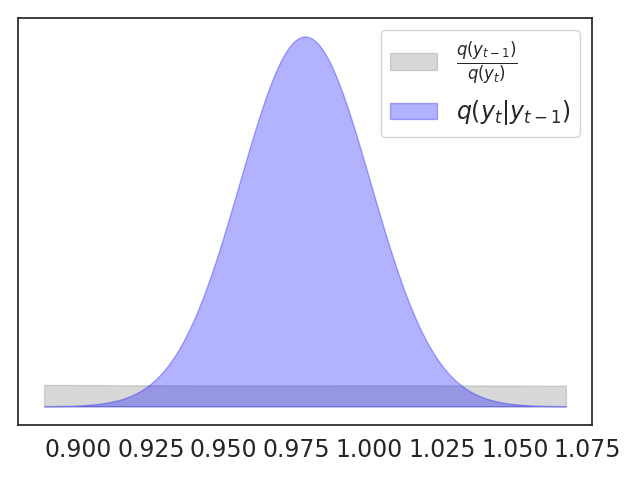}
    \end{minipage}
    \hfill
    \begin{minipage}{0.2\textwidth}
        \centering
        \includegraphics[width=\linewidth]{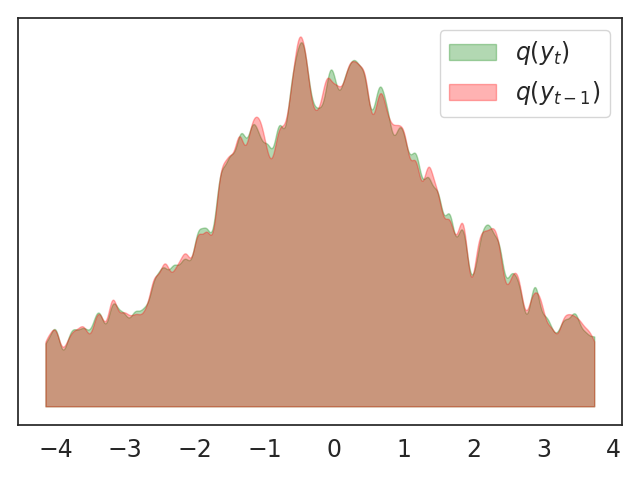}
    \end{minipage}
    \caption*{Frequency 2 (low), $t=2$}

    \vspace{5pt} %

    \begin{minipage}{0.2\textwidth}
        \centering
        \includegraphics[width=\linewidth]{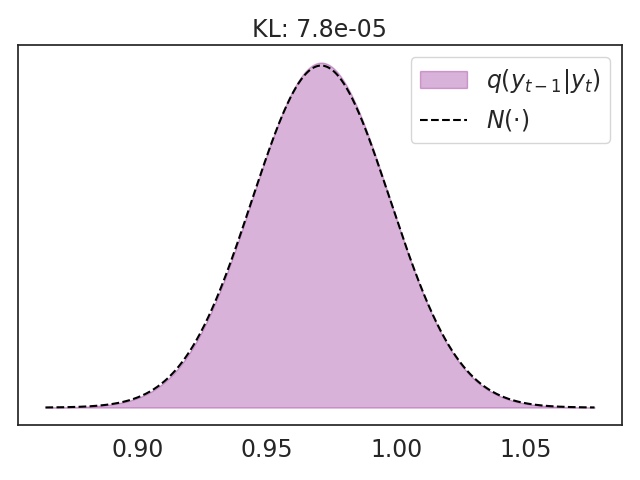}
    \end{minipage}
    \hfill
    \begin{minipage}{0.2\textwidth}
        \centering
        \includegraphics[width=\linewidth]{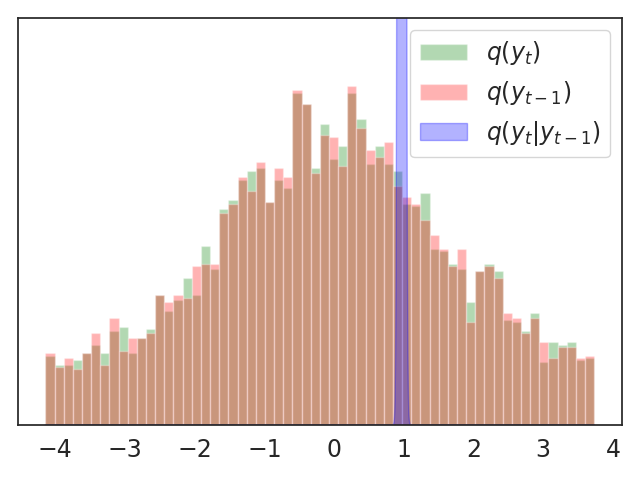}
    \end{minipage}
    \hfill
    \begin{minipage}{0.2\textwidth}
        \centering
        \includegraphics[width=\linewidth]{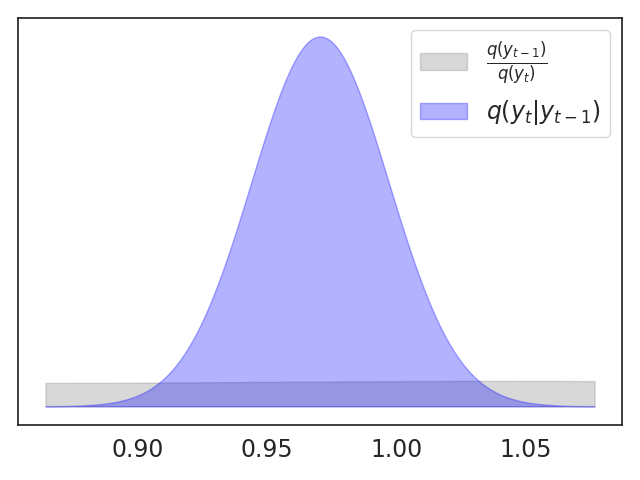}
    \end{minipage}
    \hfill
    \begin{minipage}{0.2\textwidth}
        \centering
        \includegraphics[width=\linewidth]{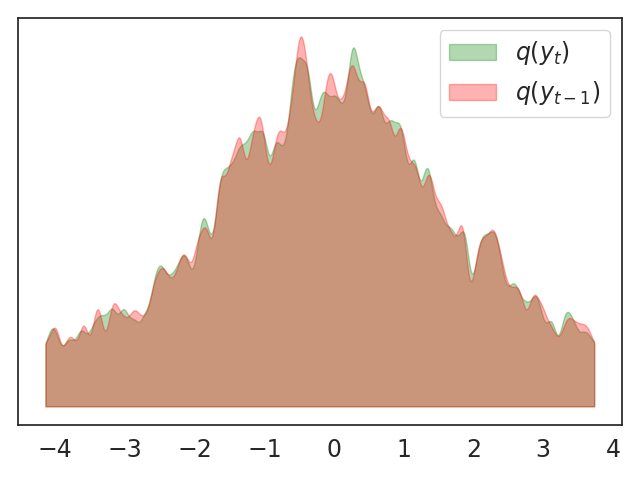}
    \end{minipage}
    \caption*{Frequency 2 (low), $t=3$}

\caption{Analysis of violations of the Gaussian assumption in DDPM (1 of 3).}
\label{fig:gauss_app_start}
\end{figure}

\begin{figure}[p]
    \centering 
    
    \begin{minipage}{0.2\textwidth}
        \centering
        \includegraphics[width=\linewidth]{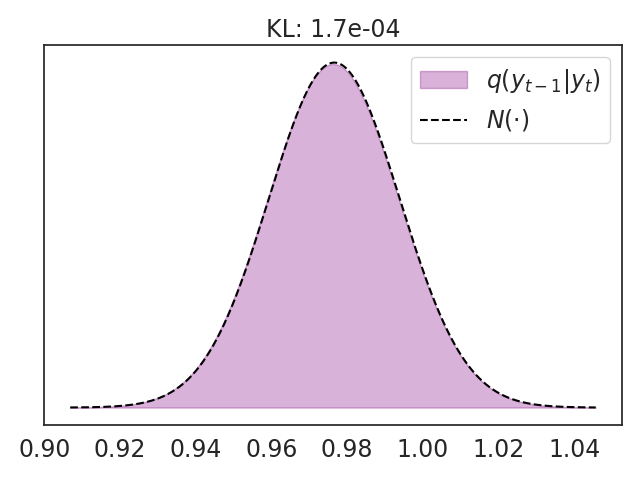}
    \end{minipage}
    \hfill
    \begin{minipage}{0.2\textwidth}
        \centering
        \includegraphics[width=\linewidth]{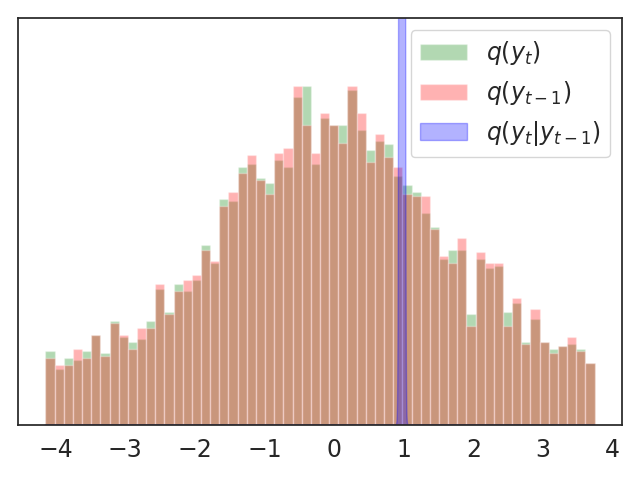}
    \end{minipage}
    \hfill
    \begin{minipage}{0.2\textwidth}
        \centering
        \includegraphics[width=\linewidth]{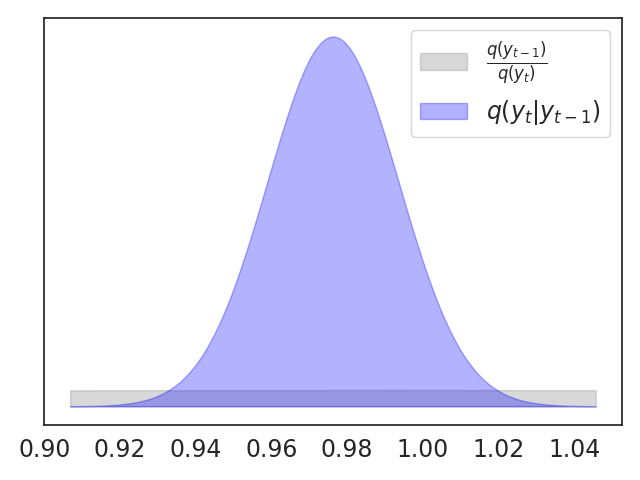}
    \end{minipage}
    \hfill
    \begin{minipage}{0.2\textwidth}
        \centering
        \includegraphics[width=\linewidth]{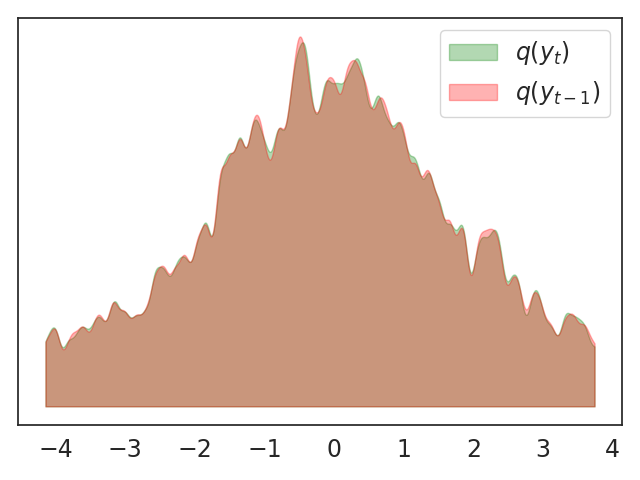}
    \end{minipage}
    \caption*{Frequency 3 (low), $t=1$}

    \vspace{5pt} %

    \begin{minipage}{0.2\textwidth}
        \centering
        \includegraphics[width=\linewidth]{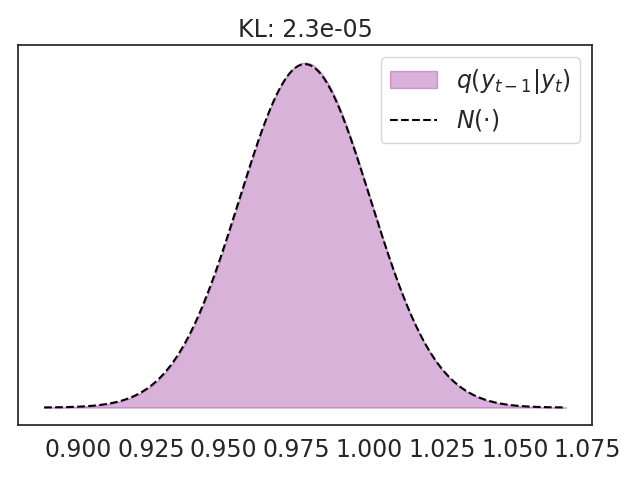}
    \end{minipage}
    \hfill
    \begin{minipage}{0.2\textwidth}
        \centering
        \includegraphics[width=\linewidth]{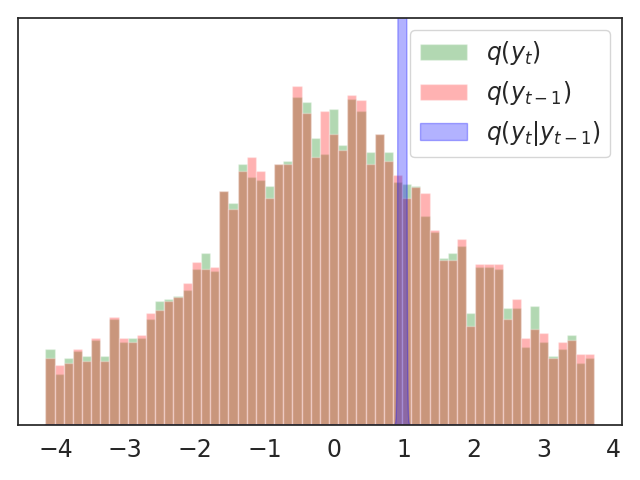}
    \end{minipage}
    \hfill
    \begin{minipage}{0.2\textwidth}
        \centering
        \includegraphics[width=\linewidth]{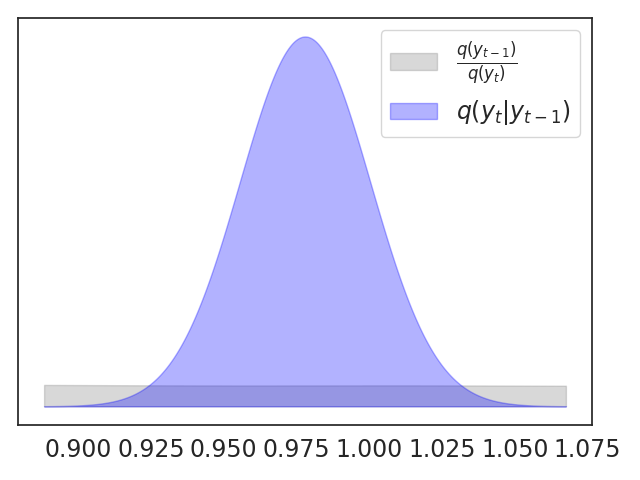}
    \end{minipage}
    \hfill
    \begin{minipage}{0.2\textwidth}
        \centering
        \includegraphics[width=\linewidth]{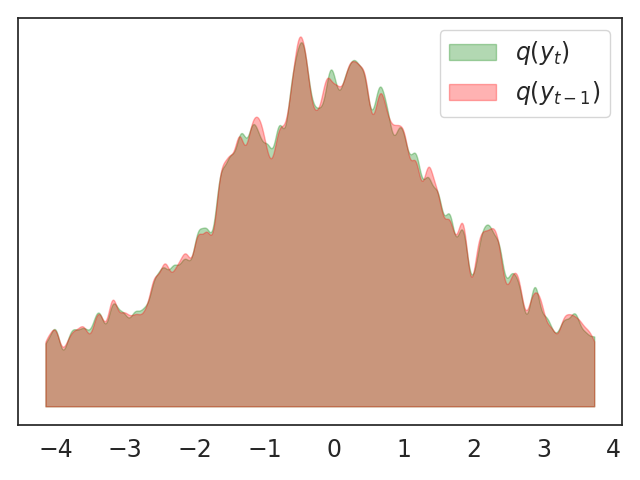}
    \end{minipage}
    \caption*{Frequency 3 (low), $t=2$}

    \vspace{5pt} %

    \begin{minipage}{0.2\textwidth}
        \centering
        \includegraphics[width=\linewidth]{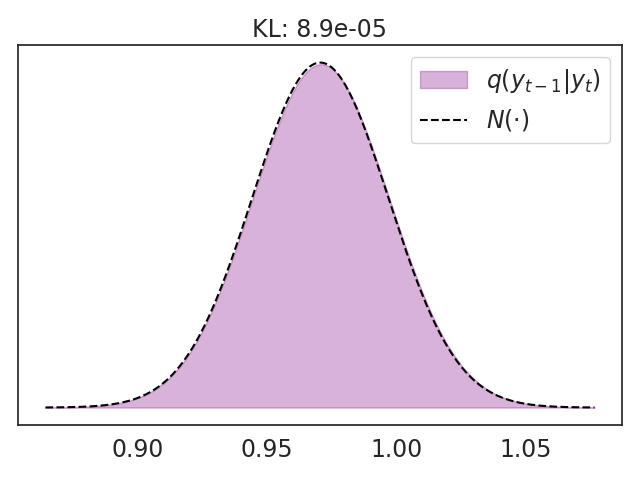}
    \end{minipage}
    \hfill
    \begin{minipage}{0.2\textwidth}
        \centering
        \includegraphics[width=\linewidth]{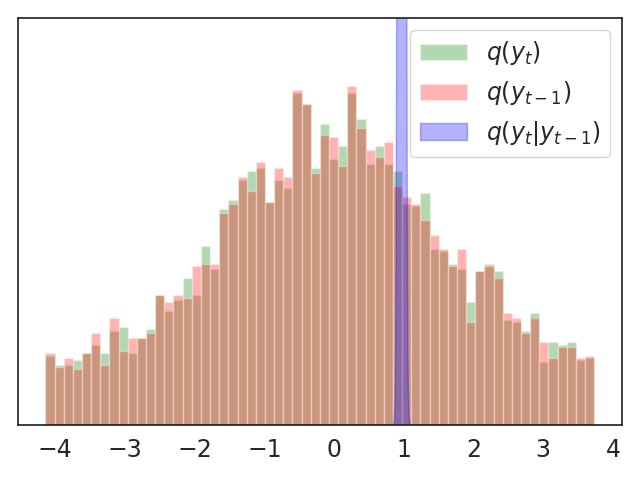}
    \end{minipage}
    \hfill
    \begin{minipage}{0.2\textwidth}
        \centering
        \includegraphics[width=\linewidth]{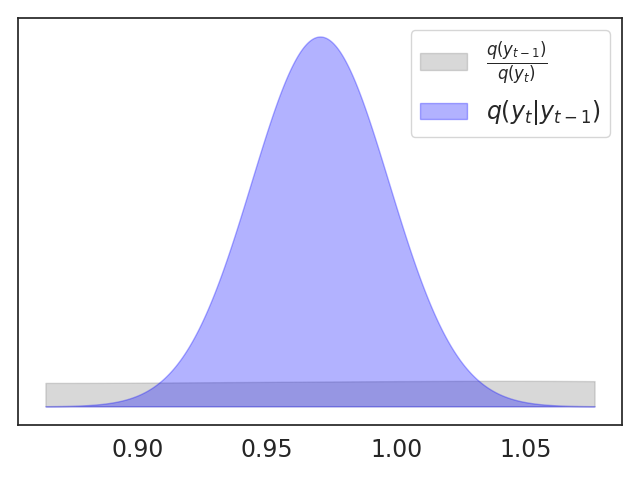}
    \end{minipage}
    \hfill
    \begin{minipage}{0.2\textwidth}
        \centering
        \includegraphics[width=\linewidth]{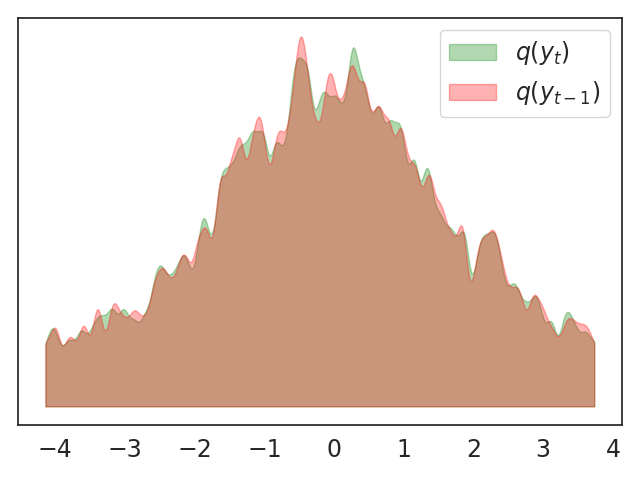}
    \end{minipage}
    \caption*{Frequency 3 (low), $t=3$}

    \vspace{5pt} %

    \begin{minipage}{0.2\textwidth}
        \centering
        \includegraphics[width=\linewidth]{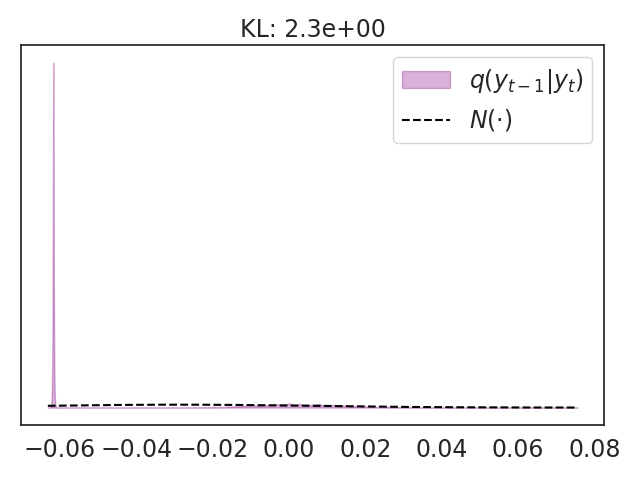}
    \end{minipage}
    \hfill
    \begin{minipage}{0.2\textwidth}
        \centering
        \includegraphics[width=\linewidth]{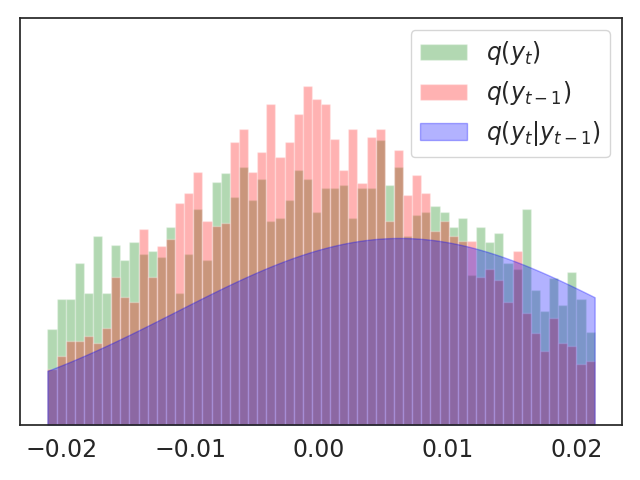}
    \end{minipage}
    \hfill
    \begin{minipage}{0.2\textwidth}
        \centering
        \includegraphics[width=\linewidth]{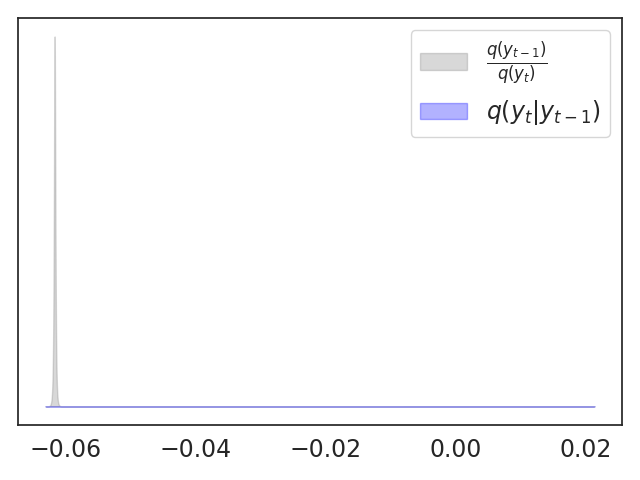}
    \end{minipage}
    \hfill
    \begin{minipage}{0.2\textwidth}
        \centering
        \includegraphics[width=\linewidth]{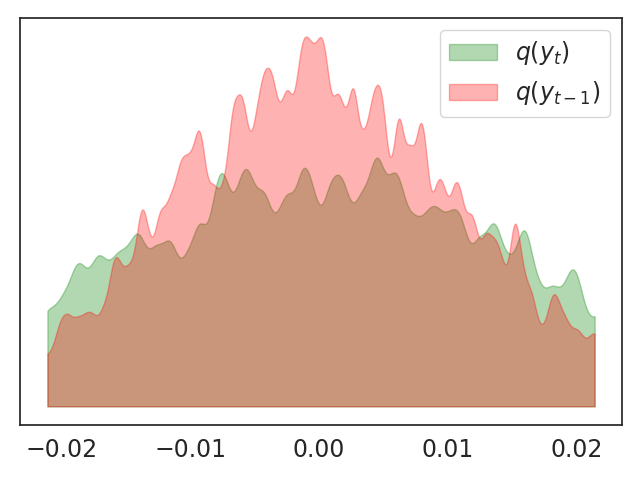}
    \end{minipage}
    \caption*{Frequency 1022 (high), $t=1$}

    \vspace{5pt} %

    \begin{minipage}{0.2\textwidth}
        \centering
        \includegraphics[width=\linewidth]{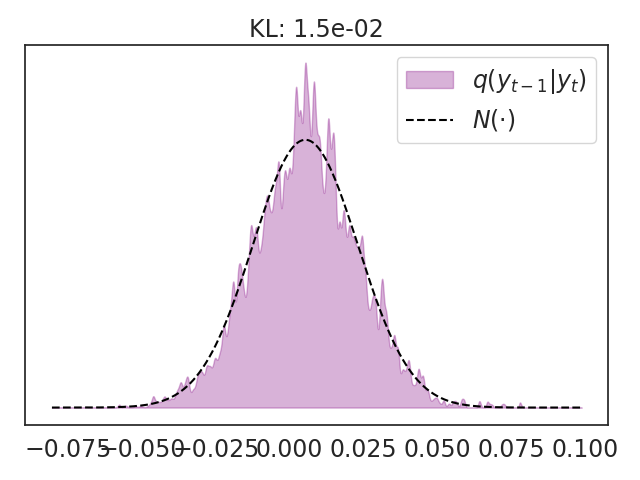}
    \end{minipage}
    \hfill
    \begin{minipage}{0.2\textwidth}
        \centering
        \includegraphics[width=\linewidth]{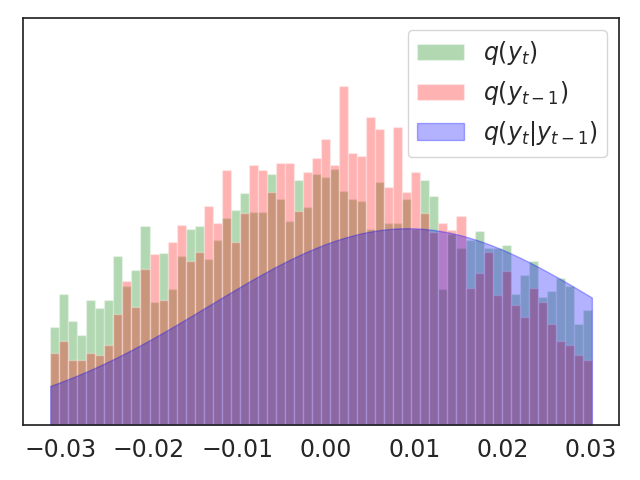}
    \end{minipage}
    \hfill
    \begin{minipage}{0.2\textwidth}
        \centering
        \includegraphics[width=\linewidth]{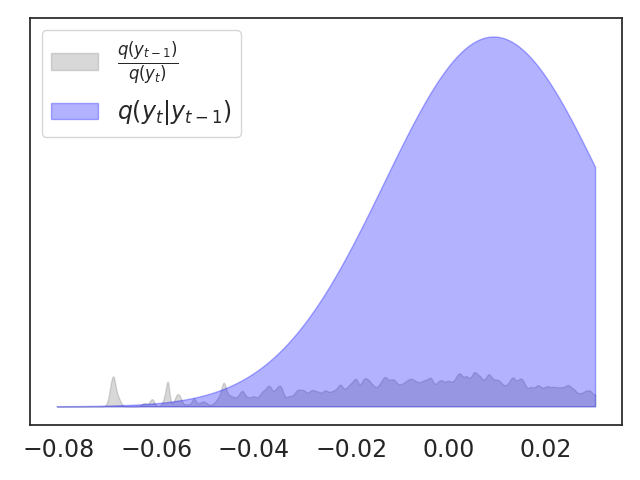}
    \end{minipage}
    \hfill
    \begin{minipage}{0.2\textwidth}
        \centering
        \includegraphics[width=\linewidth]{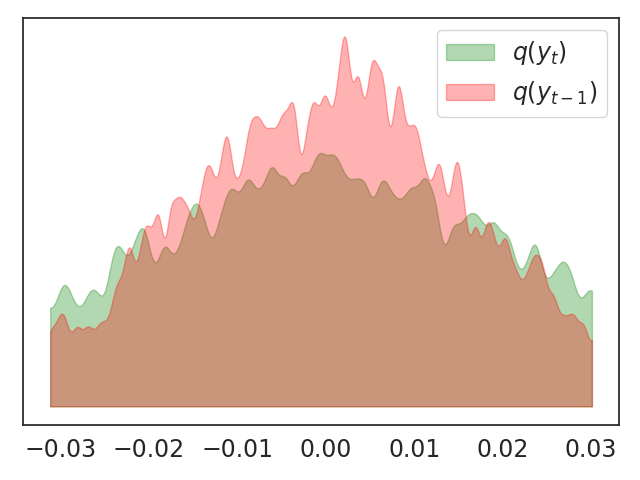}
    \end{minipage}
    \caption*{Frequency 1022 (high), $t=2$}

    \vspace{5pt} %

    \begin{minipage}{0.2\textwidth}
        \centering
        \includegraphics[width=\linewidth]{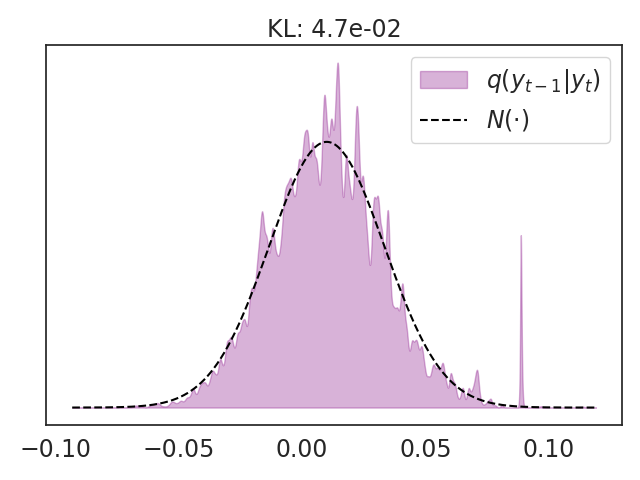}
    \end{minipage}
    \hfill
    \begin{minipage}{0.2\textwidth}
        \centering
        \includegraphics[width=\linewidth]{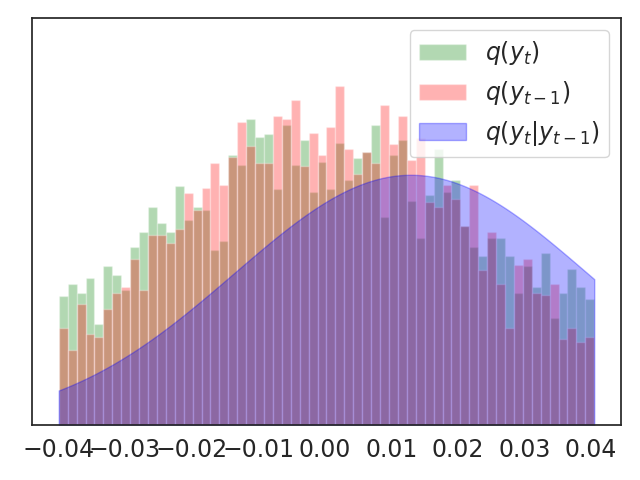}
    \end{minipage}
    \hfill
    \begin{minipage}{0.2\textwidth}
        \centering
        \includegraphics[width=\linewidth]{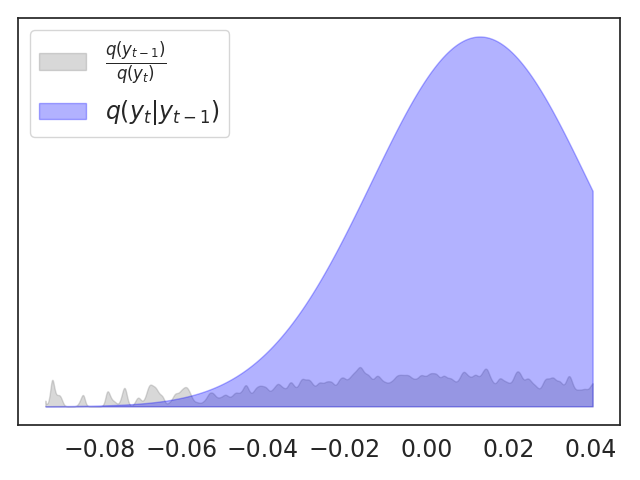}
    \end{minipage}
    \hfill
    \begin{minipage}{0.2\textwidth}
        \centering
        \includegraphics[width=\linewidth]{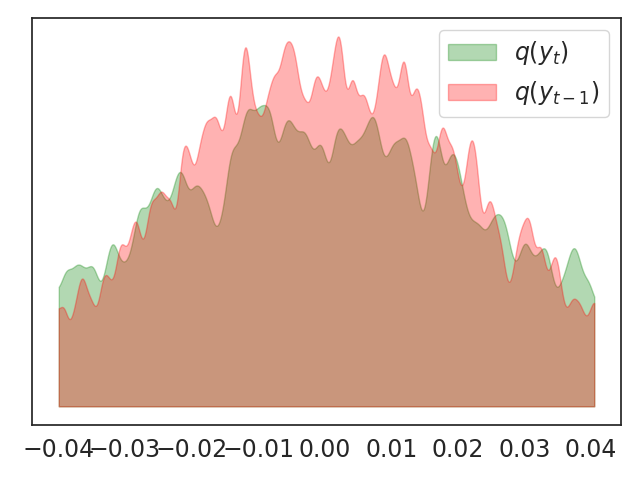}
    \end{minipage}
    \caption*{Frequency 1022 (high), $t=3$}

\caption{Analysis of violations of the Gaussian assumption in DDPM (2 of 3).}
\end{figure}

\begin{figure}[p]
    \centering 
    
    \begin{minipage}{0.2\textwidth}
        \centering
        \includegraphics[width=\linewidth]{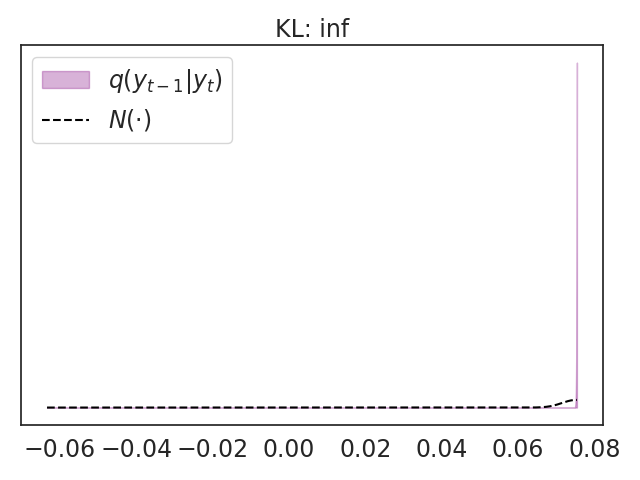}
    \end{minipage}
    \hfill
    \begin{minipage}{0.2\textwidth}
        \centering
        \includegraphics[width=\linewidth]{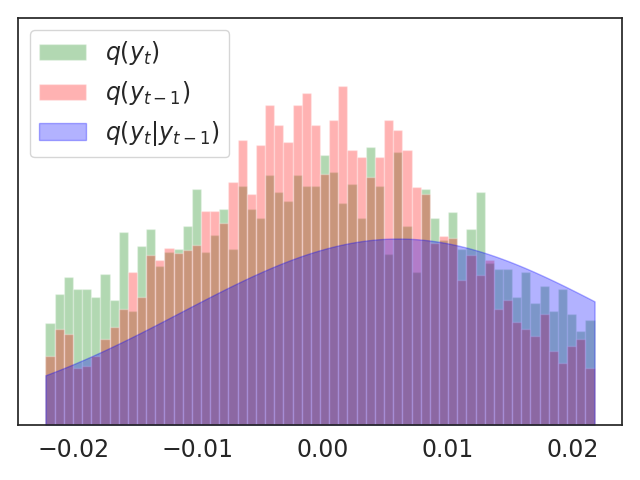}
    \end{minipage}
    \hfill
    \begin{minipage}{0.2\textwidth}
        \centering
        \includegraphics[width=\linewidth]{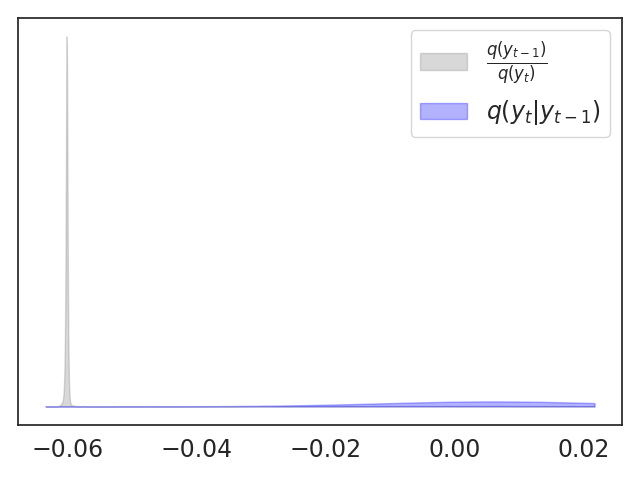}
    \end{minipage}
    \hfill
    \begin{minipage}{0.2\textwidth}
        \centering
        \includegraphics[width=\linewidth]{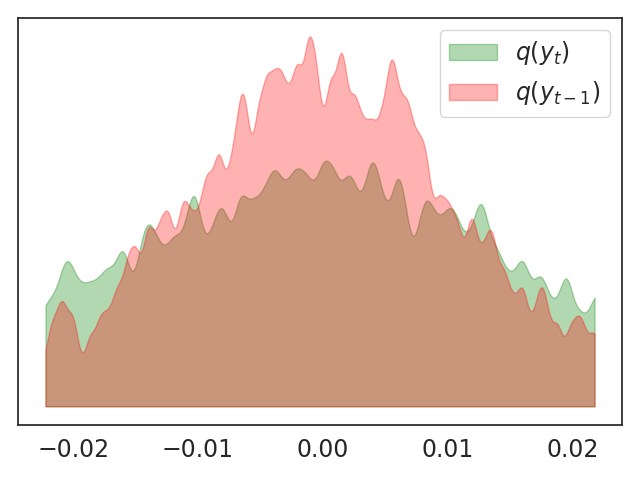}
    \end{minipage}
    \caption*{Frequency 1023 (high), $t=1$}

    \vspace{5pt} %

    \begin{minipage}{0.2\textwidth}
        \centering
        \includegraphics[width=\linewidth]{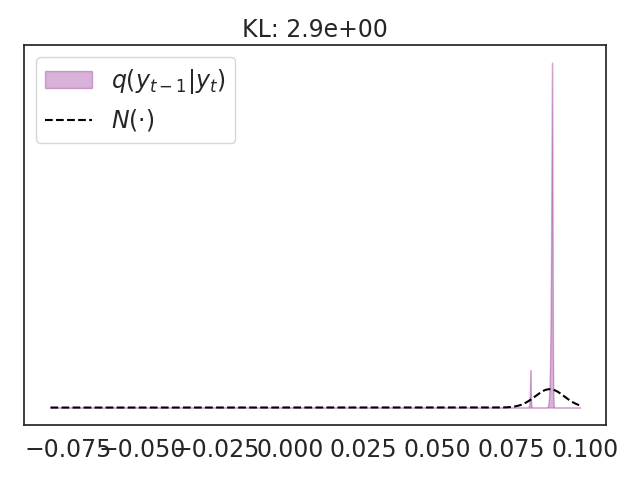}
    \end{minipage}
    \hfill
    \begin{minipage}{0.2\textwidth}
        \centering
        \includegraphics[width=\linewidth]{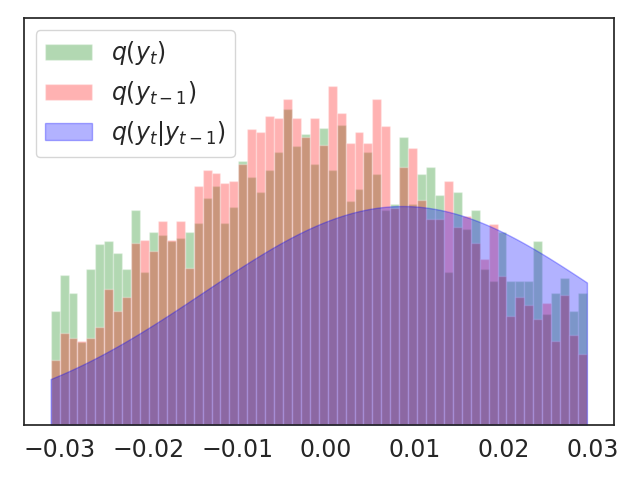}
    \end{minipage}
    \hfill
    \begin{minipage}{0.2\textwidth}
        \centering
        \includegraphics[width=\linewidth]{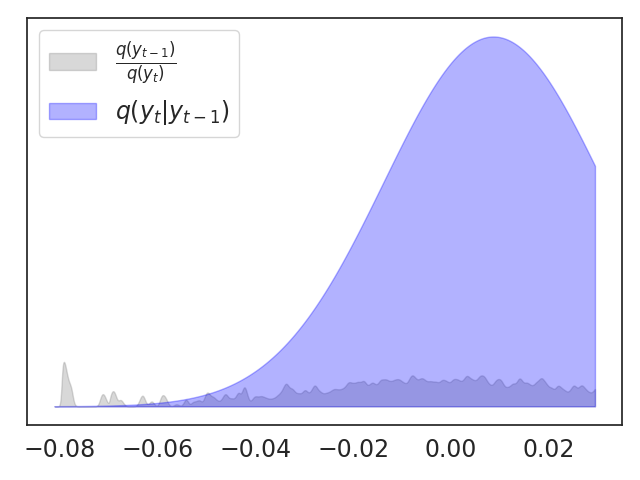}
    \end{minipage}
    \hfill
    \begin{minipage}{0.2\textwidth}
        \centering
        \includegraphics[width=\linewidth]{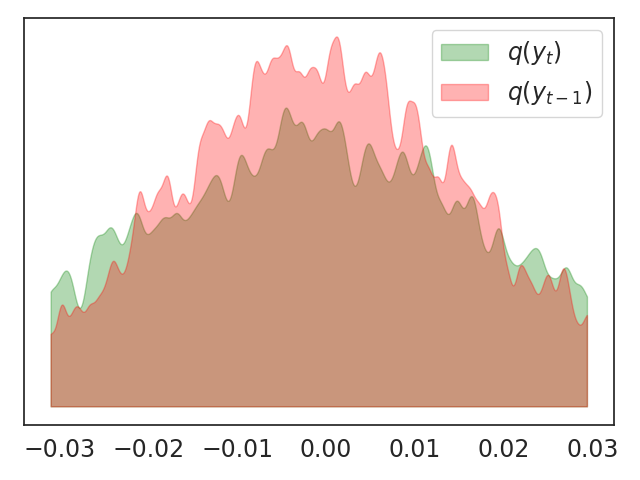}
    \end{minipage}
    \caption*{Frequency 1023 (high), $t=2$}

    \vspace{5pt} %

    \begin{minipage}{0.2\textwidth}
        \centering
        \includegraphics[width=\linewidth]{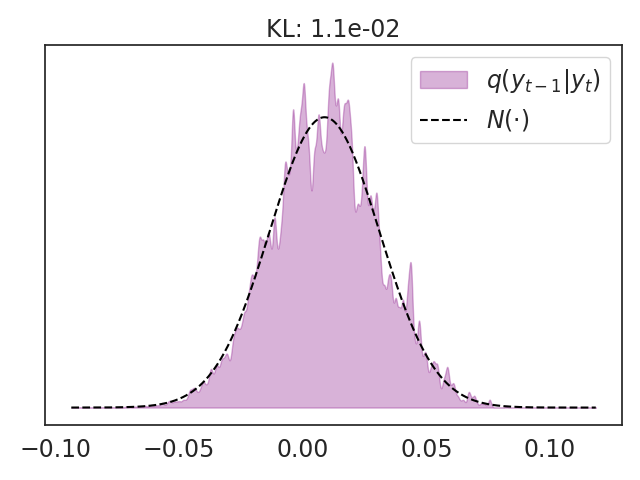}
    \end{minipage}
    \hfill
    \begin{minipage}{0.2\textwidth}
        \centering
        \includegraphics[width=\linewidth]{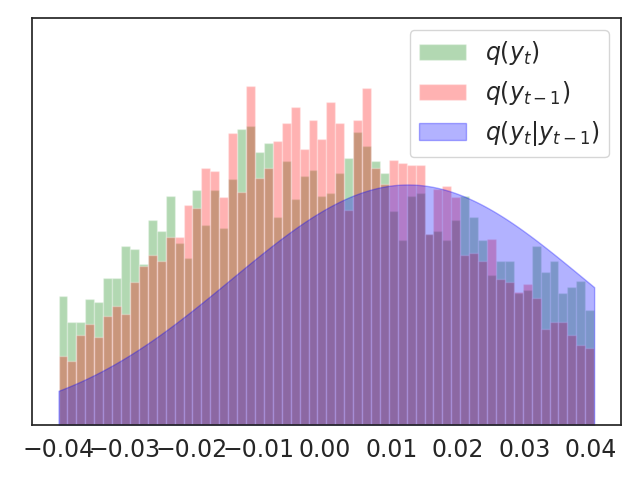}
    \end{minipage}
    \hfill
    \begin{minipage}{0.2\textwidth}
        \centering
        \includegraphics[width=\linewidth]{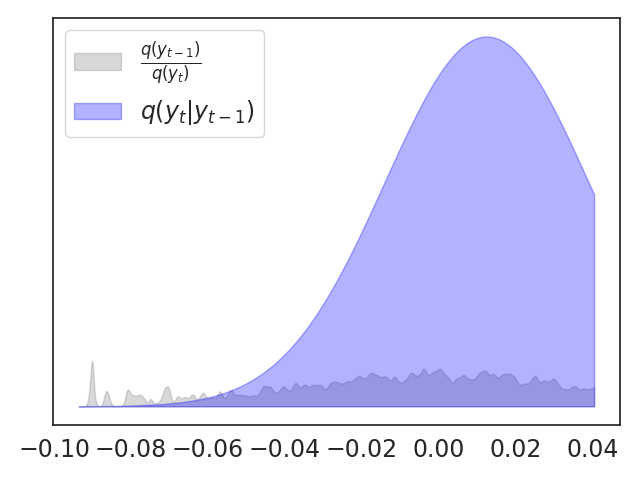}
    \end{minipage}
    \hfill
    \begin{minipage}{0.2\textwidth}
        \centering
        \includegraphics[width=\linewidth]{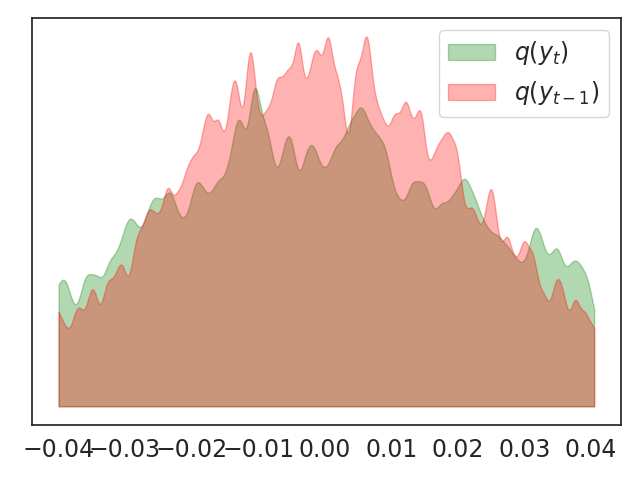}
    \end{minipage}
    \caption*{Frequency 1023 (high), $t=3$}

    \vspace{5pt} %

    \begin{minipage}{0.2\textwidth}
        \centering
        \includegraphics[width=\linewidth]{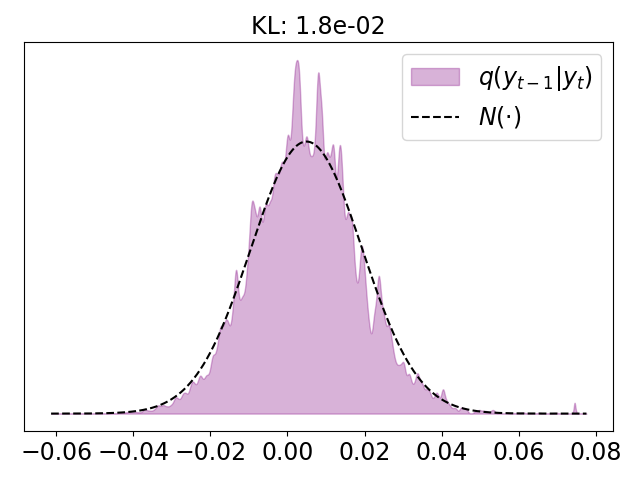}
    \end{minipage}
    \hfill
    \begin{minipage}{0.2\textwidth}
        \centering
        \includegraphics[width=\linewidth]{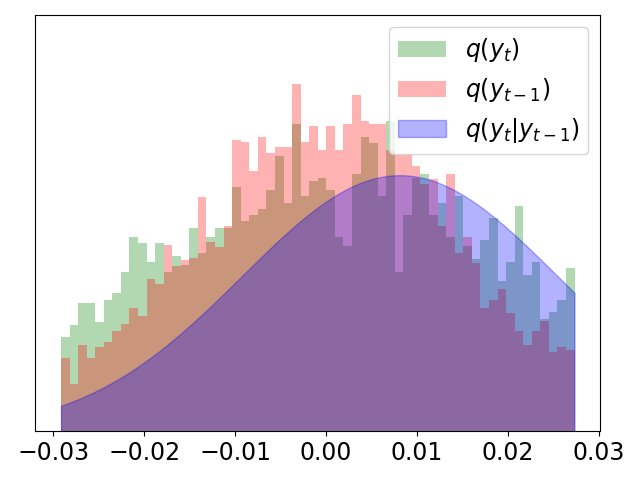}
    \end{minipage}
    \hfill
    \begin{minipage}{0.2\textwidth}
        \centering
        \includegraphics[width=\linewidth]{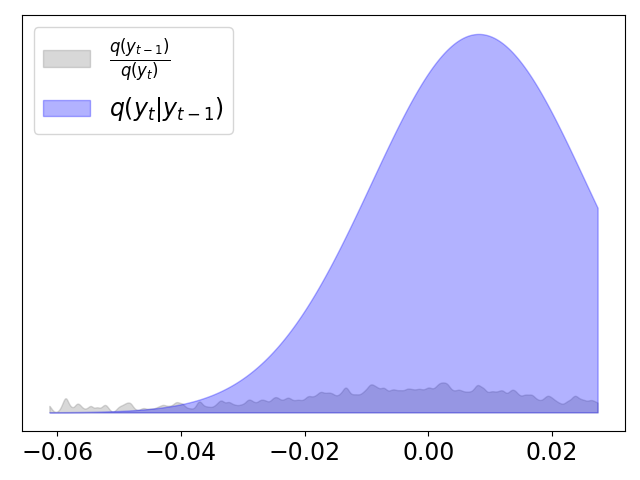}
    \end{minipage}
    \hfill
    \begin{minipage}{0.2\textwidth}
        \centering
        \includegraphics[width=\linewidth]{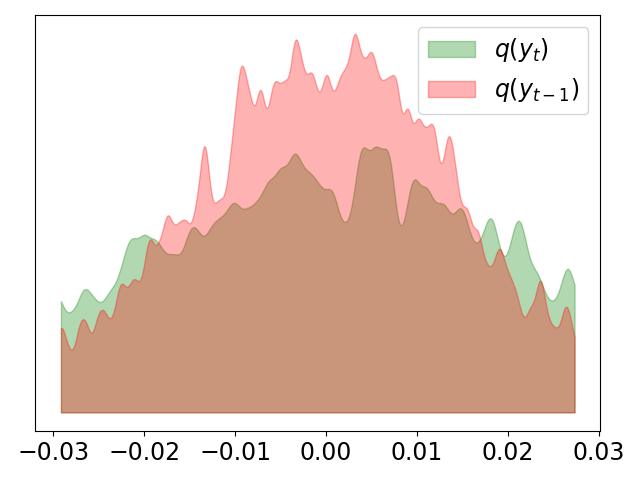}
    \end{minipage}
    \caption*{Frequency 1024 (high), $t=1$}

    \vspace{5pt} %

    \begin{minipage}{0.2\textwidth}
        \centering
        \includegraphics[width=\linewidth]{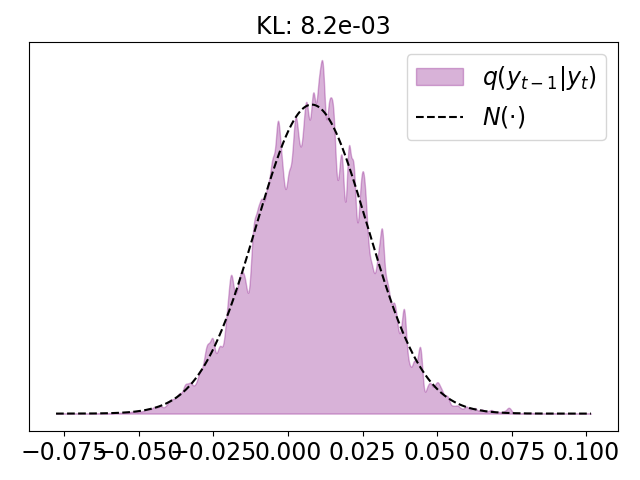}
    \end{minipage}
    \hfill
    \begin{minipage}{0.2\textwidth}
        \centering
        \includegraphics[width=\linewidth]{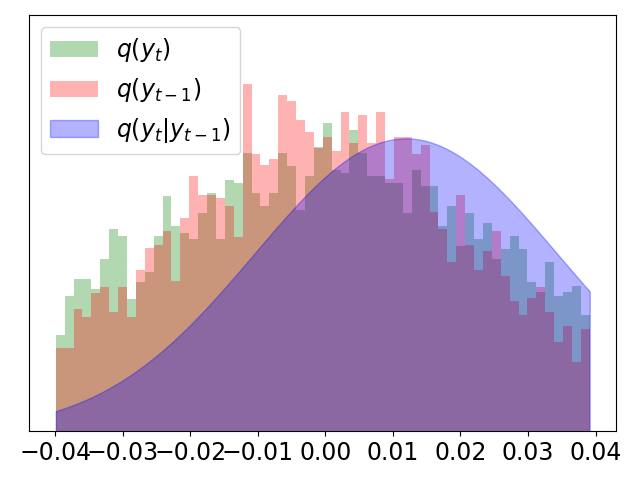}
    \end{minipage}
    \hfill
    \begin{minipage}{0.2\textwidth}
        \centering
        \includegraphics[width=\linewidth]{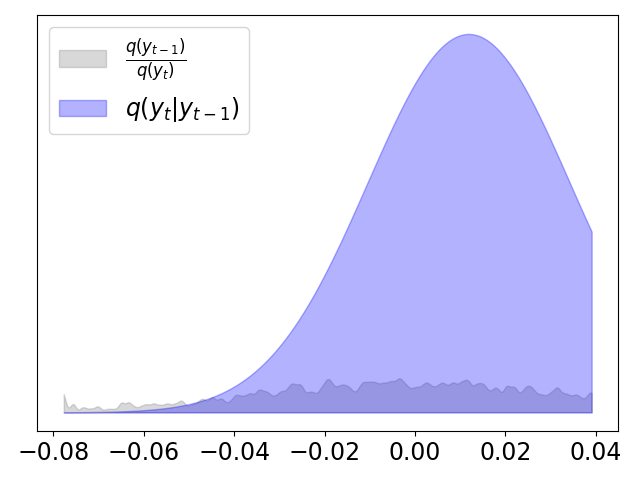}
    \end{minipage}
    \hfill
    \begin{minipage}{0.2\textwidth}
        \centering
        \includegraphics[width=\linewidth]{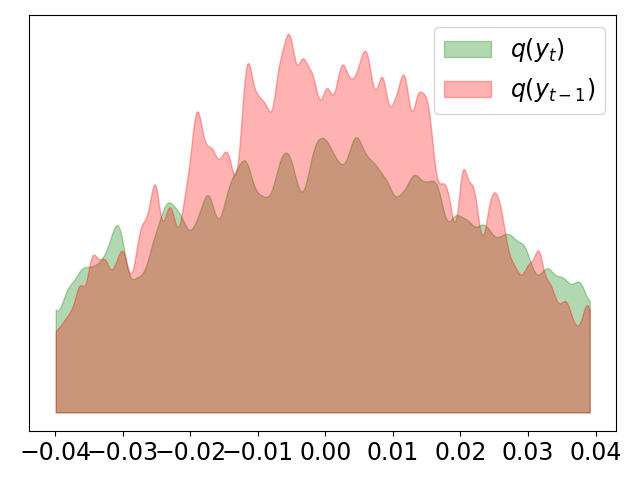}
    \end{minipage}
    \caption*{Frequency 1024 (high), $t=2$}

    \vspace{5pt} %

    \begin{minipage}{0.2\textwidth}
        \centering
        \includegraphics[width=\linewidth]{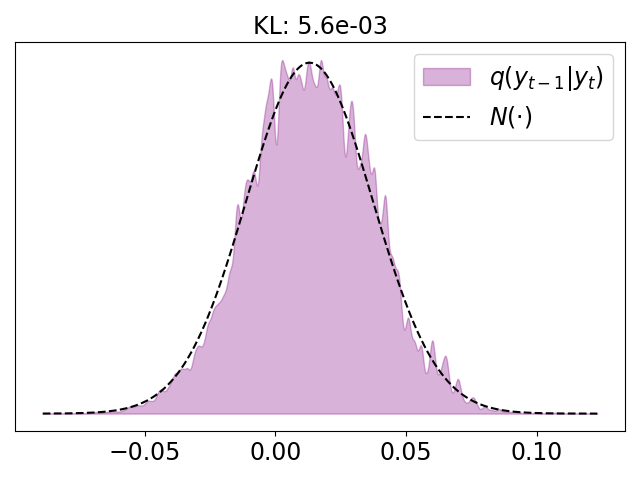}
    \end{minipage}
    \hfill
    \begin{minipage}{0.2\textwidth}
        \centering
        \includegraphics[width=\linewidth]{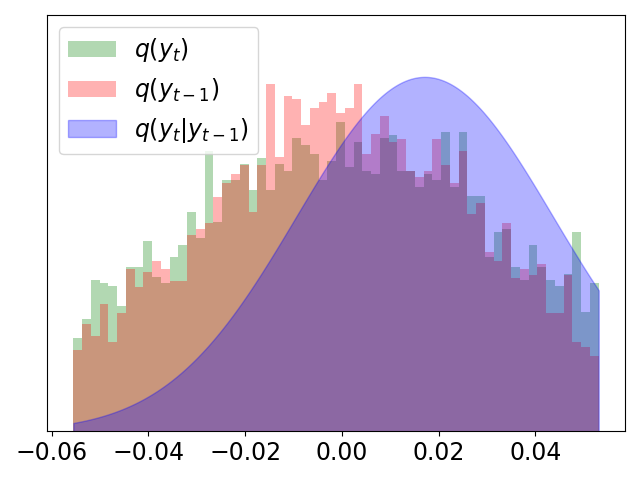}
    \end{minipage}
    \hfill
    \begin{minipage}{0.2\textwidth}
        \centering
        \includegraphics[width=\linewidth]{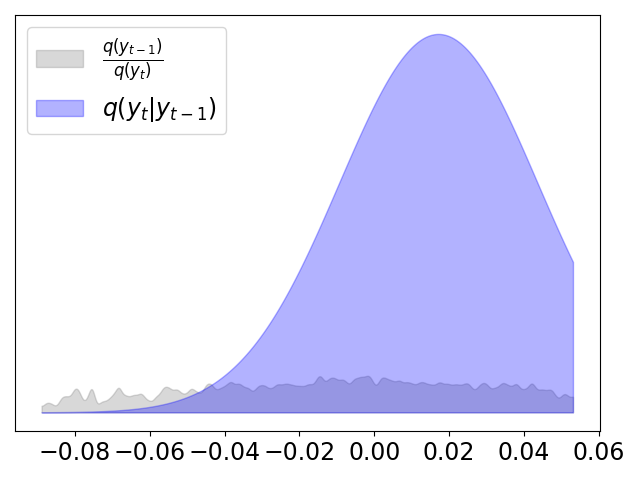}
    \end{minipage}
    \hfill
    \begin{minipage}{0.2\textwidth}
        \centering
        \includegraphics[width=\linewidth]{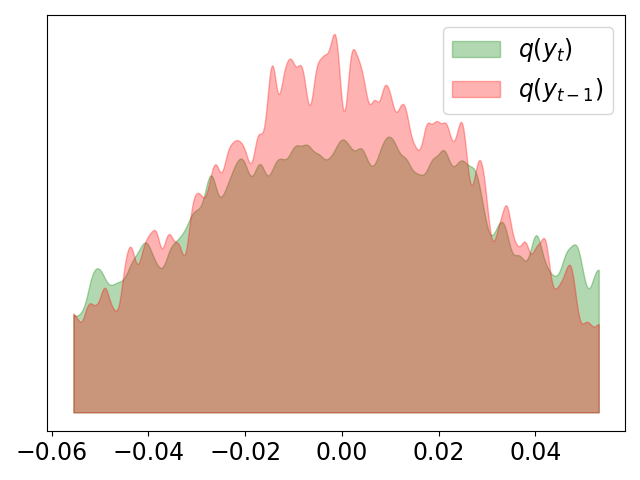}
    \end{minipage}
    \caption*{Frequency 1024 (high), $t=3$}

\caption{Analysis of violations of the Gaussian assumption in DDPM (3 of 3).}
\end{figure}

\begin{figure}[p]
    \centering 
    
    \begin{minipage}{0.2\textwidth}
        \centering
        \includegraphics[width=\linewidth]{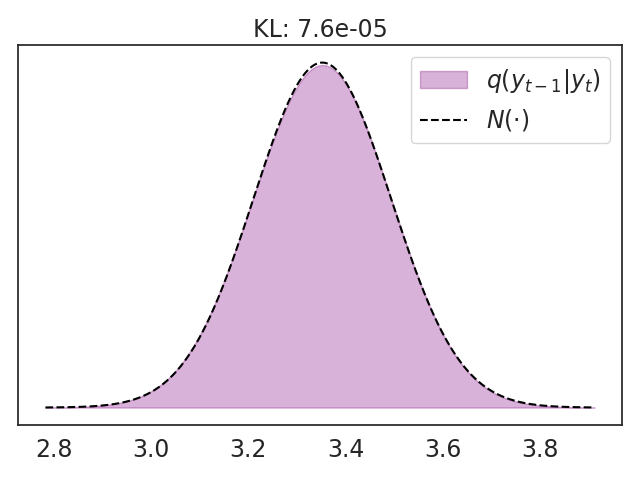}
    \end{minipage}
    \hfill
    \begin{minipage}{0.2\textwidth}
        \centering
        \includegraphics[width=\linewidth]{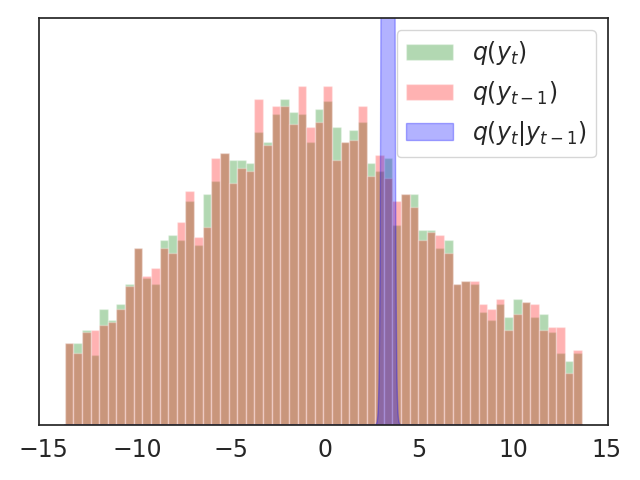}
    \end{minipage}
    \hfill
    \begin{minipage}{0.2\textwidth}
        \centering
        \includegraphics[width=\linewidth]{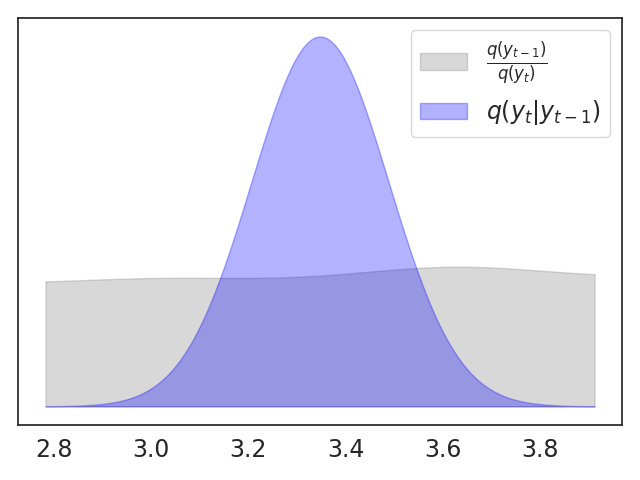}
    \end{minipage}
    \hfill
    \begin{minipage}{0.2\textwidth}
        \centering
        \includegraphics[width=\linewidth]{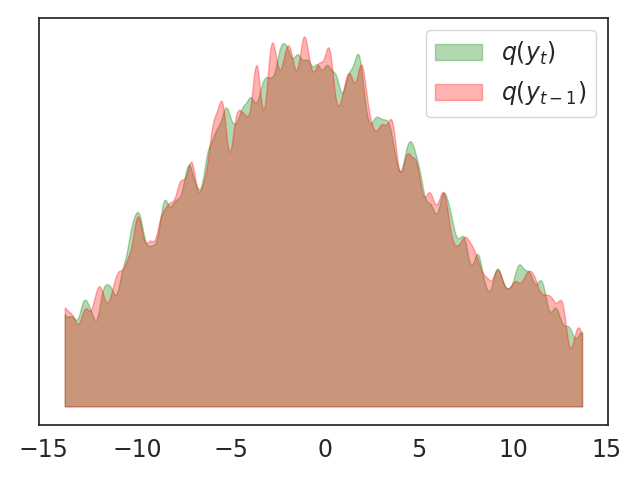}
    \end{minipage}
    \caption*{Frequency 1 (low), $t=1$}

    \vspace{5pt} %

    \begin{minipage}{0.2\textwidth}
        \centering
        \includegraphics[width=\linewidth]{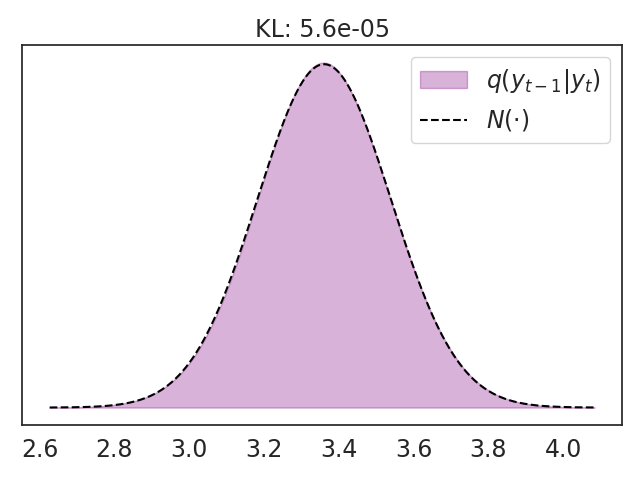}
    \end{minipage}
    \hfill
    \begin{minipage}{0.2\textwidth}
        \centering
        \includegraphics[width=\linewidth]{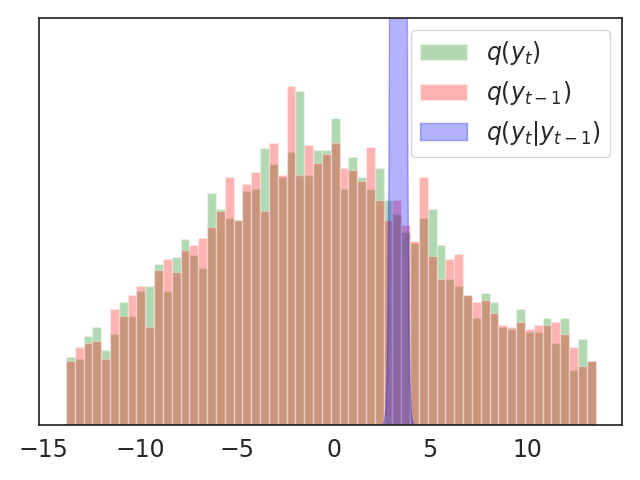}
    \end{minipage}
    \hfill
    \begin{minipage}{0.2\textwidth}
        \centering
        \includegraphics[width=\linewidth]{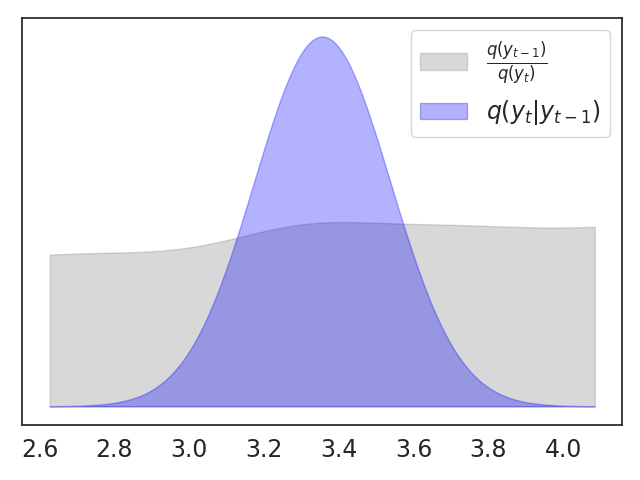}
    \end{minipage}
    \hfill
    \begin{minipage}{0.2\textwidth}
        \centering
        \includegraphics[width=\linewidth]{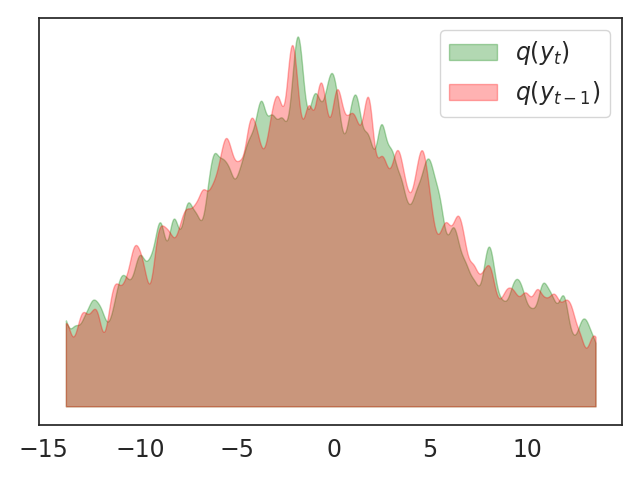}
    \end{minipage}
    \caption*{Frequency 1 (low), $t=2$}

    \vspace{5pt} %

    \begin{minipage}{0.2\textwidth}
        \centering
        \includegraphics[width=\linewidth]{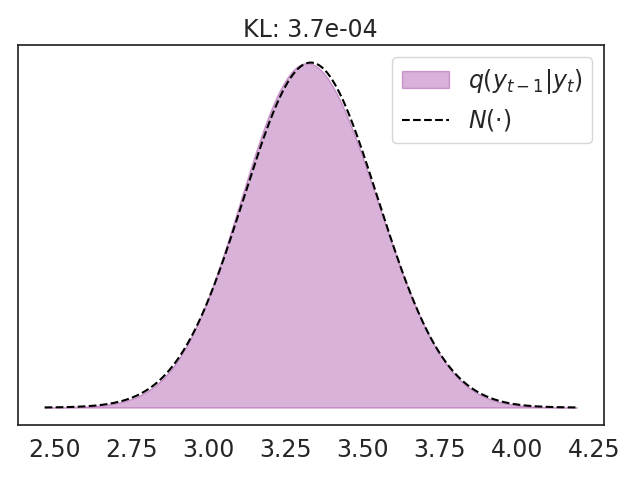}
    \end{minipage}
    \hfill
    \begin{minipage}{0.2\textwidth}
        \centering
        \includegraphics[width=\linewidth]{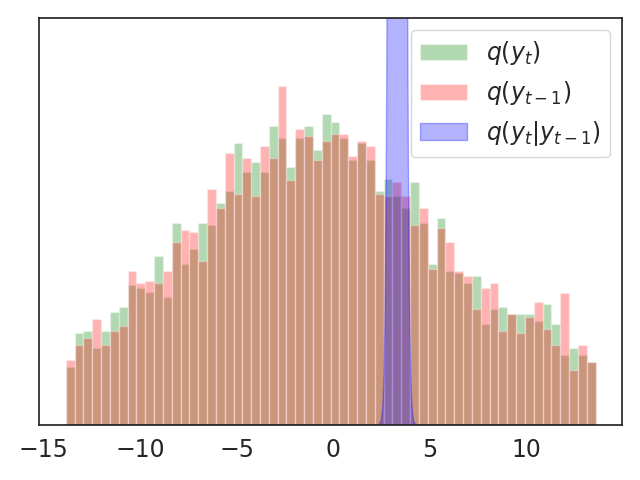}
    \end{minipage}
    \hfill
    \begin{minipage}{0.2\textwidth}
        \centering
        \includegraphics[width=\linewidth]{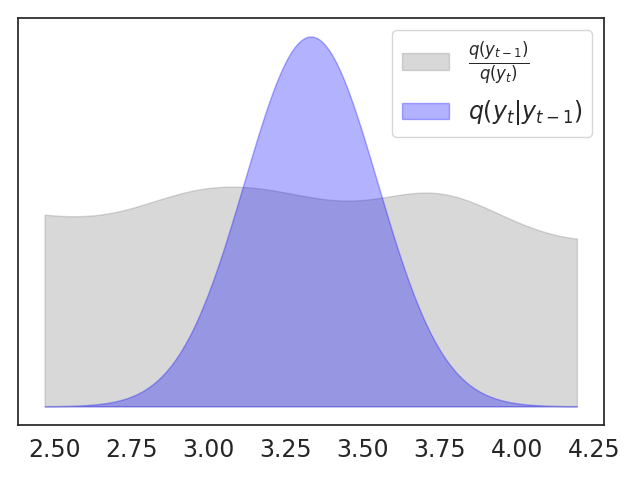}
    \end{minipage}
    \hfill
    \begin{minipage}{0.2\textwidth}
        \centering
        \includegraphics[width=\linewidth]{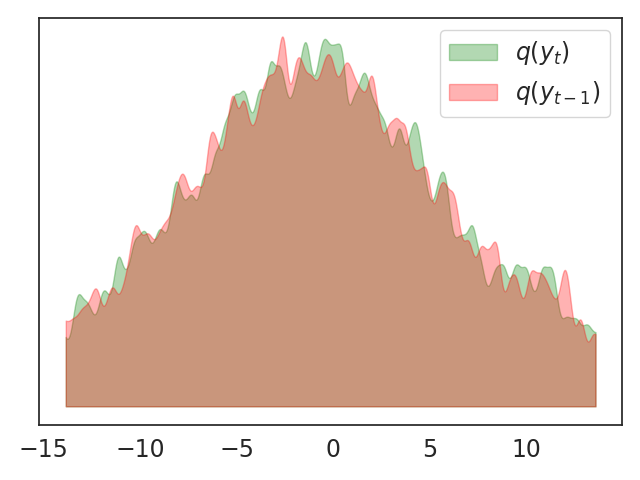}
    \end{minipage}
    \caption*{Frequency 1 (low), $t=3$}

    \vspace{5pt} %

    \begin{minipage}{0.2\textwidth}
        \centering
        \includegraphics[width=\linewidth]{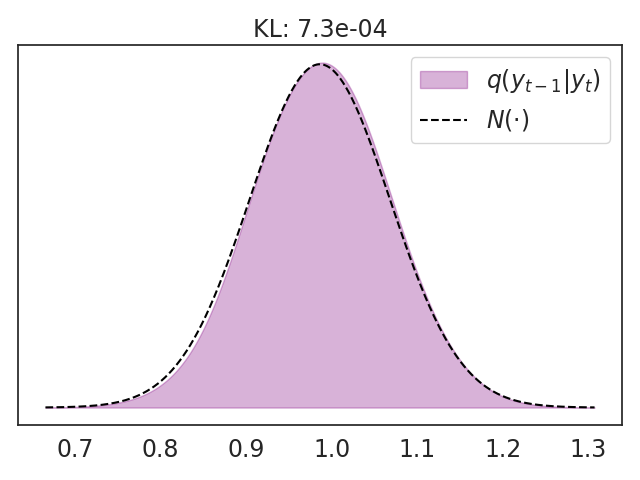}
    \end{minipage}
    \hfill
    \begin{minipage}{0.2\textwidth}
        \centering
        \includegraphics[width=\linewidth]{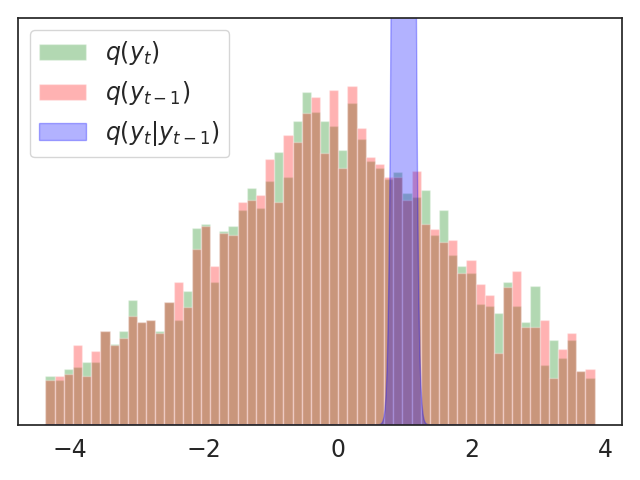}
    \end{minipage}
    \hfill
    \begin{minipage}{0.2\textwidth}
        \centering
        \includegraphics[width=\linewidth]{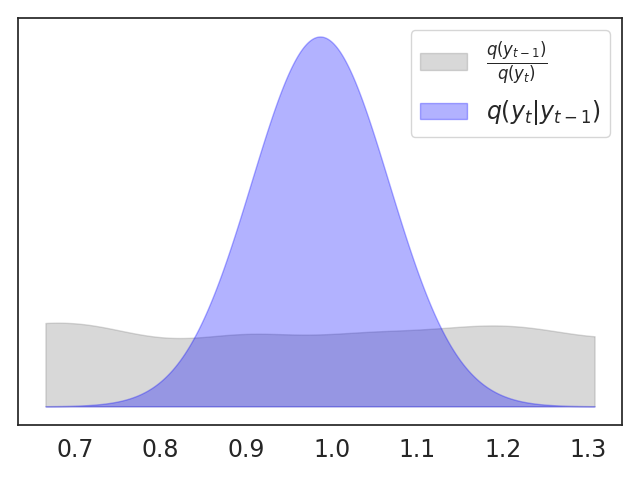}
    \end{minipage}
    \hfill
    \begin{minipage}{0.2\textwidth}
        \centering
        \includegraphics[width=\linewidth]{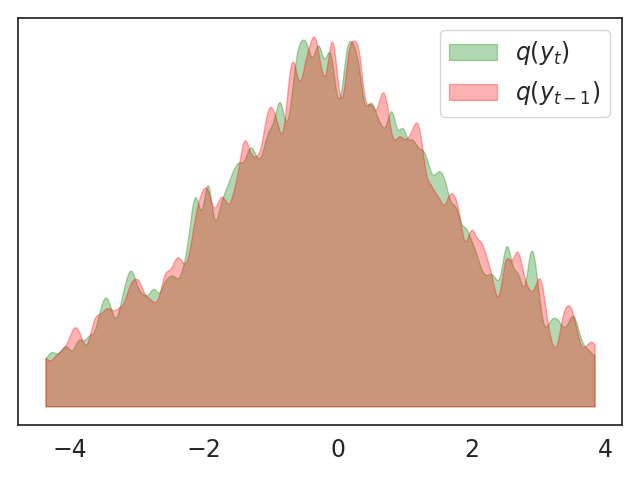}
    \end{minipage}
    \caption*{Frequency 2 (low), $t=1$}

    \vspace{5pt} %

    \begin{minipage}{0.2\textwidth}
        \centering
        \includegraphics[width=\linewidth]{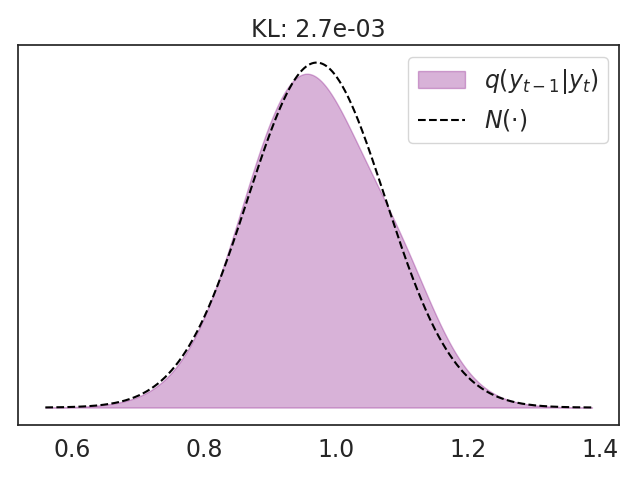}
    \end{minipage}
    \hfill
    \begin{minipage}{0.2\textwidth}
        \centering
        \includegraphics[width=\linewidth]{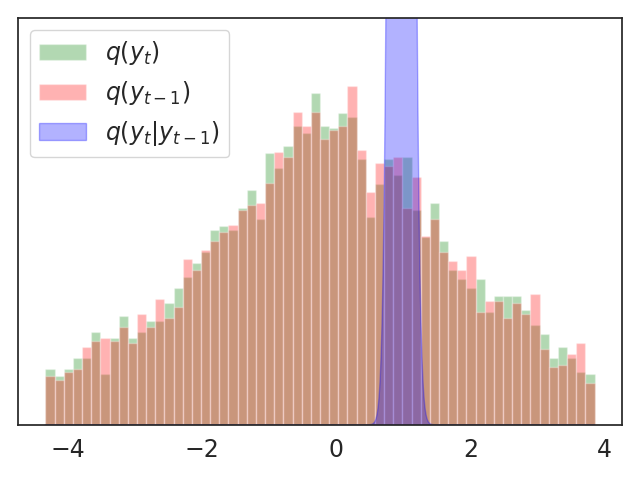}
    \end{minipage}
    \hfill
    \begin{minipage}{0.2\textwidth}
        \centering
        \includegraphics[width=\linewidth]{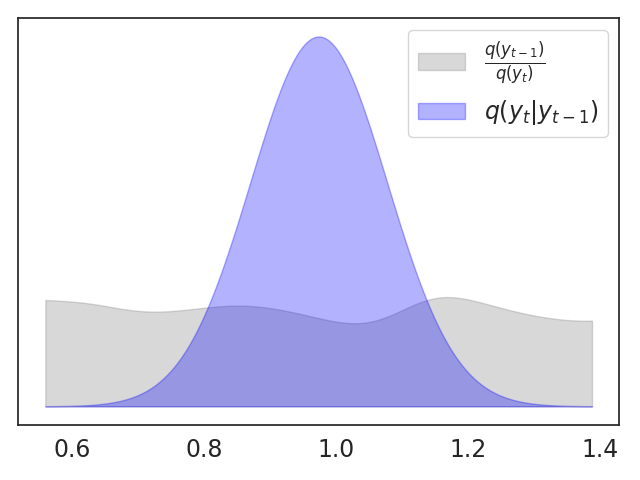}
    \end{minipage}
    \hfill
    \begin{minipage}{0.2\textwidth}
        \centering
        \includegraphics[width=\linewidth]{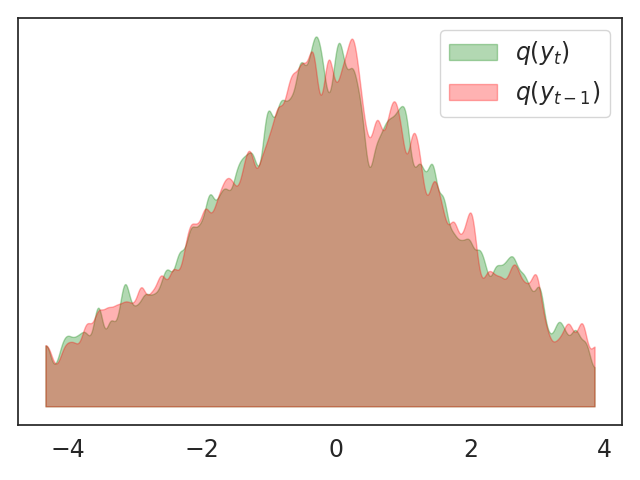}
    \end{minipage}
    \caption*{Frequency 2 (low), $t=2$}

    \vspace{5pt} %

    \begin{minipage}{0.2\textwidth}
        \centering
        \includegraphics[width=\linewidth]{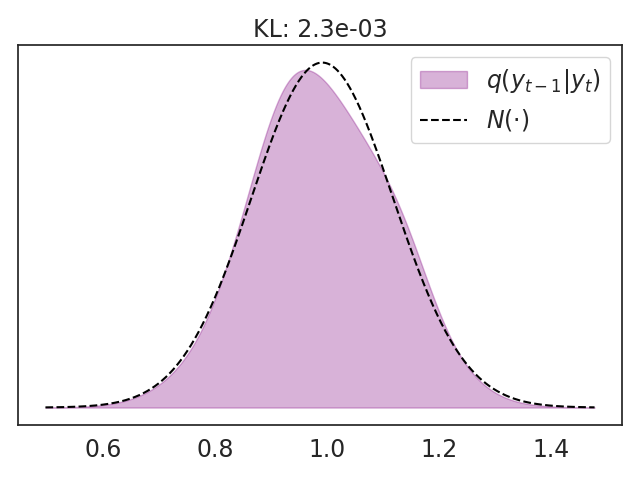}
    \end{minipage}
    \hfill
    \begin{minipage}{0.2\textwidth}
        \centering
        \includegraphics[width=\linewidth]{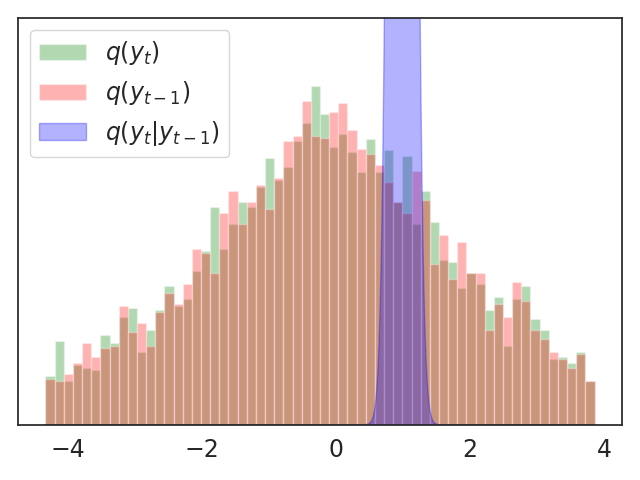}
    \end{minipage}
    \hfill
    \begin{minipage}{0.2\textwidth}
        \centering
        \includegraphics[width=\linewidth]{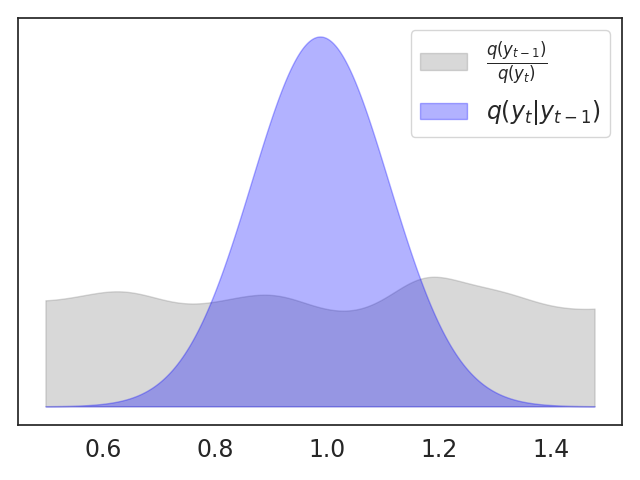}
    \end{minipage}
    \hfill
    \begin{minipage}{0.2\textwidth}
        \centering
        \includegraphics[width=\linewidth]{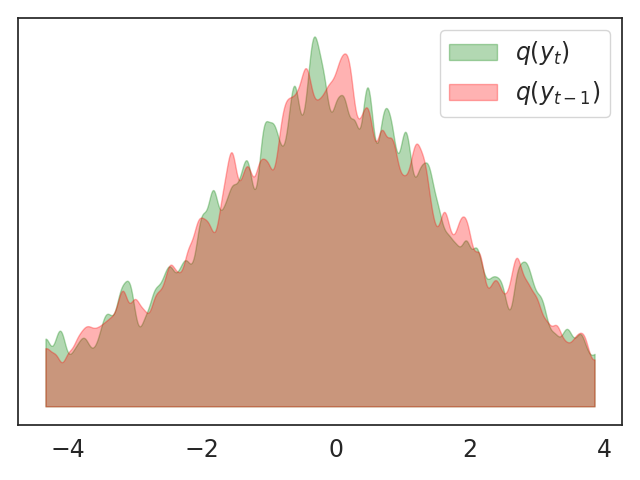}
    \end{minipage}
    \caption*{Frequency 2 (low), $t=3$}

\caption{Analysis of violations of the Gaussian assumption in EqualSNR (1 of 3).}
\end{figure}

\begin{figure}[p]
    \centering 
    
    \begin{minipage}{0.2\textwidth}
        \centering
        \includegraphics[width=\linewidth]{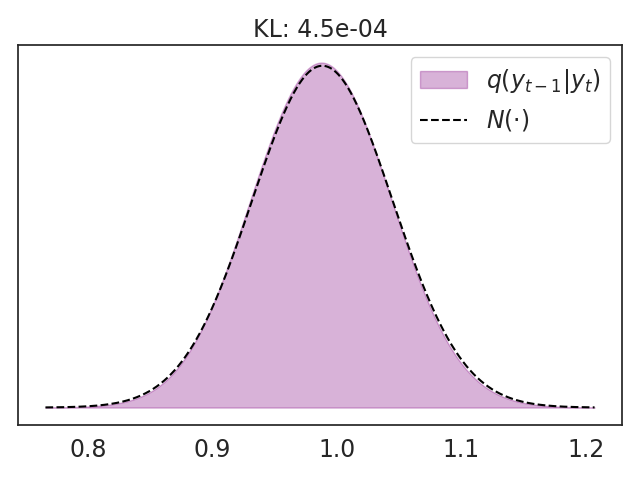}
    \end{minipage}
    \hfill
    \begin{minipage}{0.2\textwidth}
        \centering
        \includegraphics[width=\linewidth]{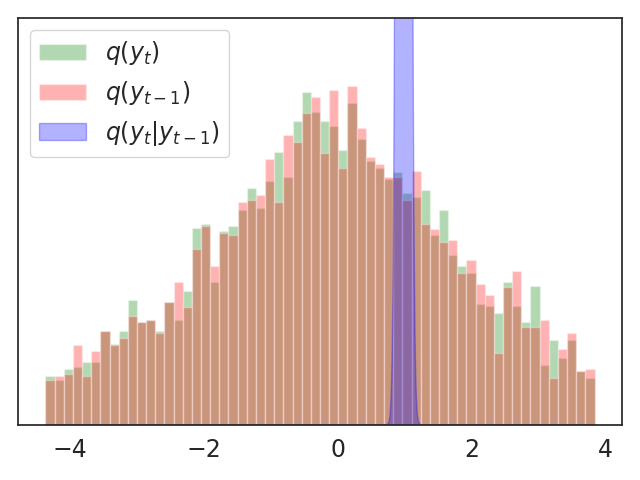}
    \end{minipage}
    \hfill
    \begin{minipage}{0.2\textwidth}
        \centering
        \includegraphics[width=\linewidth]{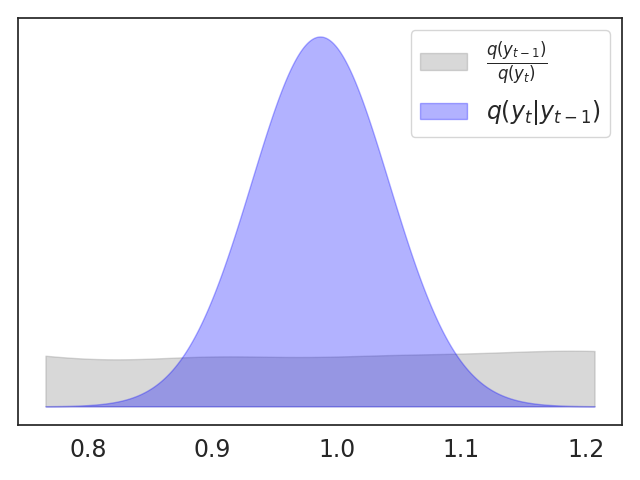}
    \end{minipage}
    \hfill
    \begin{minipage}{0.2\textwidth}
        \centering
        \includegraphics[width=\linewidth]{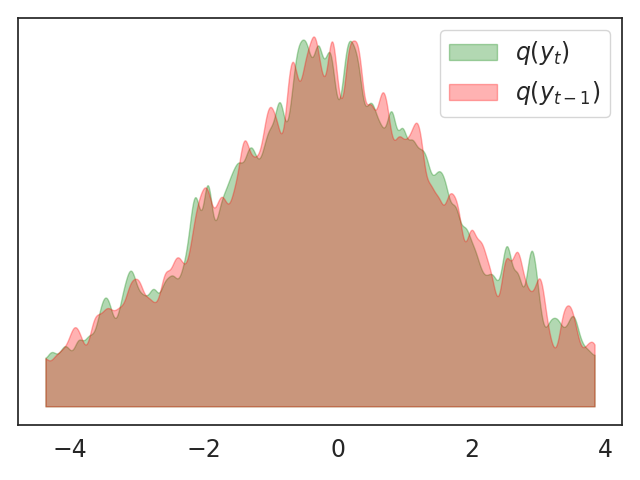}
    \end{minipage}
    \caption*{Frequency 3 (low), $t=1$}

    \vspace{5pt} %

    \begin{minipage}{0.2\textwidth}
        \centering
        \includegraphics[width=\linewidth]{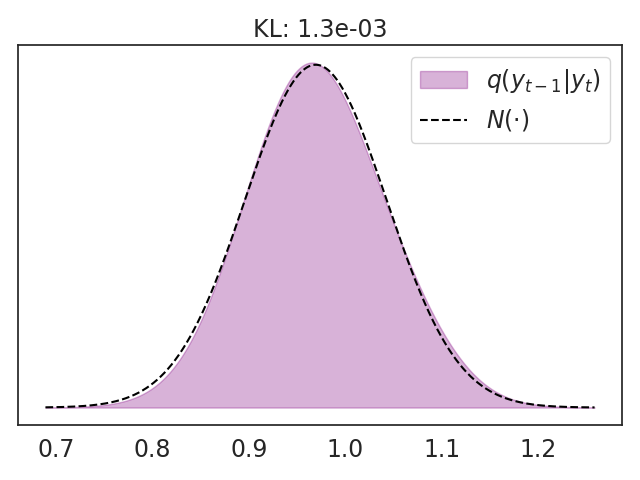}
    \end{minipage}
    \hfill
    \begin{minipage}{0.2\textwidth}
        \centering
        \includegraphics[width=\linewidth]{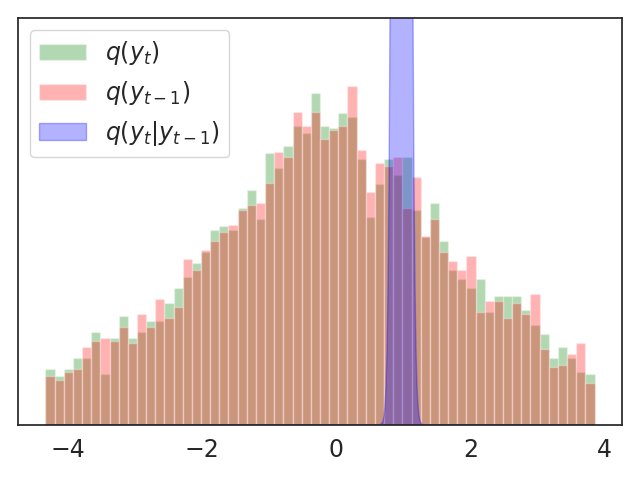}
    \end{minipage}
    \hfill
    \begin{minipage}{0.2\textwidth}
        \centering
        \includegraphics[width=\linewidth]{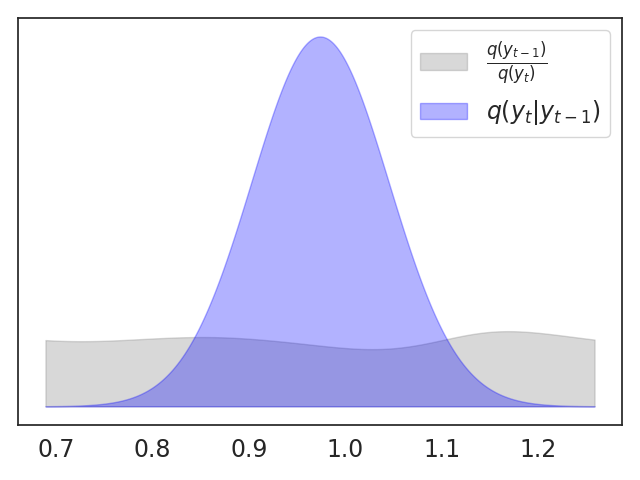}
    \end{minipage}
    \hfill
    \begin{minipage}{0.2\textwidth}
        \centering
        \includegraphics[width=\linewidth]{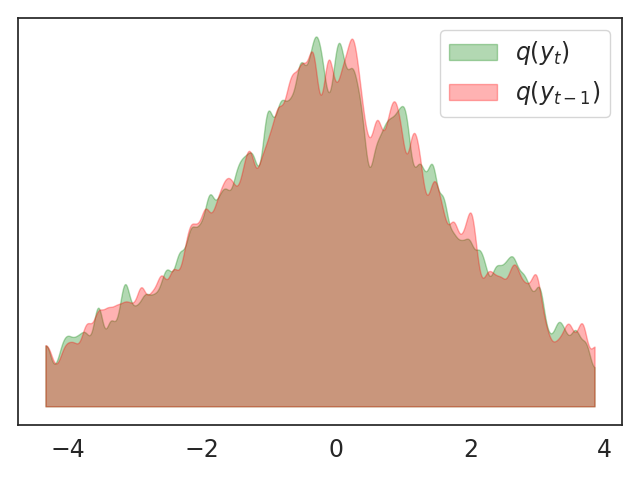}
    \end{minipage}
    \caption*{Frequency 3 (low), $t=2$}

    \vspace{5pt} %

    \begin{minipage}{0.2\textwidth}
        \centering
        \includegraphics[width=\linewidth]{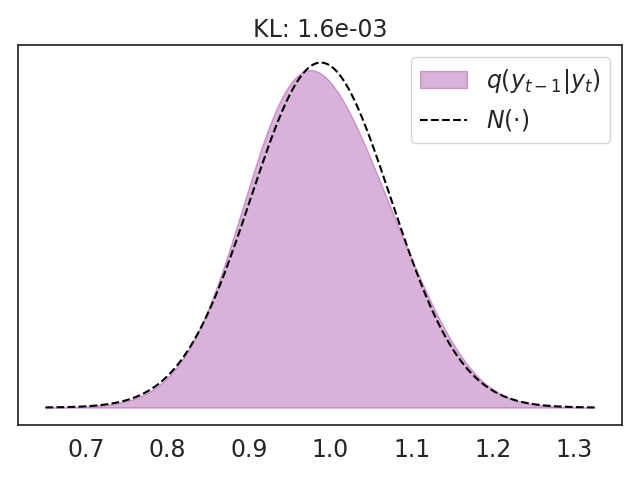}
    \end{minipage}
    \hfill
    \begin{minipage}{0.2\textwidth}
        \centering
        \includegraphics[width=\linewidth]{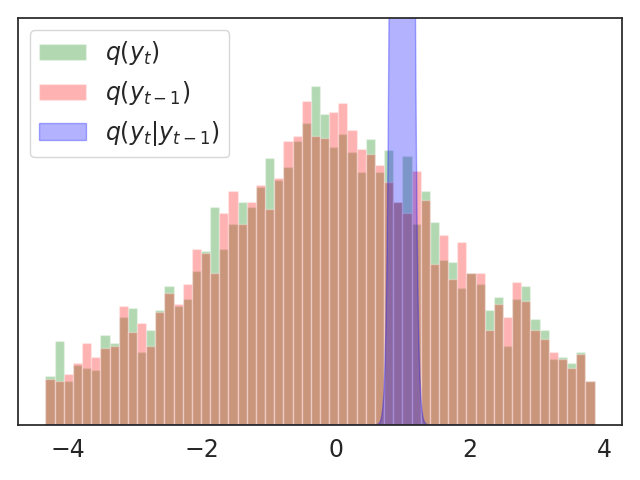}
    \end{minipage}
    \hfill
    \begin{minipage}{0.2\textwidth}
        \centering
        \includegraphics[width=\linewidth]{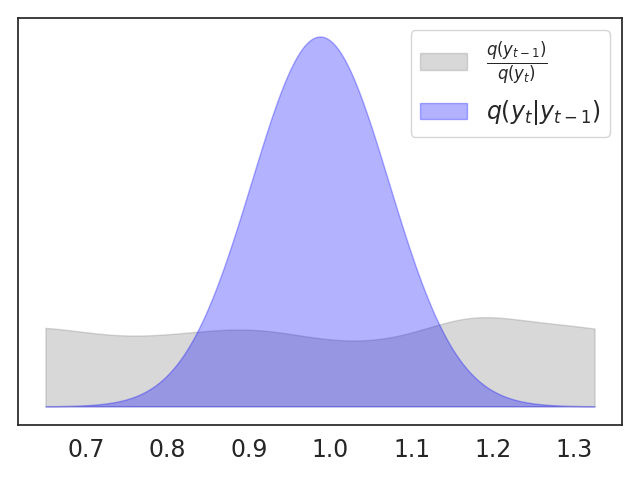}
    \end{minipage}
    \hfill
    \begin{minipage}{0.2\textwidth}
        \centering
        \includegraphics[width=\linewidth]{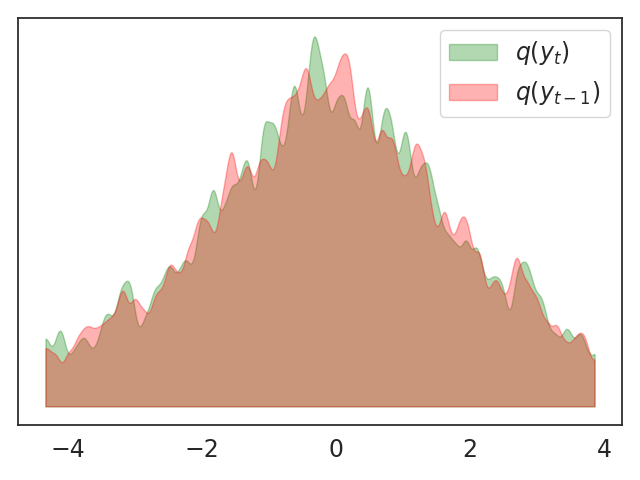}
    \end{minipage}
    \caption*{Frequency 3 (low), $t=3$}

    \vspace{5pt} %

    \begin{minipage}{0.2\textwidth}
        \centering
        \includegraphics[width=\linewidth]{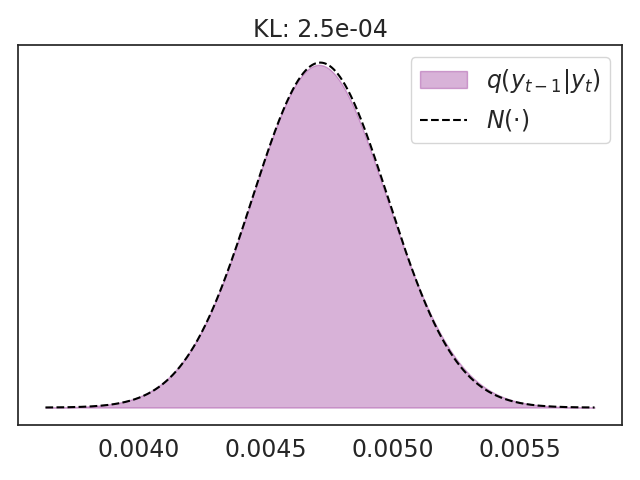}
    \end{minipage}
    \hfill
    \begin{minipage}{0.2\textwidth}
        \centering
        \includegraphics[width=\linewidth]{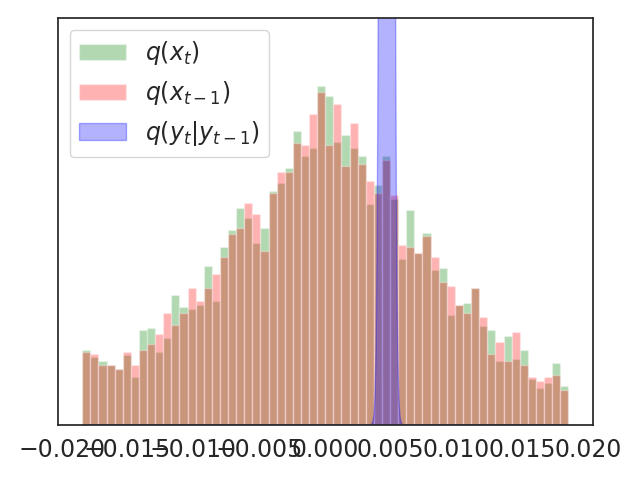}
    \end{minipage}
    \hfill
    \begin{minipage}{0.2\textwidth}
        \centering
        \includegraphics[width=\linewidth]{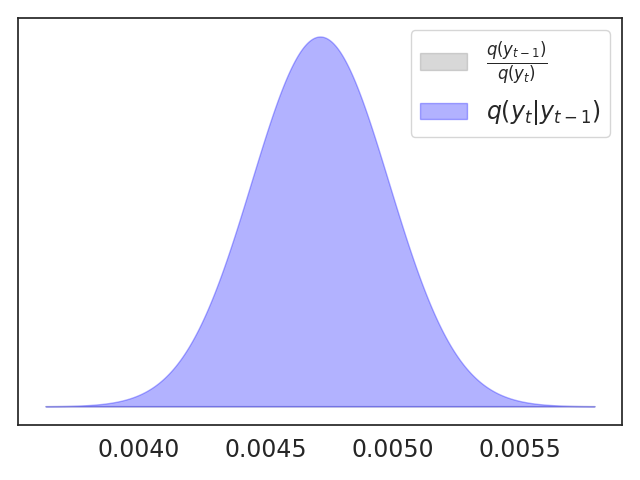}
    \end{minipage}
    \hfill
    \begin{minipage}{0.2\textwidth}
        \centering
        \includegraphics[width=\linewidth]{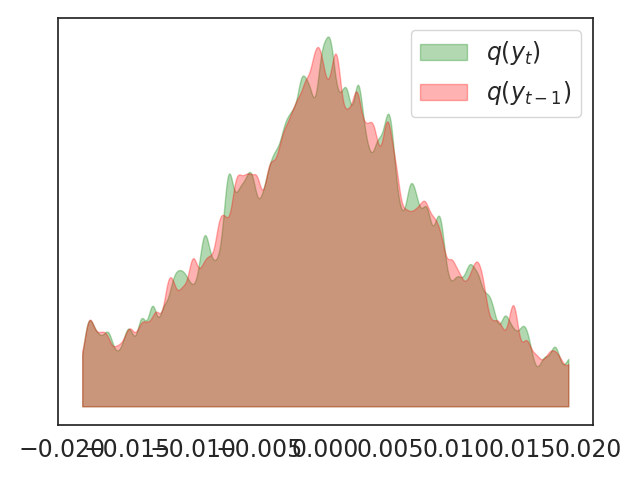}
    \end{minipage}
    \caption*{Frequency 1022 (high), $t=1$}

    \vspace{5pt} %

    \begin{minipage}{0.2\textwidth}
        \centering
        \includegraphics[width=\linewidth]{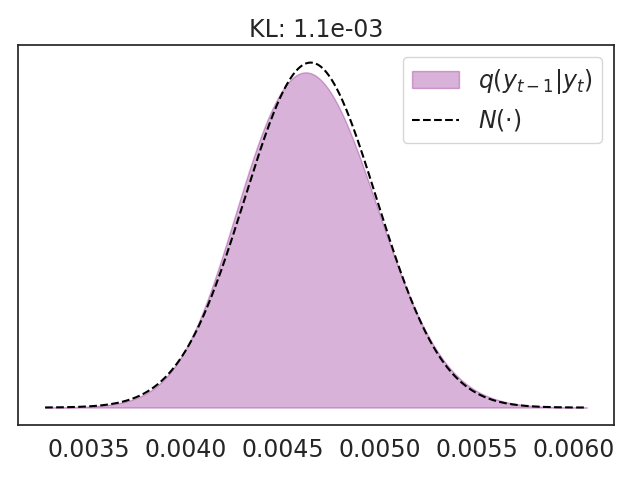}
    \end{minipage}
    \hfill
    \begin{minipage}{0.2\textwidth}
        \centering
        \includegraphics[width=\linewidth]{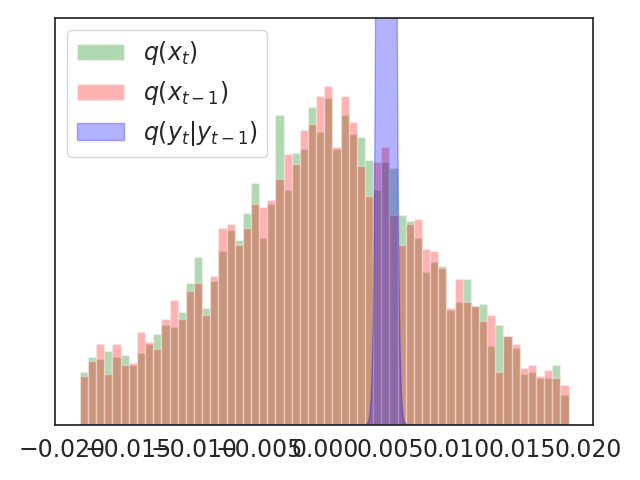}
    \end{minipage}
    \hfill
    \begin{minipage}{0.2\textwidth}
        \centering
        \includegraphics[width=\linewidth]{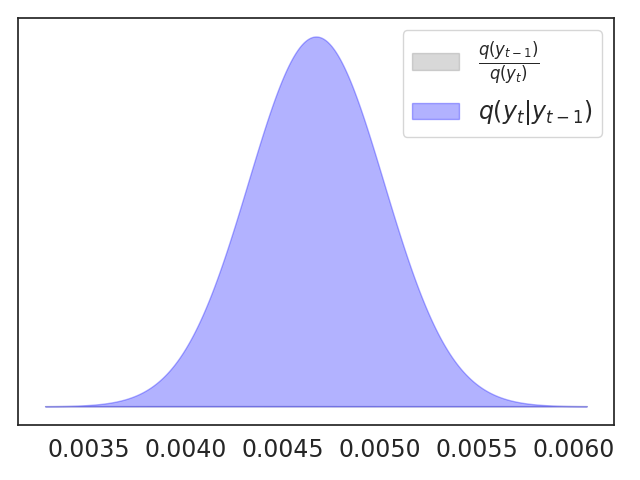}
    \end{minipage}
    \hfill
    \begin{minipage}{0.2\textwidth}
        \centering
        \includegraphics[width=\linewidth]{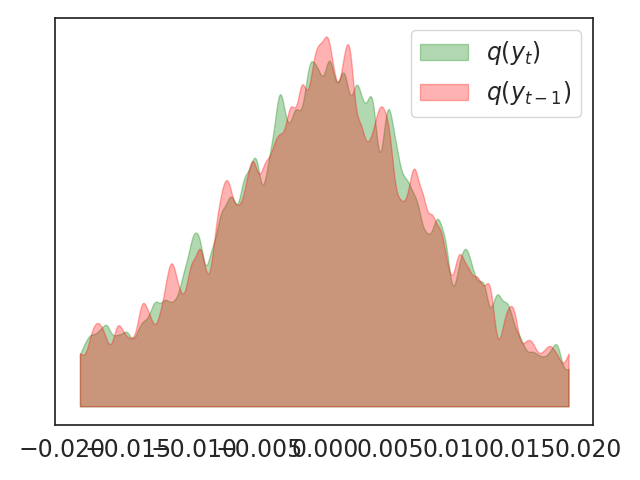}
    \end{minipage}
    \caption*{Frequency 1022 (high), $t=2$}

    \vspace{5pt} %

    \begin{minipage}{0.2\textwidth}
        \centering
        \includegraphics[width=\linewidth]{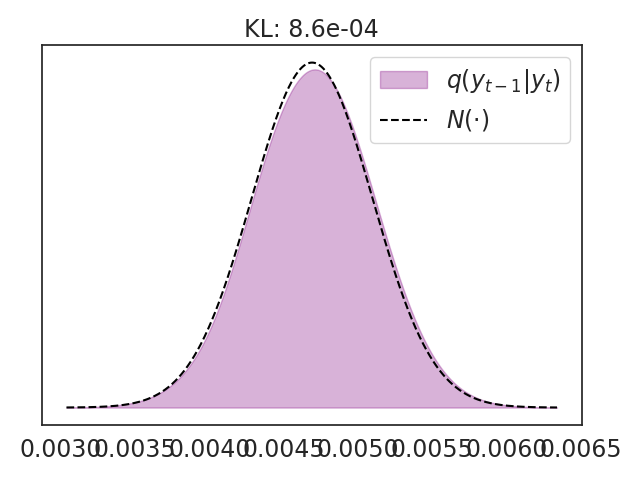}
    \end{minipage}
    \hfill
    \begin{minipage}{0.2\textwidth}
        \centering
        \includegraphics[width=\linewidth]{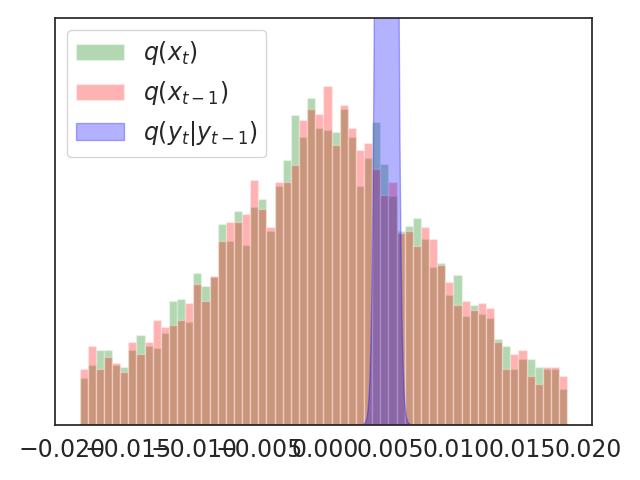}
    \end{minipage}
    \hfill
    \begin{minipage}{0.2\textwidth}
        \centering
        \includegraphics[width=\linewidth]{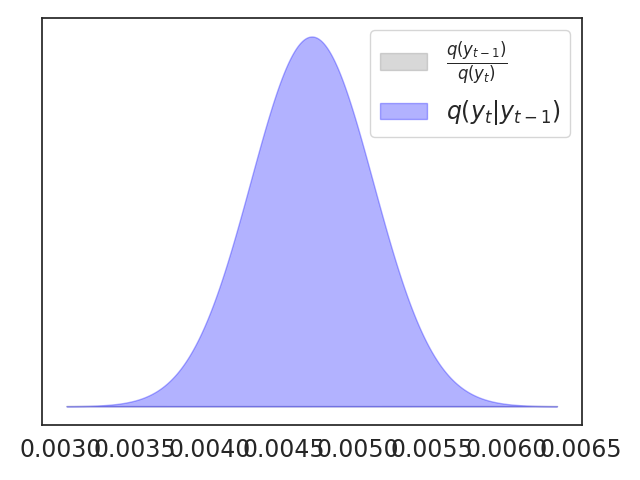}
    \end{minipage}
    \hfill
    \begin{minipage}{0.2\textwidth}
        \centering
        \includegraphics[width=\linewidth]{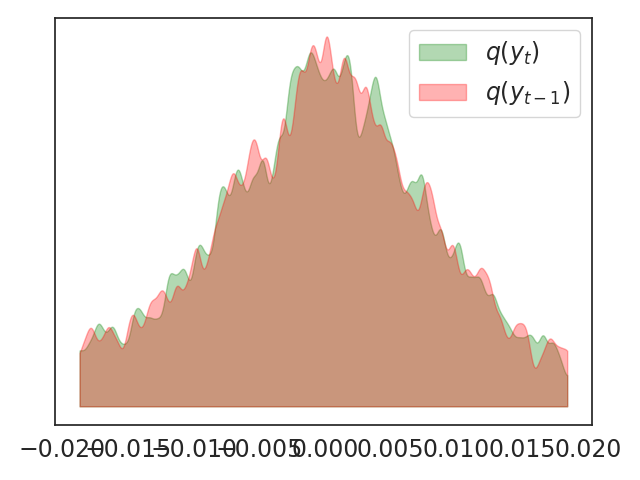}
    \end{minipage}
    \caption*{Frequency 1022 (high), $t=3$}

\caption{Analysis of violations of the Gaussian assumption in EqualSNR (2 of 3).}
\end{figure}

\begin{figure}[p]
    \centering 
    
    \begin{minipage}{0.2\textwidth}
        \centering
        \includegraphics[width=\linewidth]{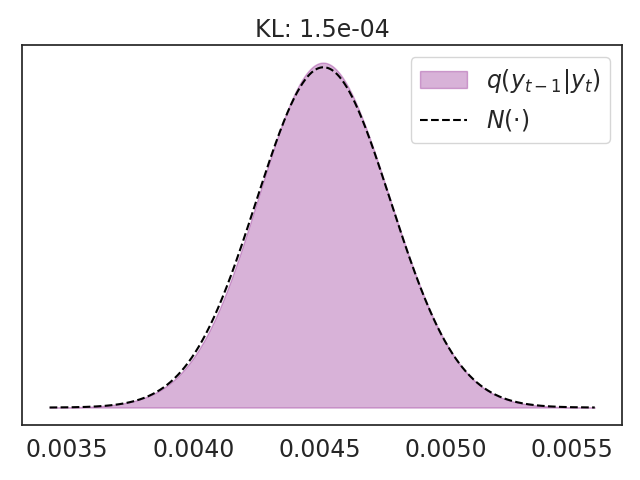}
    \end{minipage}
    \hfill
    \begin{minipage}{0.2\textwidth}
        \centering
        \includegraphics[width=\linewidth]{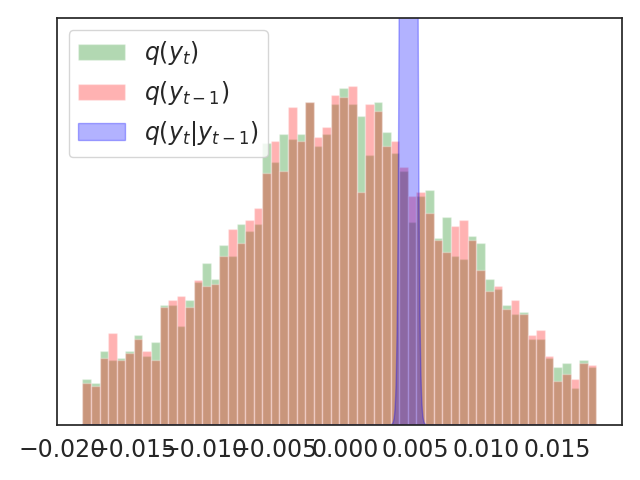}
    \end{minipage}
    \hfill
    \begin{minipage}{0.2\textwidth}
        \centering
        \includegraphics[width=\linewidth]{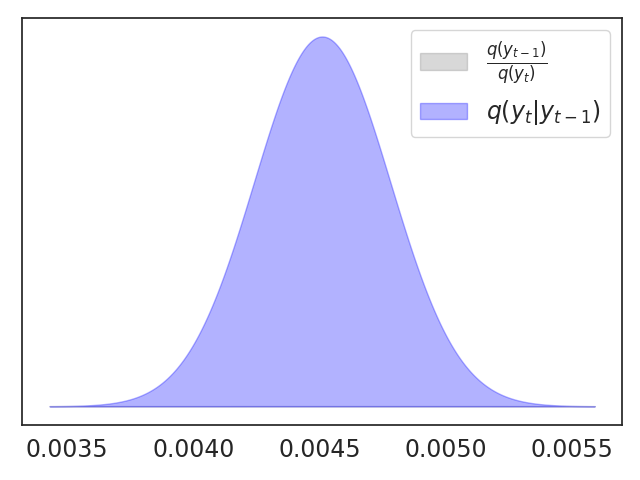}
    \end{minipage}
    \hfill
    \begin{minipage}{0.2\textwidth}
        \centering
        \includegraphics[width=\linewidth]{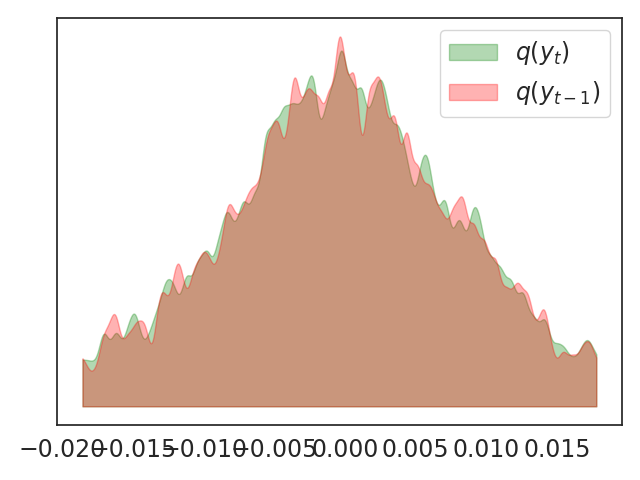}
    \end{minipage}
    \caption*{Frequency 1023 (high), $t=1$}

    \vspace{5pt} %

    \begin{minipage}{0.2\textwidth}
        \centering
        \includegraphics[width=\linewidth]{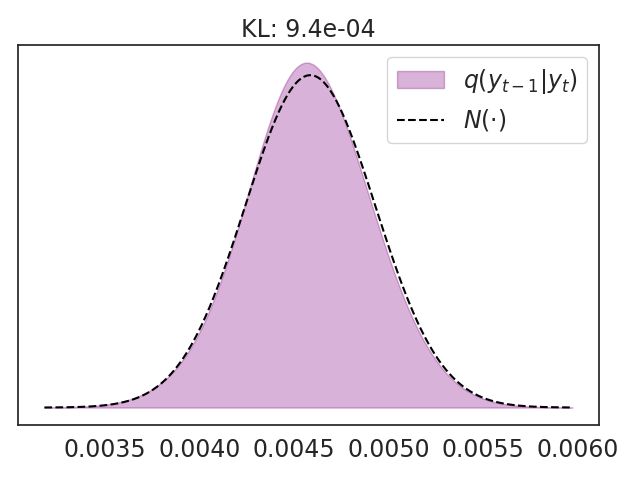}
    \end{minipage}
    \hfill
    \begin{minipage}{0.2\textwidth}
        \centering
        \includegraphics[width=\linewidth]{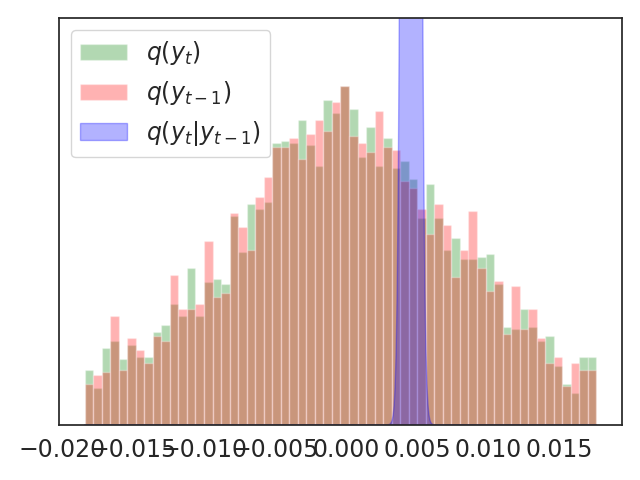}
    \end{minipage}
    \hfill
    \begin{minipage}{0.2\textwidth}
        \centering
        \includegraphics[width=\linewidth]{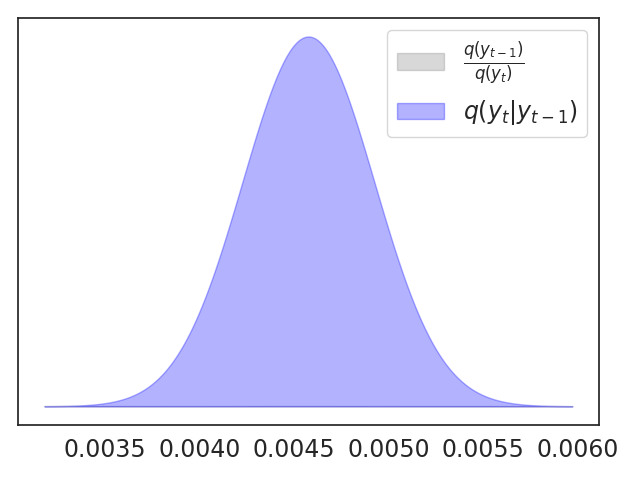}
    \end{minipage}
    \hfill
    \begin{minipage}{0.2\textwidth}
        \centering
        \includegraphics[width=\linewidth]{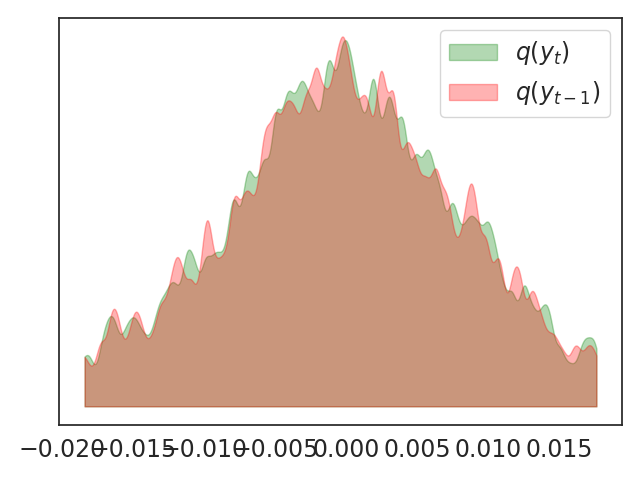}
    \end{minipage}
    \caption*{Frequency 1023 (high), $t=2$}

    \vspace{5pt} %

    \begin{minipage}{0.2\textwidth}
        \centering
        \includegraphics[width=\linewidth]{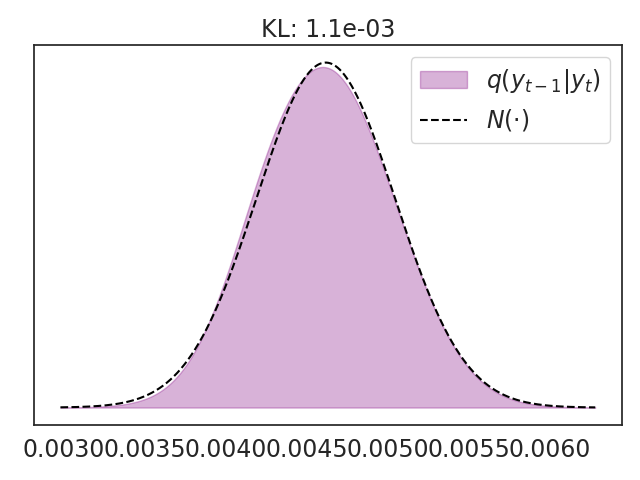}
    \end{minipage}
    \hfill
    \begin{minipage}{0.2\textwidth}
        \centering
        \includegraphics[width=\linewidth]{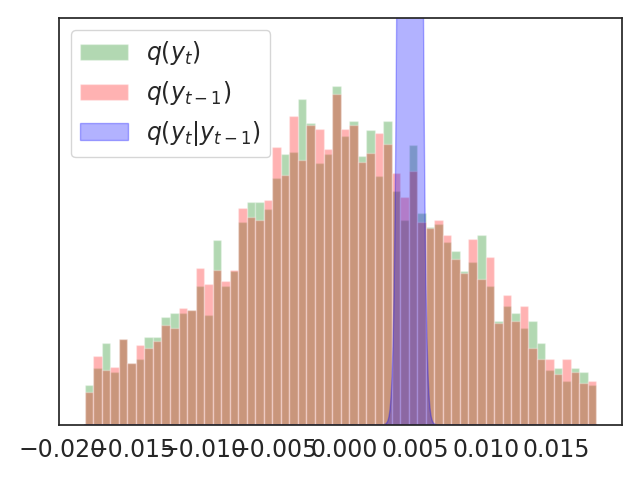}
    \end{minipage}
    \hfill
    \begin{minipage}{0.2\textwidth}
        \centering
        \includegraphics[width=\linewidth]{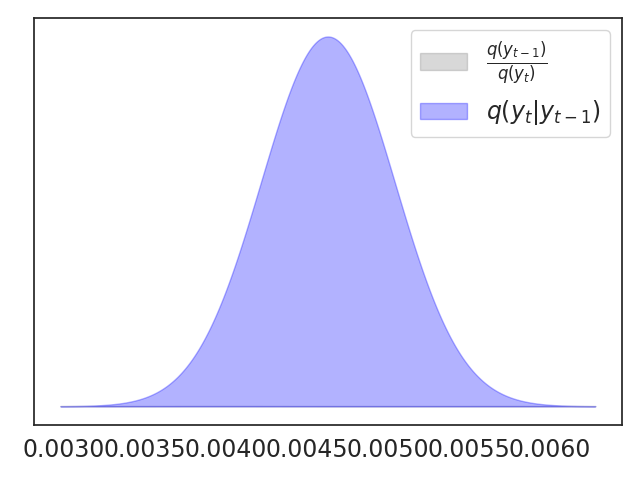}
    \end{minipage}
    \hfill
    \begin{minipage}{0.2\textwidth}
        \centering
        \includegraphics[width=\linewidth]{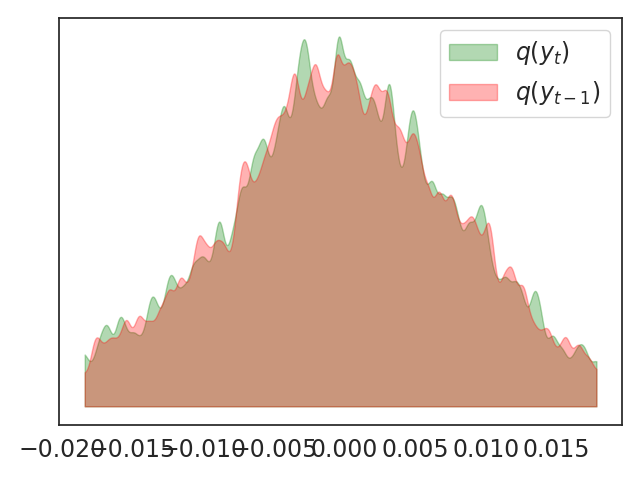}
    \end{minipage}
    \caption*{Frequency 1023 (high), $t=3$}

    \vspace{5pt} %

    \begin{minipage}{0.2\textwidth}
        \centering
        \includegraphics[width=\linewidth]{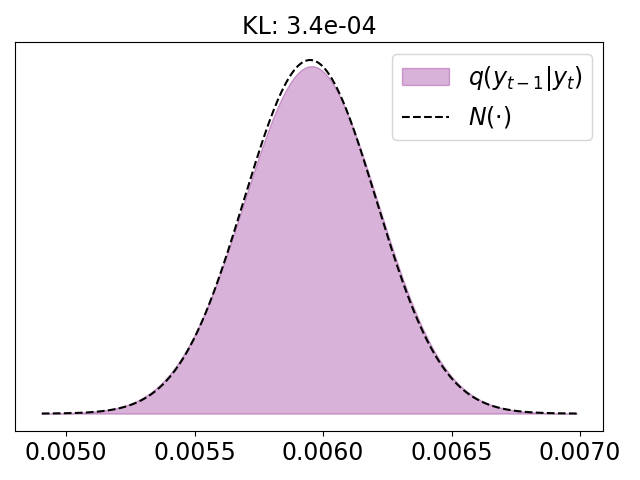}
    \end{minipage}
    \hfill
    \begin{minipage}{0.2\textwidth}
        \centering
        \includegraphics[width=\linewidth]{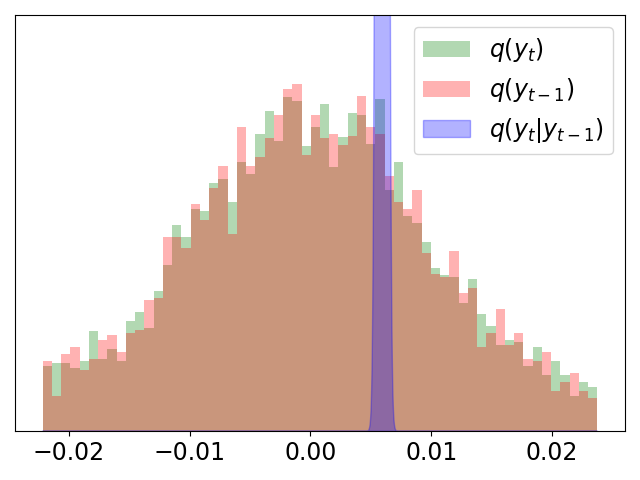}
    \end{minipage}
    \hfill
    \begin{minipage}{0.2\textwidth}
        \centering
        \includegraphics[width=\linewidth]{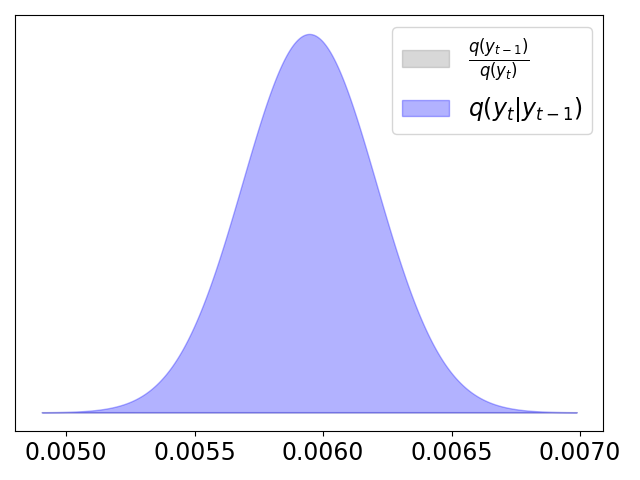}
    \end{minipage}
    \hfill
    \begin{minipage}{0.2\textwidth}
        \centering
        \includegraphics[width=\linewidth]{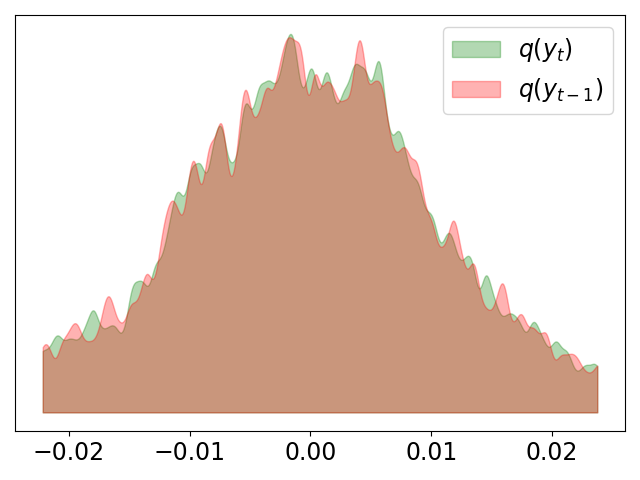}
    \end{minipage}
    \caption*{Frequency 1024 (high), $t=1$}

    \vspace{5pt} %

    \begin{minipage}{0.2\textwidth}
        \centering
        \includegraphics[width=\linewidth]{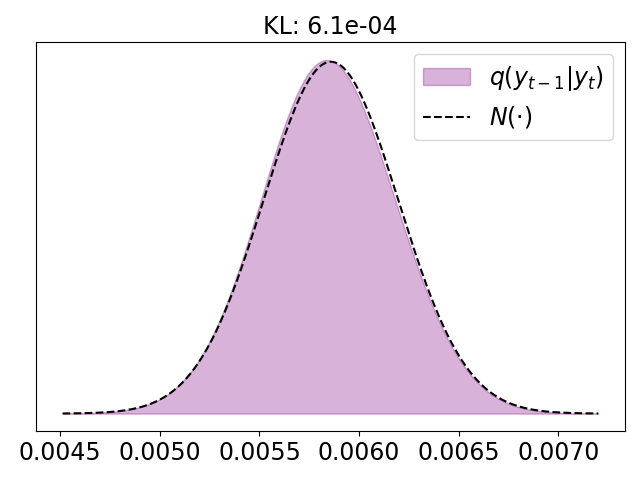}
    \end{minipage}
    \hfill
    \begin{minipage}{0.2\textwidth}
        \centering
        \includegraphics[width=\linewidth]{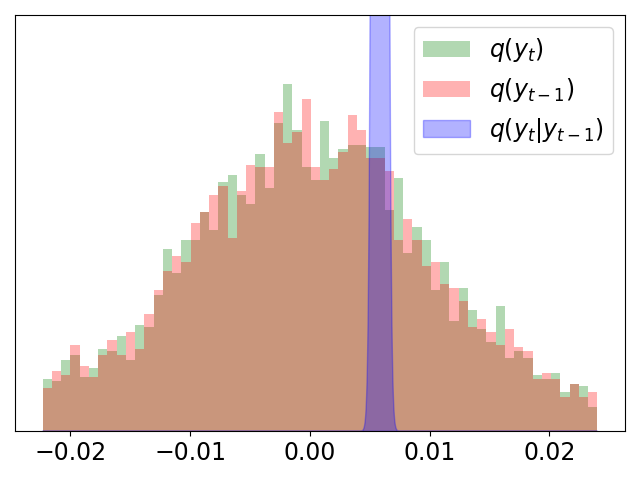}
    \end{minipage}
    \hfill
    \begin{minipage}{0.2\textwidth}
        \centering
        \includegraphics[width=\linewidth]{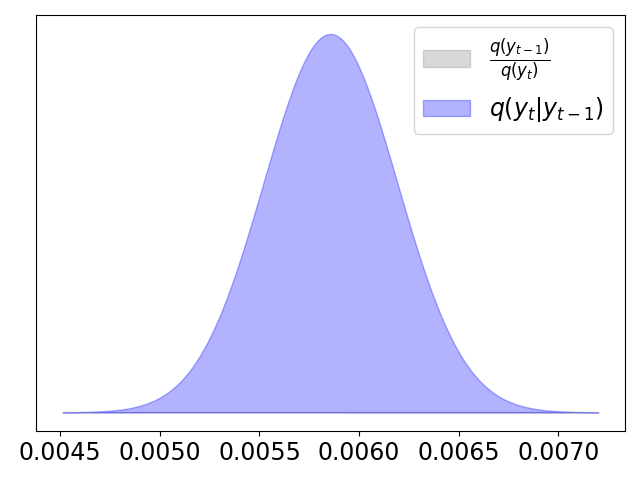}
    \end{minipage}
    \hfill
    \begin{minipage}{0.2\textwidth}
        \centering
        \includegraphics[width=\linewidth]{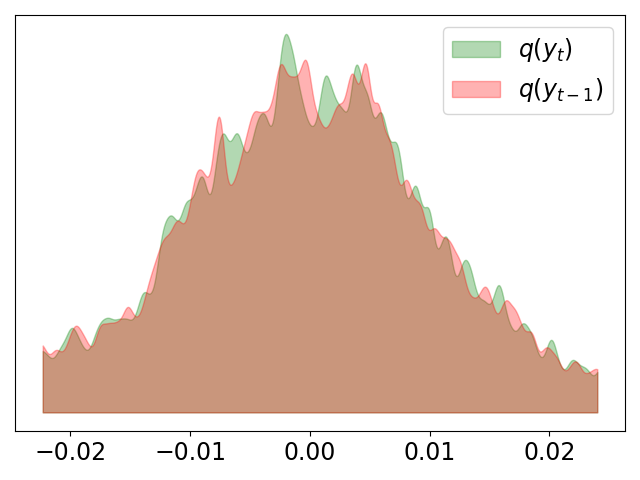}
    \end{minipage}
    \caption*{Frequency 1024 (high), $t=2$}

    \vspace{5pt} %

    \begin{minipage}{0.2\textwidth}
        \centering
        \includegraphics[width=\linewidth]{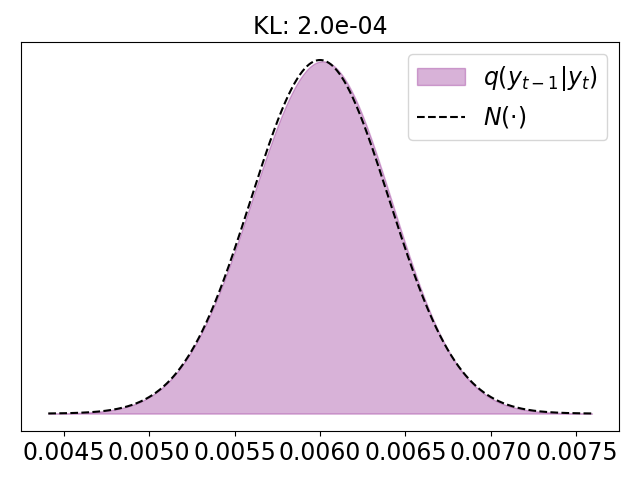}
    \end{minipage}
    \hfill
    \begin{minipage}{0.2\textwidth}
        \centering
        \includegraphics[width=\linewidth]{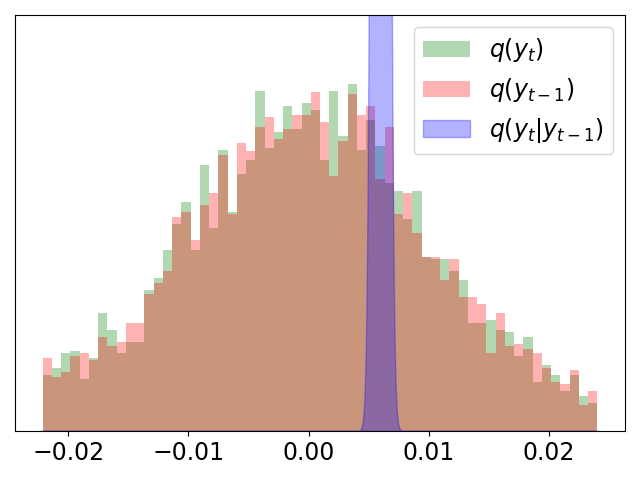}
    \end{minipage}
    \hfill
    \begin{minipage}{0.2\textwidth}
        \centering
        \includegraphics[width=\linewidth]{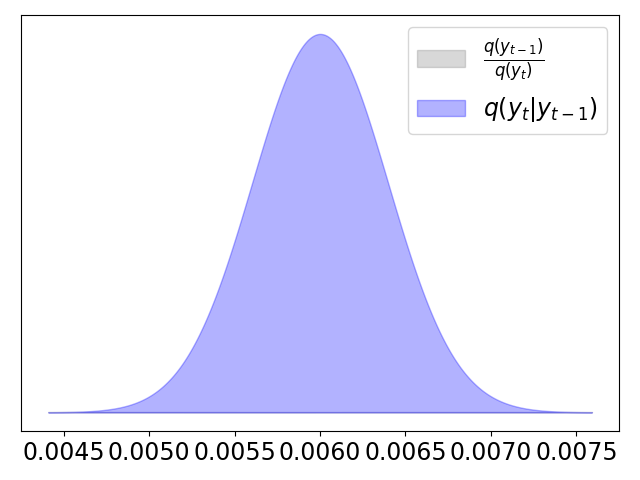}
    \end{minipage}
    \hfill
    \begin{minipage}{0.2\textwidth}
        \centering
        \includegraphics[width=\linewidth]{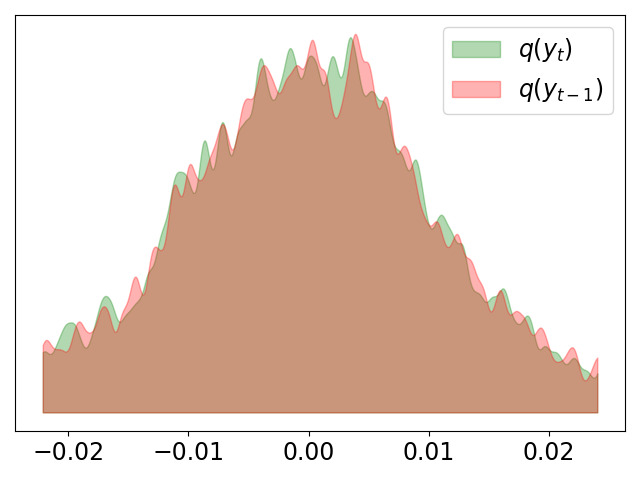}
    \end{minipage}
    \caption*{Frequency 1024 (high), $t=3$}

\caption{Analysis of violations of the Gaussian assumption in EqualSNR (3 of 3).}
\label{fig:gauss_app_end}
\end{figure}

\paragraph{Training a FlippedSNR diffusion model is challenging. }

As part of our experiments we tried to train with an inverted or \textit{flipped} SNR frequency profile as compared to DDPM (FlippedSNR), see \Cref{fig:flipped-snr}.
The idea of this schedule is to generate high frequencies before low frequencies in the reverse process, by having a noising process that corrupts the high frequencies gently at early timesteps, and corrupts low frequencies at a higher rate.
This noising process was accomplished by computing the Fourier variance of the data, and inverting the forward process by adding noise that \textit{flipped} the heatmap (see \Cref{fig:flipped-snr,fig:fwd_flippedsnr_app} for an illustration).
However, in spite of numerous attempts, we did not manage to learn the reverse process appropriately (see \Cref{fig:bwd_flippedsnr_app} for a visualisation of generated samples), and hence failed to approximate the data distribution.
In the following, we describe these attempts further.

We tried naive approaches such as multiple beta schedules, including cosine, linear, as well as more complicated variants which change the SNR linearly and with a cosine function, and different start and end values for the SNR.
In particular, we calculated the highest SNR values at $t=1$ in standard DDPM (about 40 dB for low frequencies) with a standard cosine schedule.
Similarly, lowest SNR values on high frequencies were around 5 dB also at $t=1$. 
This disparity corroborates that frequencies are not noised at the same rate.

We speculated that if we flipped the heatmap profile and increased the frequencies with lowest SNRs (in the flipped regime low frequencies would be very quickly corrupted), we could possibly train successfully.
We shifted the start SNR value to 60 dB (to improve the lowest SNR profile), and we kept the end SNR value the same at -40 dB.
We also tried shifting to -20 dB to maintain the same range.
We also trained with different number of discretization steps, i.e., $T\in\{1000,5000,10000\}$
However, all these approaches failed to train.
These experiments provide partial evidence that generating high before low frequencies may be challenging or not efficient, but further experiments are required to confirm this hypothesis.

\clearpage
\Cref{fig:c_matrix} presents the $\mathbf{C}$ matrix for the three standard imaging benchmarks used in \Cref{sec:experiments}.

\begin{figure}[h]
    \centering
    \begin{minipage}{0.33\textwidth}
        \centering
        \includegraphics[width=\textwidth]{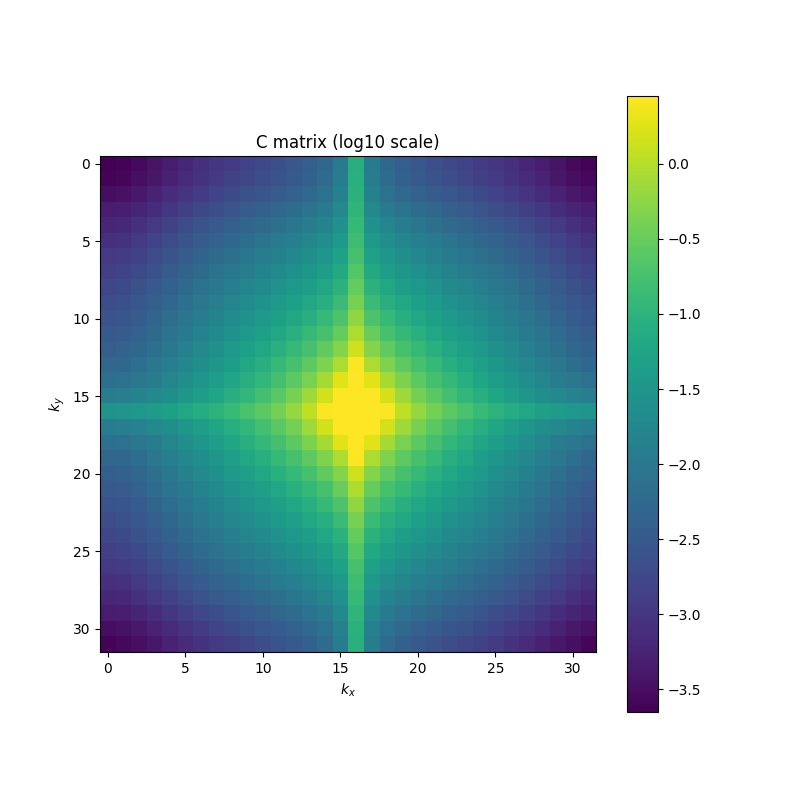}
    \end{minipage}
    \hfill
    \begin{minipage}{0.33\textwidth}
        \centering
        \includegraphics[width=\textwidth]{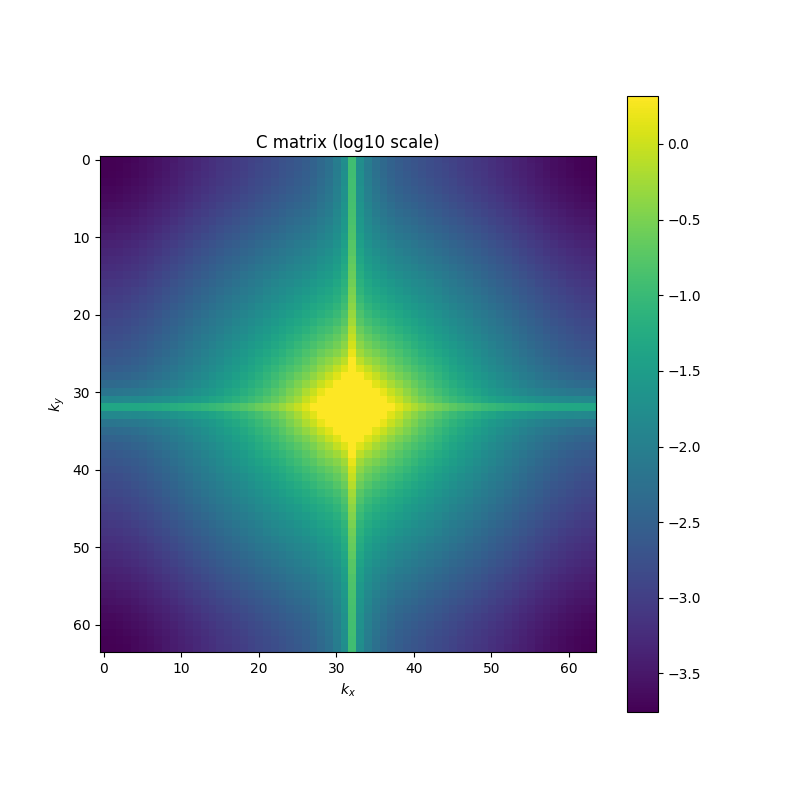}
    \end{minipage}
    \hfill
    \begin{minipage}{0.33\textwidth}
        \centering
        \includegraphics[width=\textwidth]{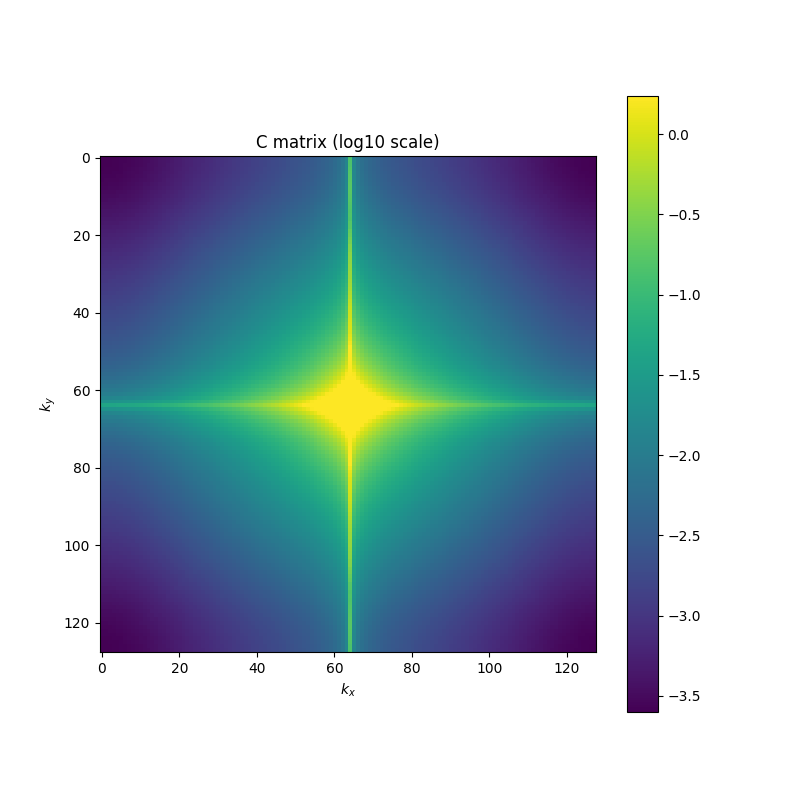}
    \end{minipage}
    \caption{The signal variance $\mathbf{C}$ in Fourier space (magnitude) on a log scale (with basis 10) for CIFAR10 [left], CelebA [centre] and LSUN Church [right].
    For visualisation purposes, the largest value plotted in bright yellow corresponds to a value larger or equal to the $.95$-quantile of $\mathbf{C}$.
    }
    \label{fig:c_matrix}
\end{figure}

\Cref{fig:flipped-snr-gif} illustrates the SNR trajectory of the FlippedSNR forward process, discussed in \Cref{app:Extended theoretical framework and proofs}, corresponding to the respective illustrations of DDPM and EqualSNR in \Cref{fig:fig1}.

\begin{figure}[h!]
    \centering
    \animategraphics[loop, autoplay, width=.6\textwidth]{2}{Figures/fig1_modality-images_cifar10_K_True/fig-}{01}{21}
    \caption{
    The alternate FlippedSNR forward process noises \textit{low-frequency} components before and faster than high-frequency components, flipping the SNR profile of DDPM. 
    This figure complements the corresponding figures for DDPM and EqualSNR in \Cref{fig:fig1}.
    The GIF is best viewed in Adobe Reader.
    }
    \label{fig:flipped-snr-gif}
\end{figure}

\clearpage

The goal of \Cref{fig:freq_var_app} is---augmenting the view of \Cref{fig:snr_heatmaps}---to characterise that the learned reverse process mirrors the forward process in diffusion models. 
This figure computes the per-frequency variances of samples generated via the respective push-forward distributions (in the forward process), and by sampling with DDIM or \Cref{algo:sampling} (in the reverse process), respectively. 
It hence provides a notion of per-frequency variability at a given diffusion timestep.
It can also be interpreted relatively to the base variability at $t=0$, which indicates when certain frequencies stabilise.

\begin{figure}[h!]
    \centering
    \includegraphics[width=1.\textwidth]{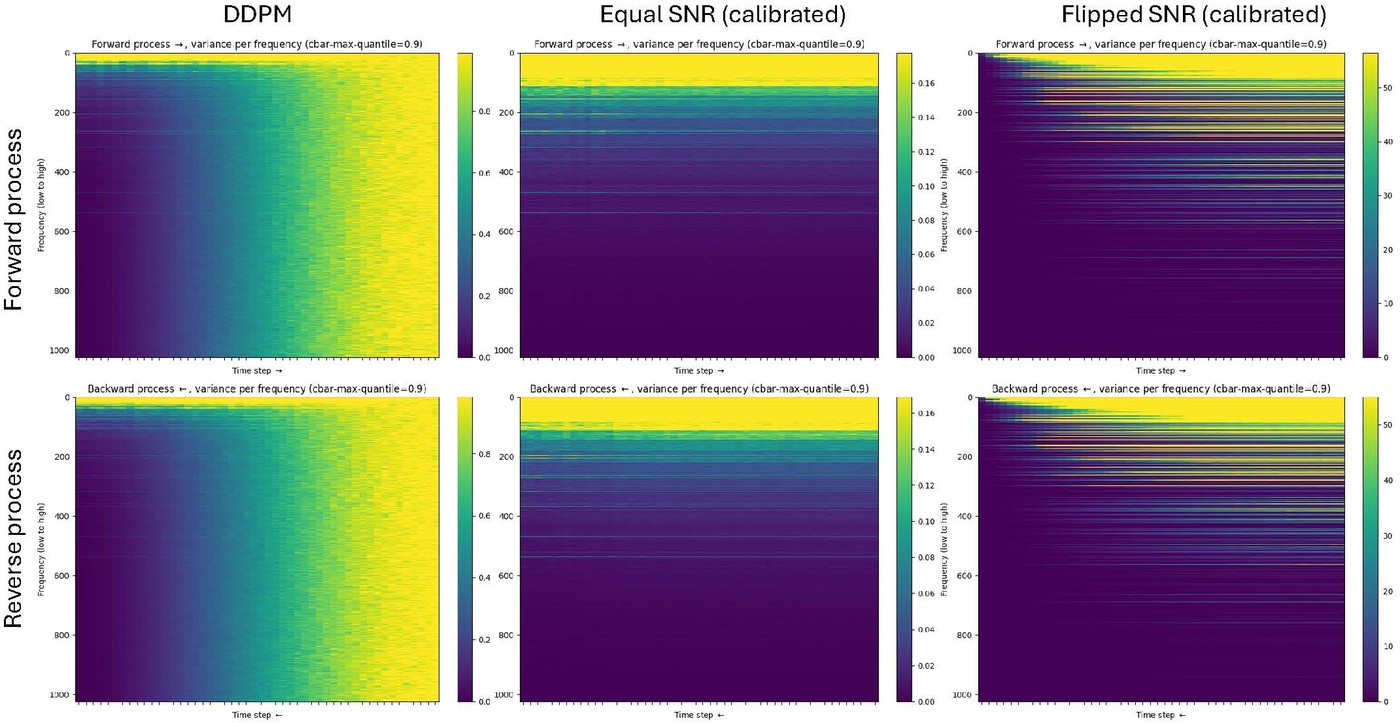}
    \caption{
    \textit{The reverse process learns to mirror the forward process.}
    We show the variances at time $t$ of the [top] forward and  [bottom] backward process in Fourier space, for [left] DDPM, [centre] \texttt{ESNR}, and [right] FlippedSNR. 
    }
    \label{fig:freq_var_app}
\end{figure}

In \Cref{fig:C_cifar10_ddpm} we illustrate the $\mathbf{C}$ matrix, i.e. the dimension-wise variance in pixel space (as opposed to in Euclidean space), illustrating the how the variance of the data spreads spatially across an image. 
In particular, we can see that the variance deviates from $1$, rendering DDPM not a variance-preserving forward process, in the sense that the variance changes throughout diffusion time.

\begin{figure}[h]
    \centering
    \includegraphics[width=0.3\linewidth]{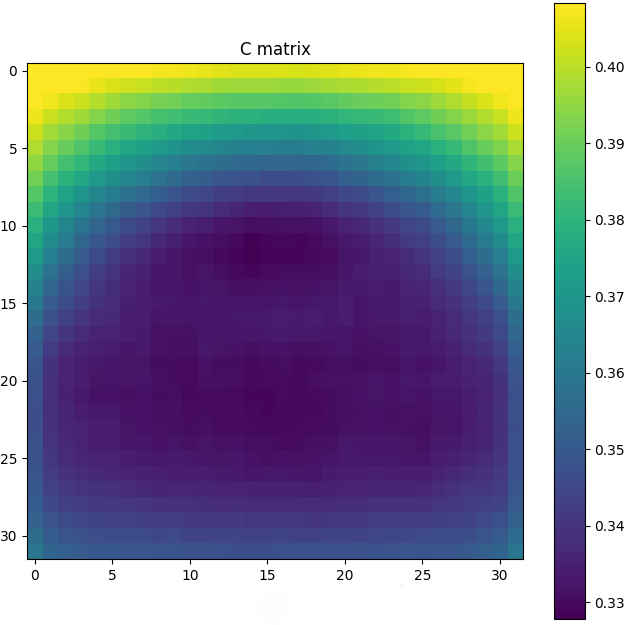}
    \caption{
    \textit{DDPM on images normalised to $[-1,1]$ is \textit{not} variance-preserving},  i.e. the variance changes throughout the forward process.
    To see this, we plot the coordinate-wise signal variances $\text{diag}(Cov(\mathbf{x}_0))$ in \textit{pixel space} (in contrast to \Cref{fig:c_matrix}), computed on the CIFAR10 dataset whose images have been normalised to the range $[-1,1]$.}
    \label{fig:C_cifar10_ddpm}
\end{figure}

\clearpage

In Figures \ref{fig:heatmap_celeba} and \ref{fig:heatmap_lsun} we present SNR heatmaps for the forward process on the higher-resolution datasets CelebA and LSUN Church datasets, corresponding to \Cref{fig:snr_heatmaps}.
In \Cref{fig:flipped-snr} we also present the SNR heatmap for the forward process of FlippedSNR on CIFAR10.

\begin{figure*}[h!]
    \centering
    \hfill
    \begin{minipage}{0.41\textwidth}
        \centering
        \includegraphics[width=\linewidth]{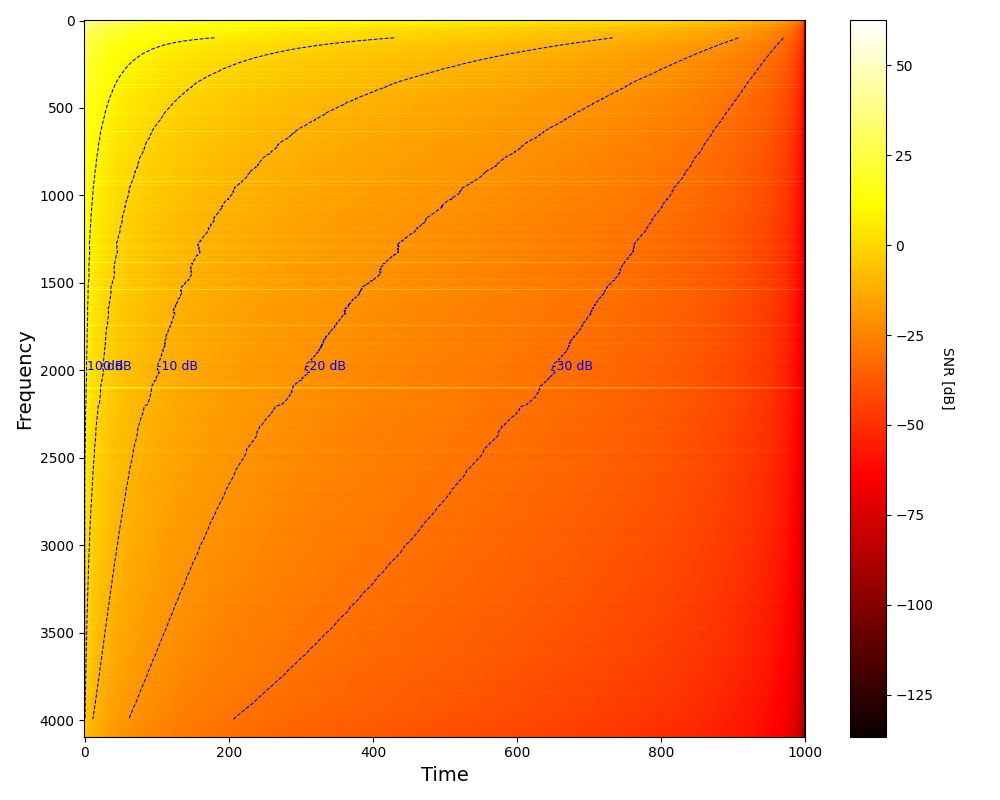}
    \end{minipage}
    \hfill
    \begin{minipage}{0.41\textwidth}
        \centering
        \includegraphics[width=\linewidth]{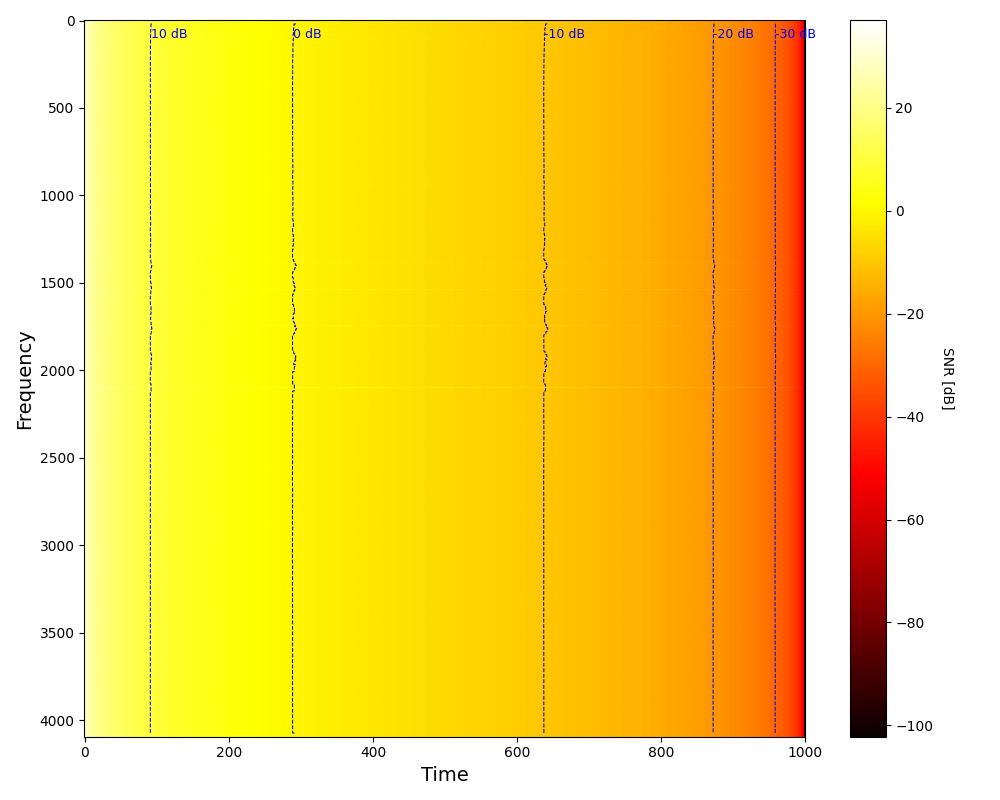}
    \end{minipage}
    \hfill
    \caption{SNR of the [left] DDPM and [right] EqualSNR forward processes on CelebA ($64 \times 64$). %
    }
    \label{fig:heatmap_celeba}
\end{figure*}

\begin{figure*}[h!]
    \centering
    \hfill
    \begin{minipage}{0.41\textwidth}
        \centering
        \includegraphics[width=\linewidth]{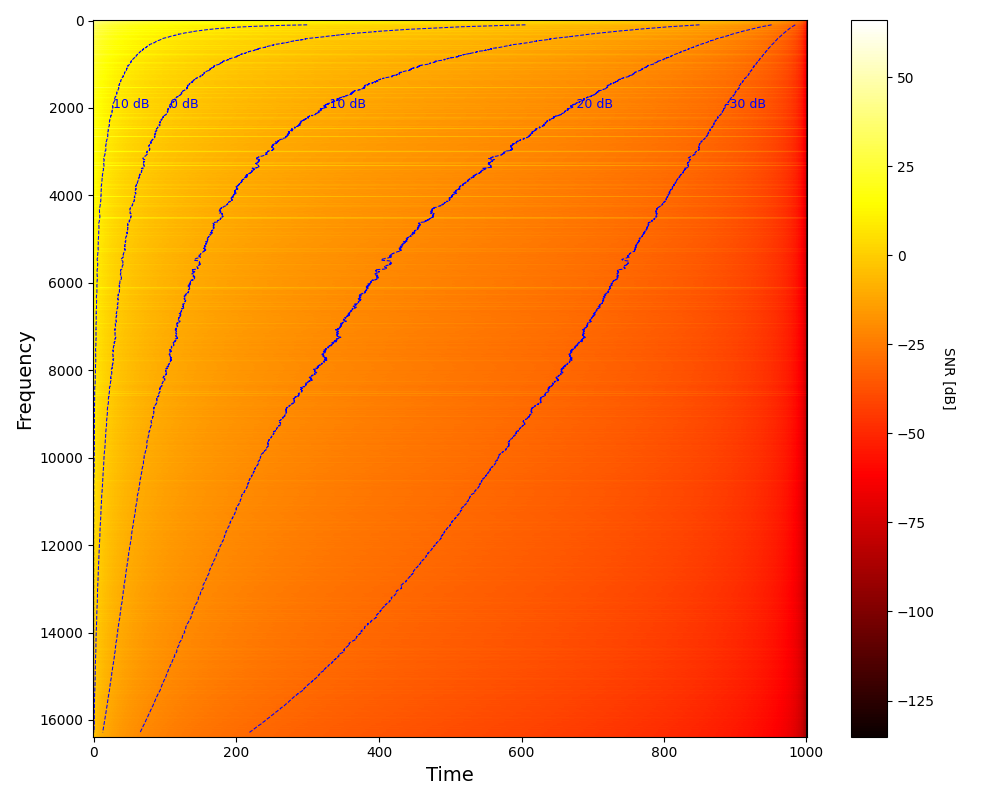}
        \label{fig:celeba_ddpm}
    \end{minipage}
    \hfill
    \begin{minipage}{0.42\textwidth}
        \centering
        \includegraphics[width=\linewidth]{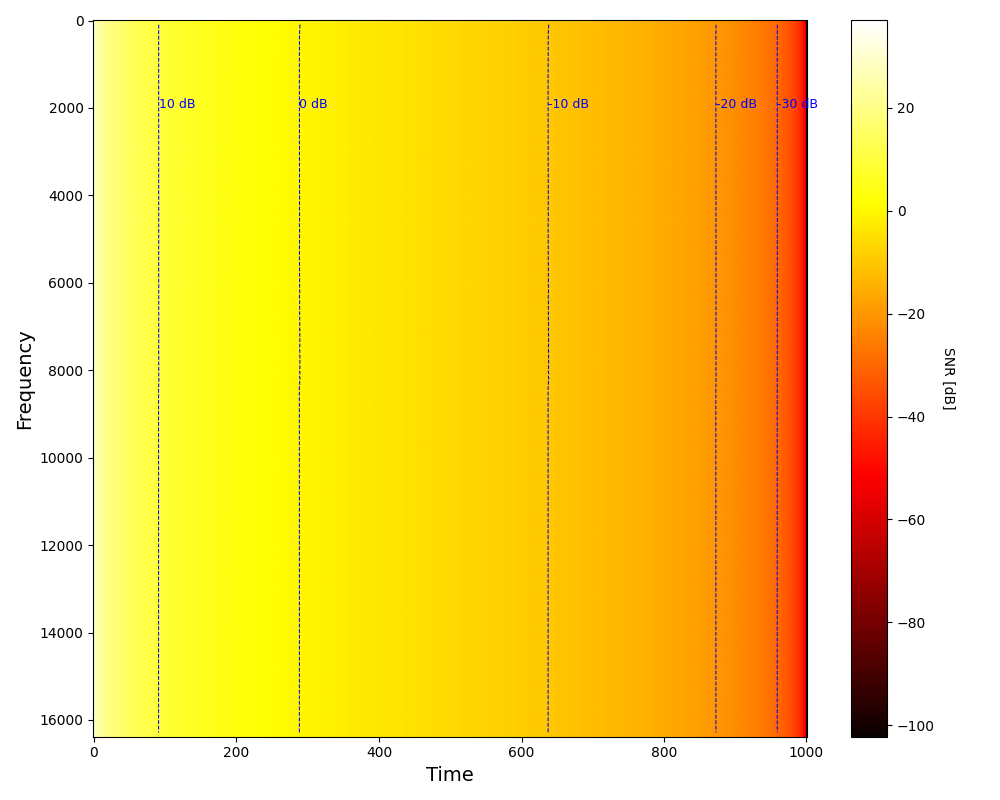}
        \label{fig:celeba_kddpm}
    \end{minipage}
    \hfill
    \caption{SNR of the [left] DDPM and [right] EqualSNR forward processes on LSUN Churches ($128 \times 128$). %
    }
    \label{fig:heatmap_lsun}
\end{figure*}

\begin{figure*}[h!]
    \centering
    \includegraphics[width=0.41\linewidth]{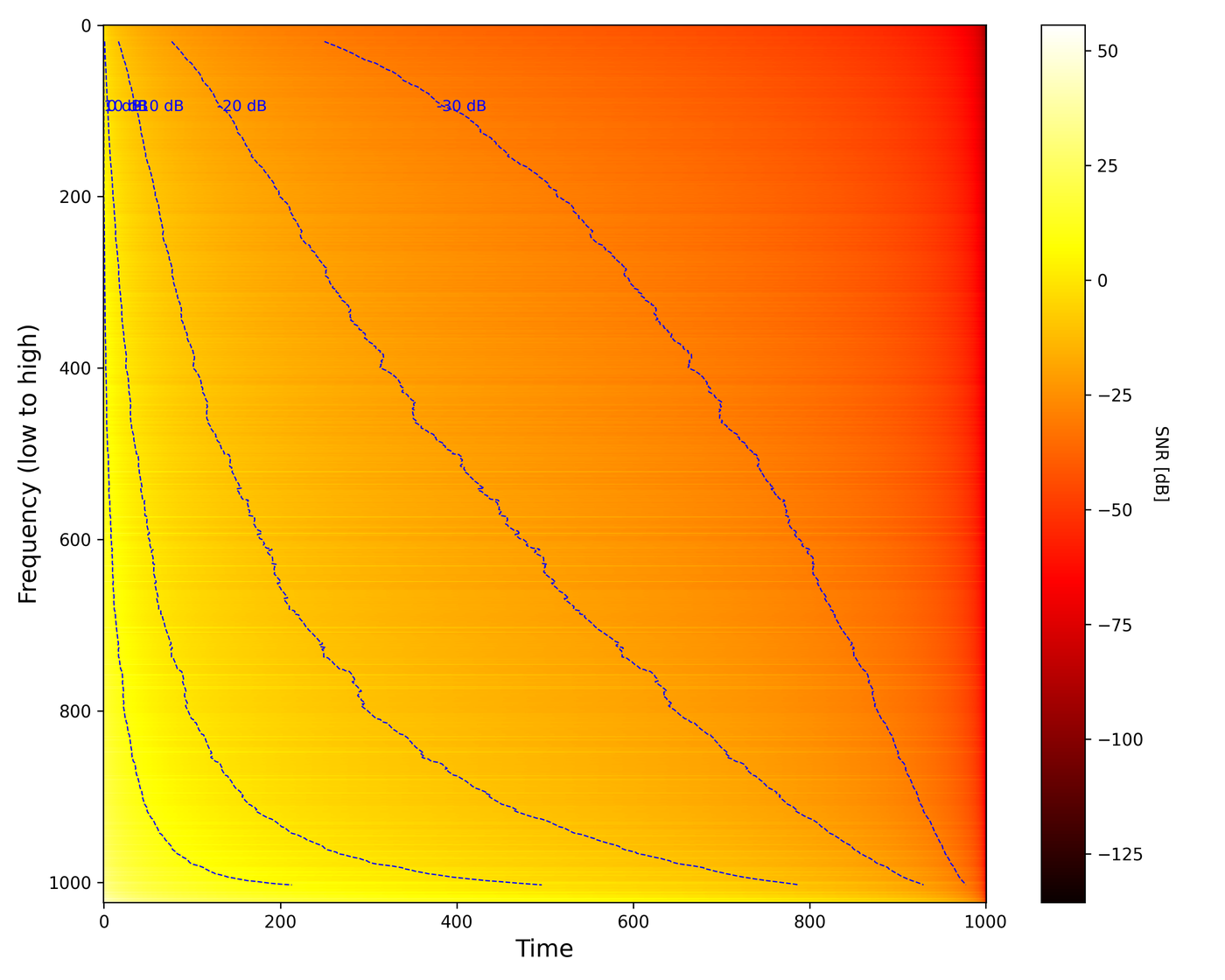}
    \caption{Flipped SNR forward process on CIFAR10 ($32 \times 32$).}
    \label{fig:flipped-snr}
\end{figure*}

\clearpage

Figures \ref{fig:fwd_ddpm_app} to \ref{fig:fwd_flippedsnr_app} present further examples illustrating the \textit{foward} process of DDPM, EqualSNR and FlippedSNR, augmenting \Cref{fig:fwd-bwd-high-low} in the main text.

\begin{figure}[ph!]
    \centering
    \includegraphics[width=.55\textwidth]{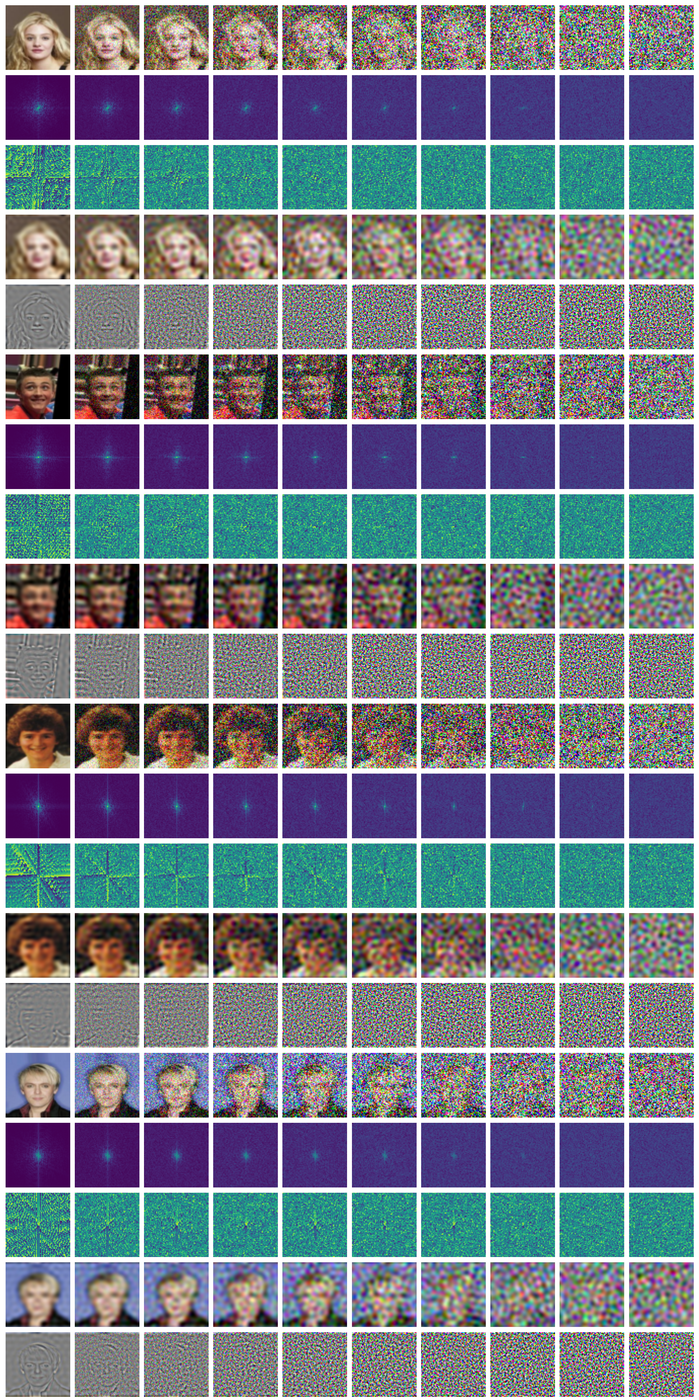}
    \caption{Forward process of DDPM, at uniformly-spaced timsteps between $t=0$ and $t=T$ (left to right).
    Each block of five rows visualises the noising process in pixel space (row 1) and in Fourier space (magnitude and phase, rows 2 and 3), and the low- and high-pass filtered images from row 1 (rows 4 and 5).}
    \label{fig:fwd_ddpm_app}
\end{figure}

\begin{figure}[ph]
    \centering
    \includegraphics[width=.6\textwidth]{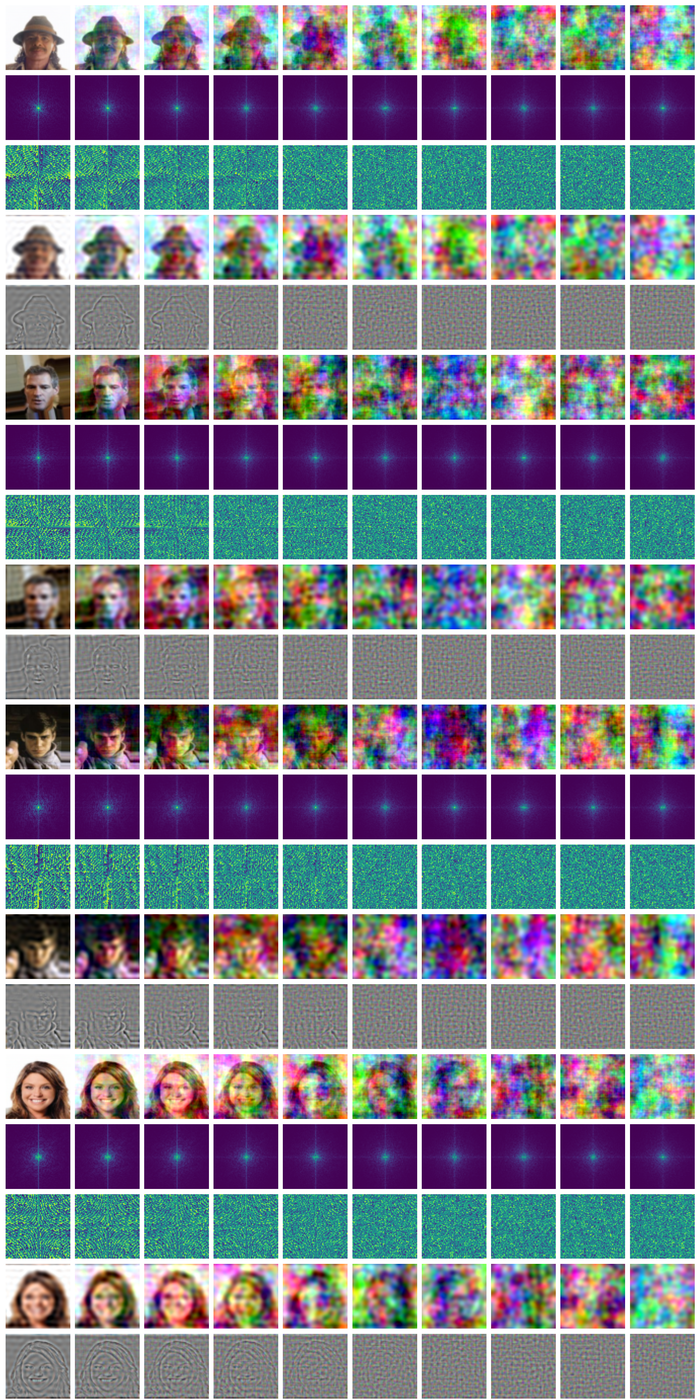}
    \caption{Forward process of EqualSNR, at uniformly-spaced timsteps between $t=0$ and $t=T$ (left to right).
    Each block of five rows visualises the noising process in pixel space (row 1) and in Fourier space (magnitude and phase, rows 2 and 3), and the low- and high-pass filtered images from row 1 (rows 4 and 5).}
    \label{fig:fwd_equalsnr_app}
\end{figure}

\begin{figure}[ph]
    \centering
    \includegraphics[width=.6\textwidth]{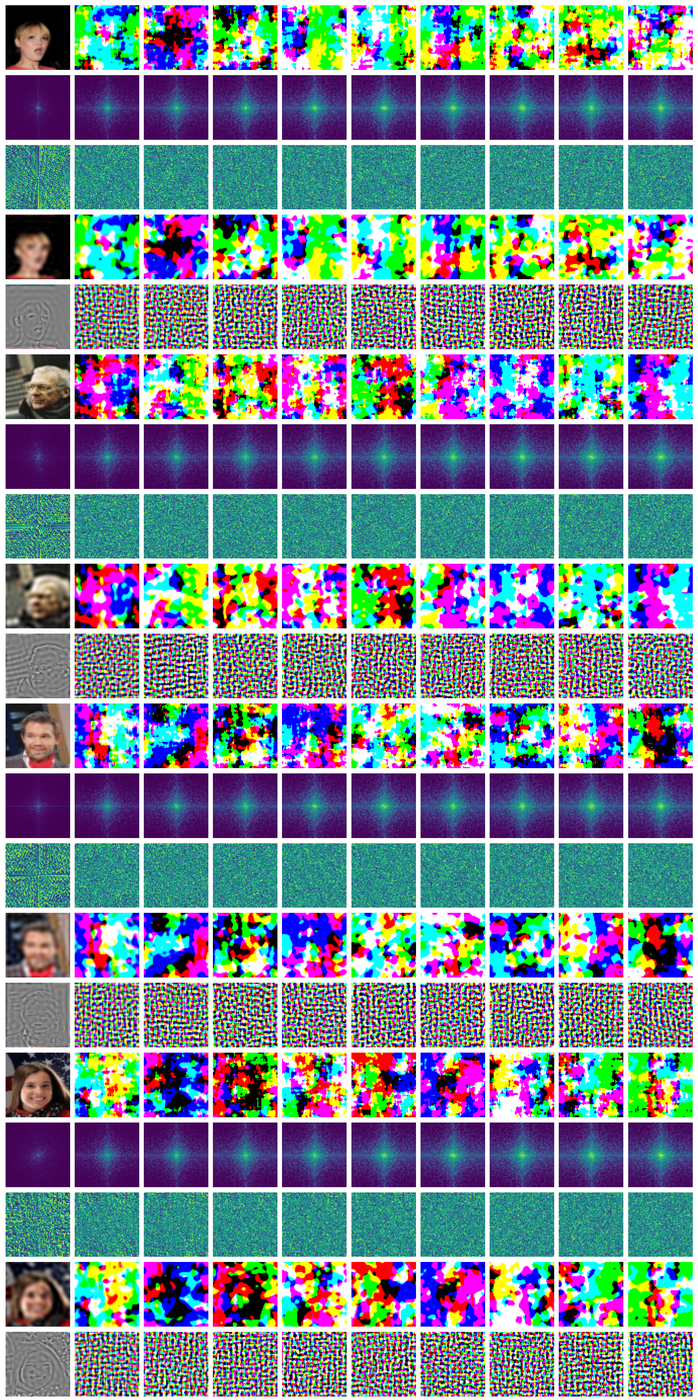}
    \caption{Forward process of FlippedSNR, at uniformly-spaced timsteps between $t=0$ and $t=T$ (left to right).
    Each block of five rows visualises the noising process in pixel space (row 1) and in Fourier space (magnitude and phase, rows 2 and 3), and the low- and high-pass filtered images from row 1 (rows 4 and 5).}
    \label{fig:fwd_flippedsnr_app}
\end{figure}

\clearpage

Figures \ref{fig:bwd_ddpm_app} to \ref{fig:bwd_flippedsnr_app} present further examples illustrating the \textit{reverse} process of DDPM, EqualSNR and FlippedSNR, augmenting \Cref{fig:fwd-bwd-high-low} in the main text.
Note that the FlippedSNR diffusion model could not be trained successfully, resulting in a deteriorated sample quality.

\begin{figure}[ph!]
    \centering
    \includegraphics[width=.55\textwidth]{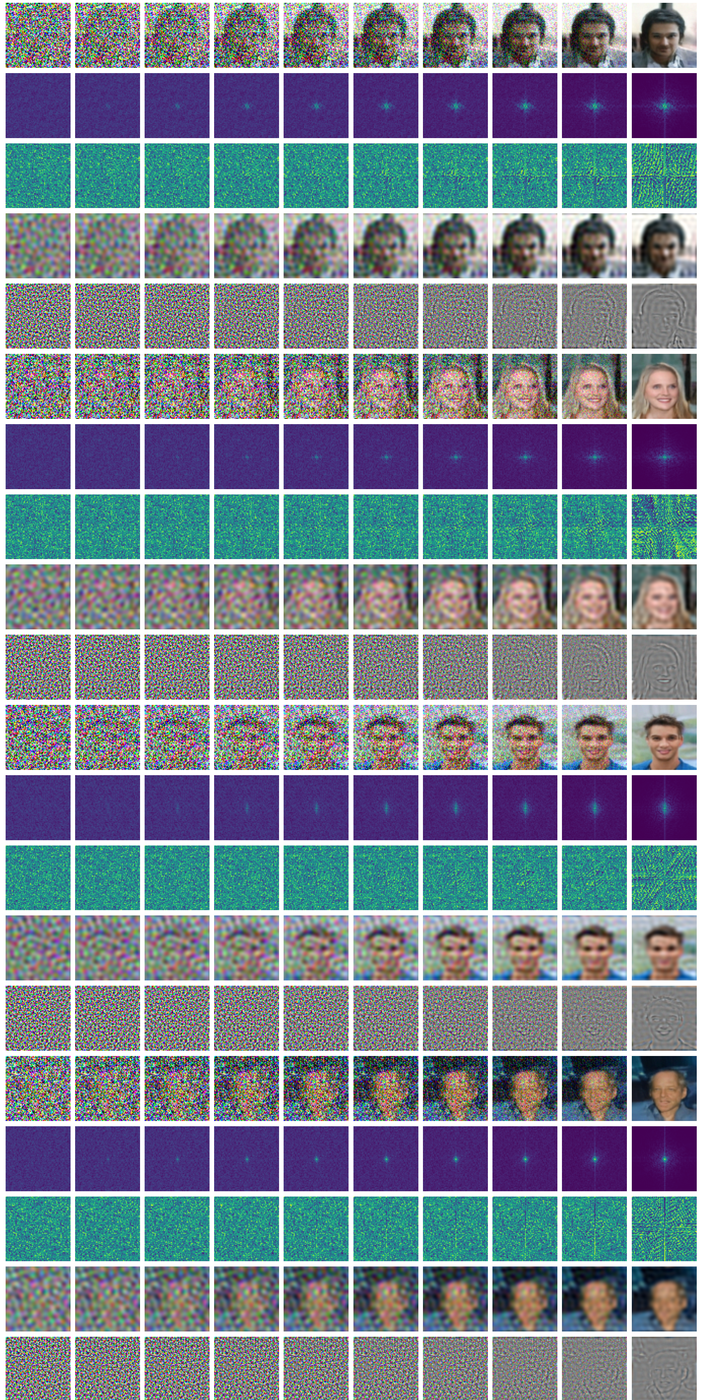}
    \caption{Reverse process of DDPM, at uniformly-spaced timsteps between $t=T$ and $t=0$ (left to right).
    Each block of five rows visualises the generative process in pixel space (row 1) and in Fourier space (magnitude and phase, rows 2 and 3), and the low- and high-pass filtered images from row 1 (rows 4 and 5). 
    }
    \label{fig:bwd_ddpm_app}
\end{figure}

\begin{figure}[ph]
    \centering
    \includegraphics[width=.6\textwidth]{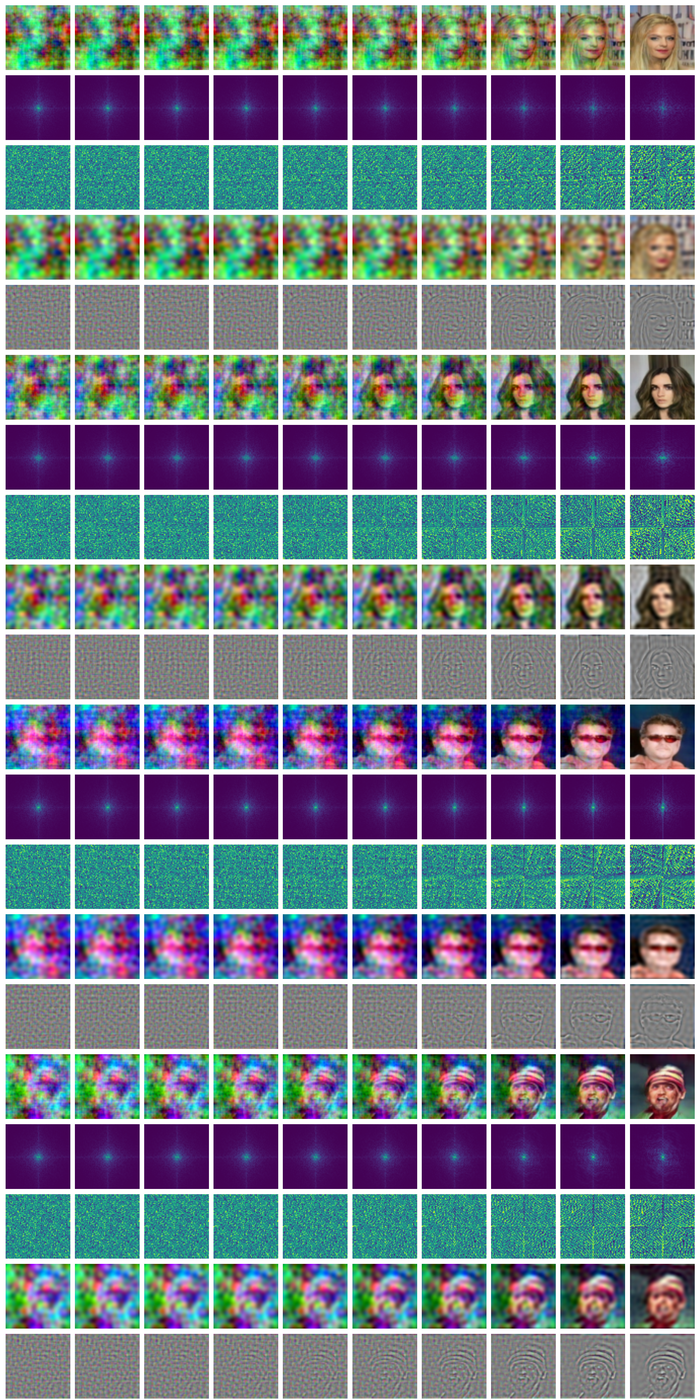}
    \caption{Reverse process of EqualSNR, at uniformly-spaced timsteps between $t=T$ and $t=0$ (left to right).
    Each block of five rows visualises the generative process in pixel space (row 1) and in Fourier space (magnitude and phase, rows 2 and 3), and the low- and high-pass filtered images from row 1 (rows 4 and 5). 
    }
    \label{fig:bwd_equalsnr_app}
\end{figure}

\begin{figure}[ph]
    \centering
    \includegraphics[width=.6\textwidth]{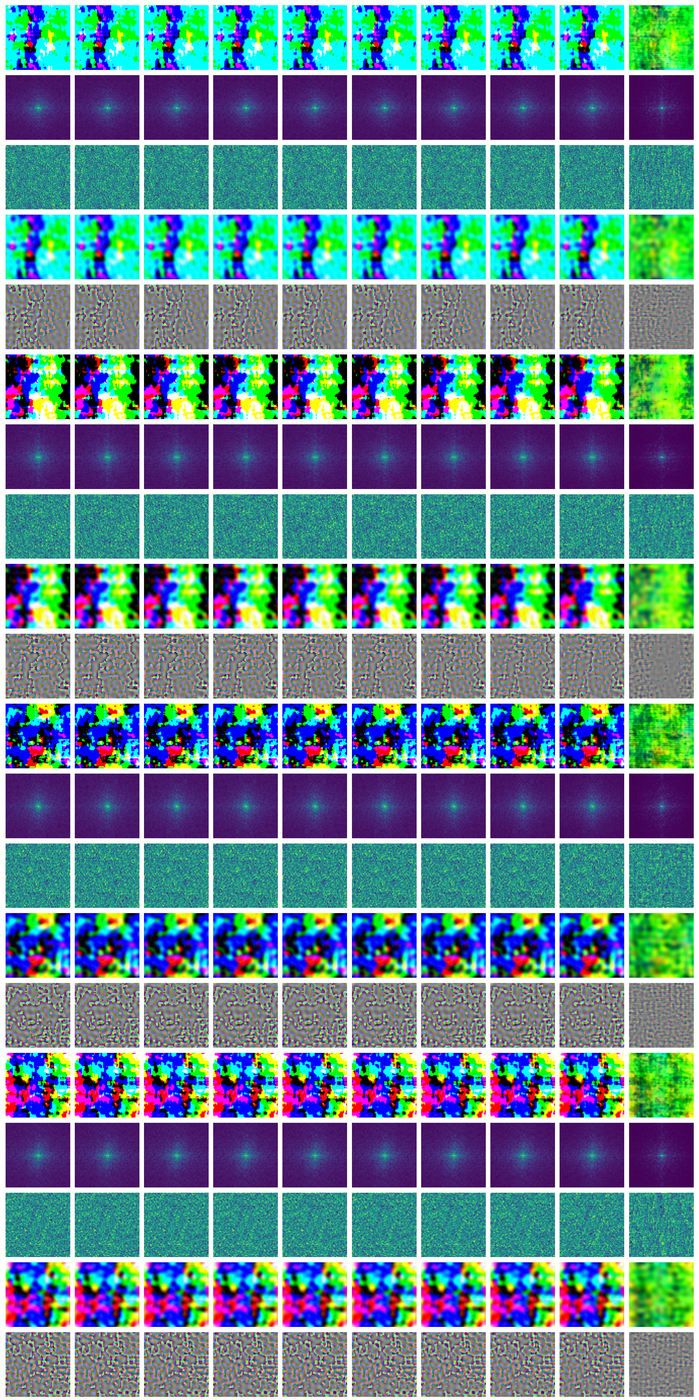}
    \caption{Reverse process of FlippedSNR, at uniformly-spaced timsteps between $t=T$ and $t=0$ (left to right).
    Each block of five rows visualises the generative process in pixel space (row 1) and in Fourier space (magnitude and phase, rows 2 and 3), and the low- and high-pass filtered images from row 1 (rows 4 and 5). 
    }
    \label{fig:bwd_flippedsnr_app}
\end{figure}

\clearpage

Figures \ref{fig:samples_cifar10} to \ref{fig:samples_lsun} present samples comparing a DDPM and EqualSNR diffusion model trained with the exact same hyperparameters and neural architecture, respectively.

\begin{figure}[ph]
    \centering
    \begin{minipage}{0.44\textwidth}
        \centering
        \includegraphics[width=\textwidth]{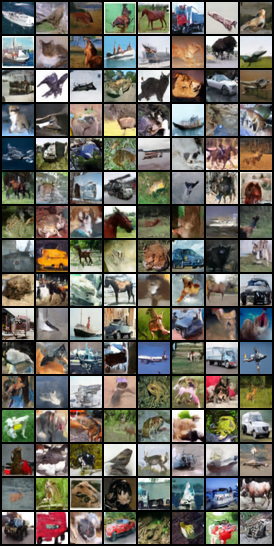}
        \caption*{DDPM}
    \end{minipage}
    \begin{minipage}{0.44\textwidth}
        \centering
        \includegraphics[width=\textwidth]{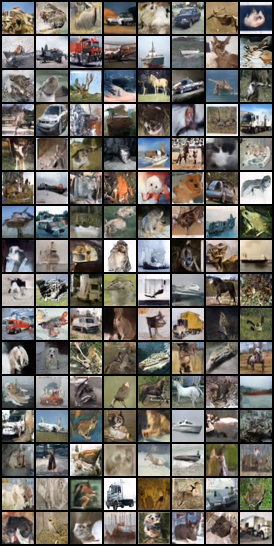}
        \caption*{EqualSNR}
    \end{minipage}
    \caption{Samples of a diffusion model trained on \texttt{CIFAR10} with a  [left] DDPM and [right] EqualSNR (calibrated) forward process, using $T=200$ steps at inference time.}
    \label{fig:samples_cifar10}
\end{figure}

\begin{figure}[ph]
    \centering
    \begin{minipage}{0.44\textwidth}
        \centering
        \includegraphics[width=\textwidth]{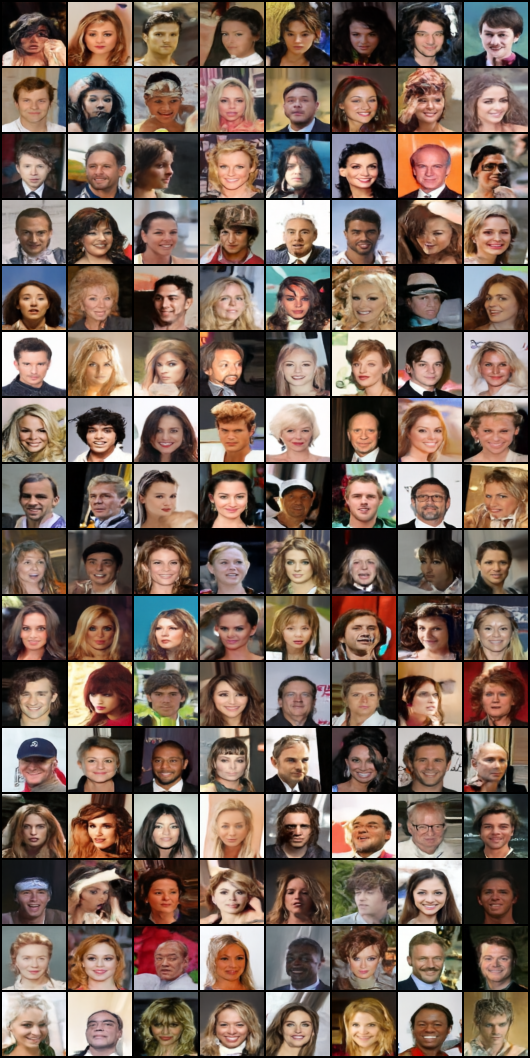}
        \caption*{DDPM}
    \end{minipage}
    \begin{minipage}{0.44\textwidth}
        \centering
        \includegraphics[width=\textwidth]{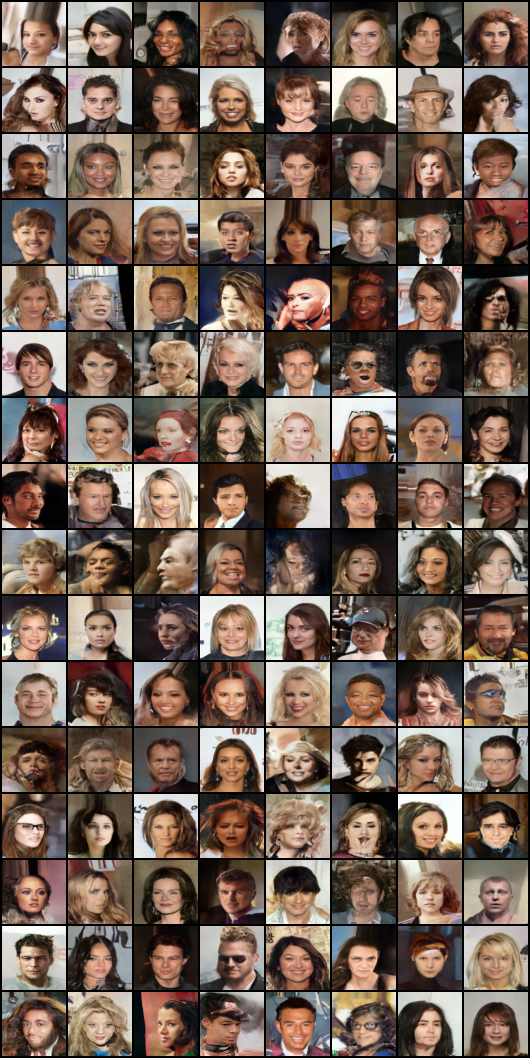}
        \caption*{EqualSNR}
    \end{minipage}
    \caption{Samples of a diffusion model trained on \texttt{CelebA} with a  [left] DDPM and [right] EqualSNR (calibrated) forward process, using $T=200$ steps at inference time.}
    \label{fig:samples_celeba}
\end{figure}

\begin{figure}[ph]
    \centering
    \begin{minipage}{0.44\textwidth}
        \centering
        \includegraphics[width=\textwidth]{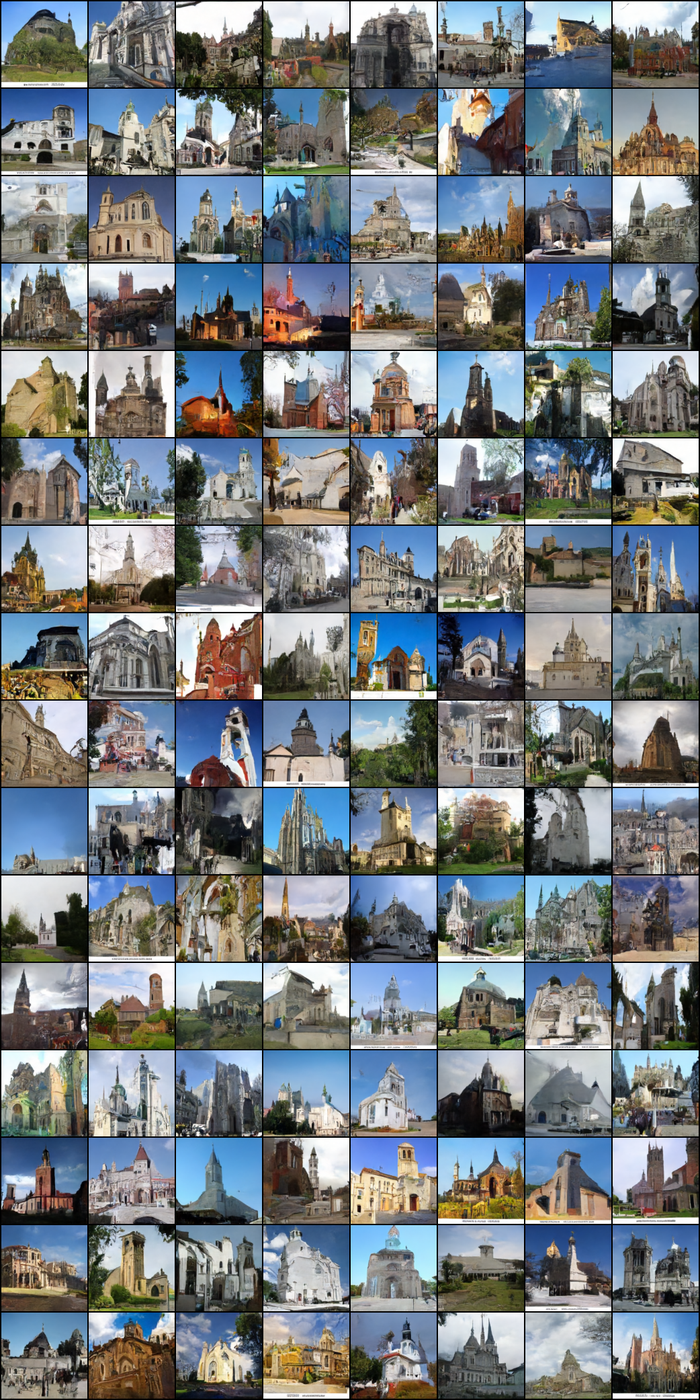}
        \caption*{DDPM}
    \end{minipage}
    \begin{minipage}{0.44\textwidth}
        \centering
        \includegraphics[width=\textwidth]{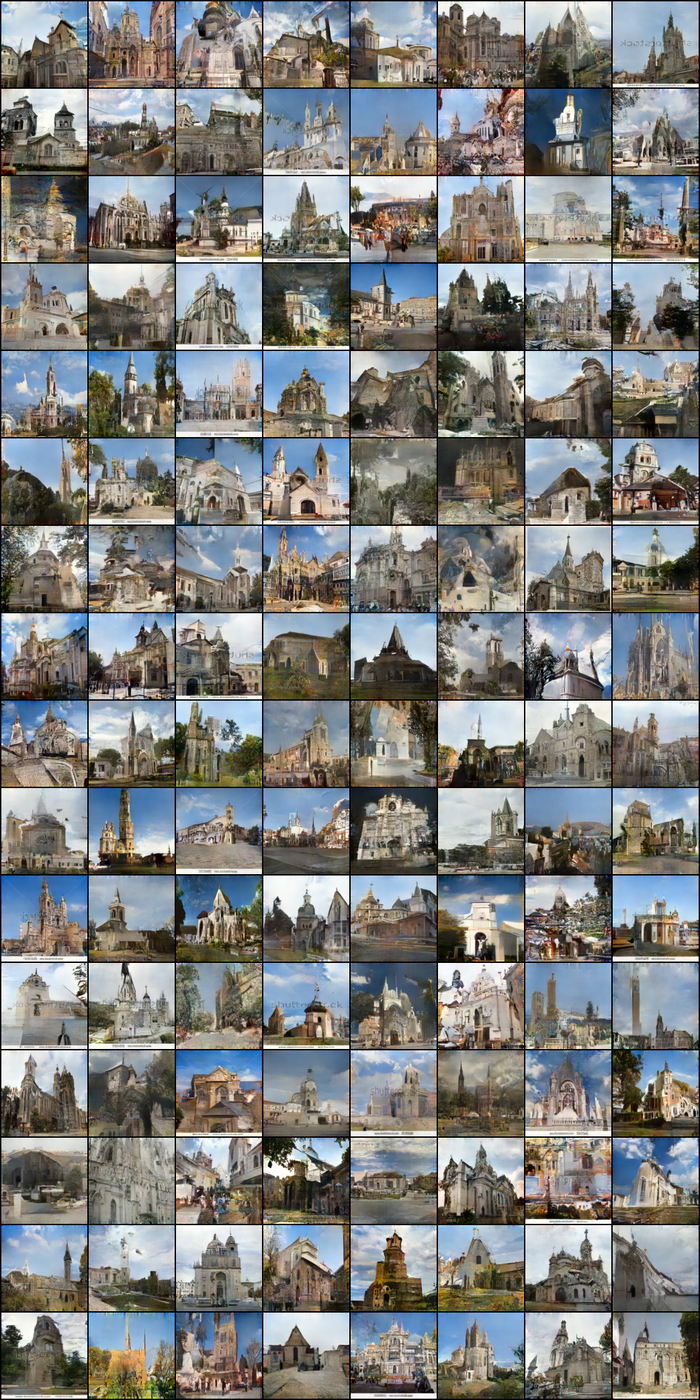}
        \caption*{EqualSNR}
    \end{minipage}
    \caption{Samples of a diffusion model trained on \texttt{LSUN Church} with a  [left] DDPM and [right] EqualSNR (calibrated) forward process, using $T=200$ steps at inference time.}
    \label{fig:samples_lsun}
\end{figure}

\clearpage

\end{document}